\definecolor{Gray}{gray}{0.9}
\definecolor{LightCyan}{rgb}{0.88,1,1}
\newcommand{\cmark}{\ding{51}}%
\newcommand{\xmark}{\ding{55}}%
\newtheorem{theorem}{Theorem}
\begin{document}


\title{Defending Against Poisoning Attacks in Federated Learning with Blockchain}

\author{Nanqing Dong, Zhipeng Wang, Jiahao Sun, Michael~Kampffmeyer, William Knottenbelt, and~Eric Xing,~\IEEEmembership{Fellow,~IEEE}
\thanks{The first two authors contributed equally to this work. This work was supported in part by FLock.io under the FLock Research Grant. (Corresponding author: Nanqing Dong.)}
\thanks{N.~Dong is with the Shanghai Artificial Intelligence Laboratory, Shanghai, 200232, China. (email: dongnanqing@pjlab.org.cn)}
\thanks{Z.~Wang and W.~Knottenbelt are with the Department of Computing, Imperial College London, London, SW7 2AZ, UK. (emails: zhipeng.wang20@imperial.ac.uk, w.knottenbelt@imperial.ac.uk)}
\thanks{J.~Sun is with FLock.io, London, WC2H 9JQ, UK. (email: sun@flock.io)}
\thanks{M.~Kampffmeyer is with the Department of Physics and Technology at UiT The Arctic University of Norway, 9019 Troms{\o}, Norway. (email: michael.c.kampffmeyer@uit.no)}
\thanks{E.~Xing is with the Machine Learning Department, School of Computer Science, Carnegie Mellon University, Pittsburgh, PA 15213, USA; and also with Mohamed bin Zayed University of Artificial Intelligence, Masdar City, Abu Dhabi, UAE (email: epxing@cs.cmu.edu)}
}

\markboth{Journal of IEEE Transactions on Artificial Intelligence, Vol. 00, No. 0, Month 2020}
{First A. Author \MakeLowercase{\textit{et al.}}: Bare Demo of IEEEtai.cls for IEEE Journals of IEEE Transactions on Artificial Intelligence}

\makeatletter
\DeclareRobustCommand\onedot{\futurelet\@let@token\@onedot}
\def\@onedot{\ifx\@let@token.\else.\null\fi\xspace}

\def\eg{\emph{e.g\onedot}} \def\Eg{\emph{E.g}\onedot}
\def\ie{\emph{i.e\onedot}} \def\Ie{\emph{I.e}\onedot}
\def\cf{\emph{c.f}\onedot} \def\Cf{\emph{C.f}\onedot}
\def\etc{\emph{etc}\onedot} \def\vs{\emph{vs}\onedot}
\def\wrt{w.r.t\onedot} \def\dof{d.o.f\onedot}
\def\iid{i.i.d\onedot} \def\etal{\emph{et al}\onedot}
\makeatother

\maketitle

\begin{abstract}
    In the era of deep learning, federated learning (FL) presents a promising approach that allows multi-institutional data owners, or clients, to collaboratively train machine learning models without compromising data privacy. However, most existing FL approaches rely on a centralized server for global model aggregation, leading to a single point of failure. This makes the system vulnerable to malicious attacks when dealing with dishonest clients.
    In this work, we address this problem by proposing a secure and reliable FL system based on blockchain and distributed ledger technology.
    Our system incorporates a peer-to-peer voting mechanism and a reward-and-slash mechanism, which are powered by on-chain smart contracts, to detect and deter malicious behaviors. Both theoretical and empirical analyses are presented to demonstrate the effectiveness of the proposed approach, showing that our framework is robust against malicious client-side behaviors.
\end{abstract}

\begin{IEEEImpStatement}
Federated learning has been a promising solution to utilize multi-site data while preserving users' privacy. 
Despite the success of integrating blockchain with federated learning to decentralize global model aggregation, the protection of this integration from clients with malicious intent in federated scenarios remains unclear. 
This paper presents the first formulation of this problem and the proposed stake-based aggregation mechanism shows robustness in detecting malicious behaviors. The results in this work not only pose a new research direction in federated learning, but can also benefit a wide variety of applications such as finance and healthcare.
\end{IEEEImpStatement}

\begin{IEEEkeywords}
Blockchain, Deep Learning, Federated Learning, Trustworthy Machine Learning
\end{IEEEkeywords}

\section{Introduction}
\IEEEPARstart{N}{owadays}, machine learning (ML), or more specifically, deep learning, has transformed a broad spectrum of industries, ranging from finance to healthcare. In current ML paradigms, training data are first collected and curated, and then ML models are optimized by minimizing certain loss criteria on the training data. A common underlying assumption in the learning environment is that the training data can be instantly accessed or easily distributed across computing nodes without communication constraints, \ie~data are \emph{centralized}. 

However, in a system with multiple \emph{clients} (\ie~data holders), to ensure data centralization, clients have to upload local data to a centralized device (\eg~a central server) to conduct the centralized training described above. Despite the success of centralized training in various deep learning applications~\cite{he2016deep,vaswani2017attention,mnih2015human}, there is growing concern about data privacy and security, especially when the local data held by the clients are private or contain sensitive information. Especially, to ensure data governance, strict data regulations have been established~\cite{gdpr,hipaa}. 

\begin{figure}[t]
  \centering
  \includegraphics[width =\columnwidth]{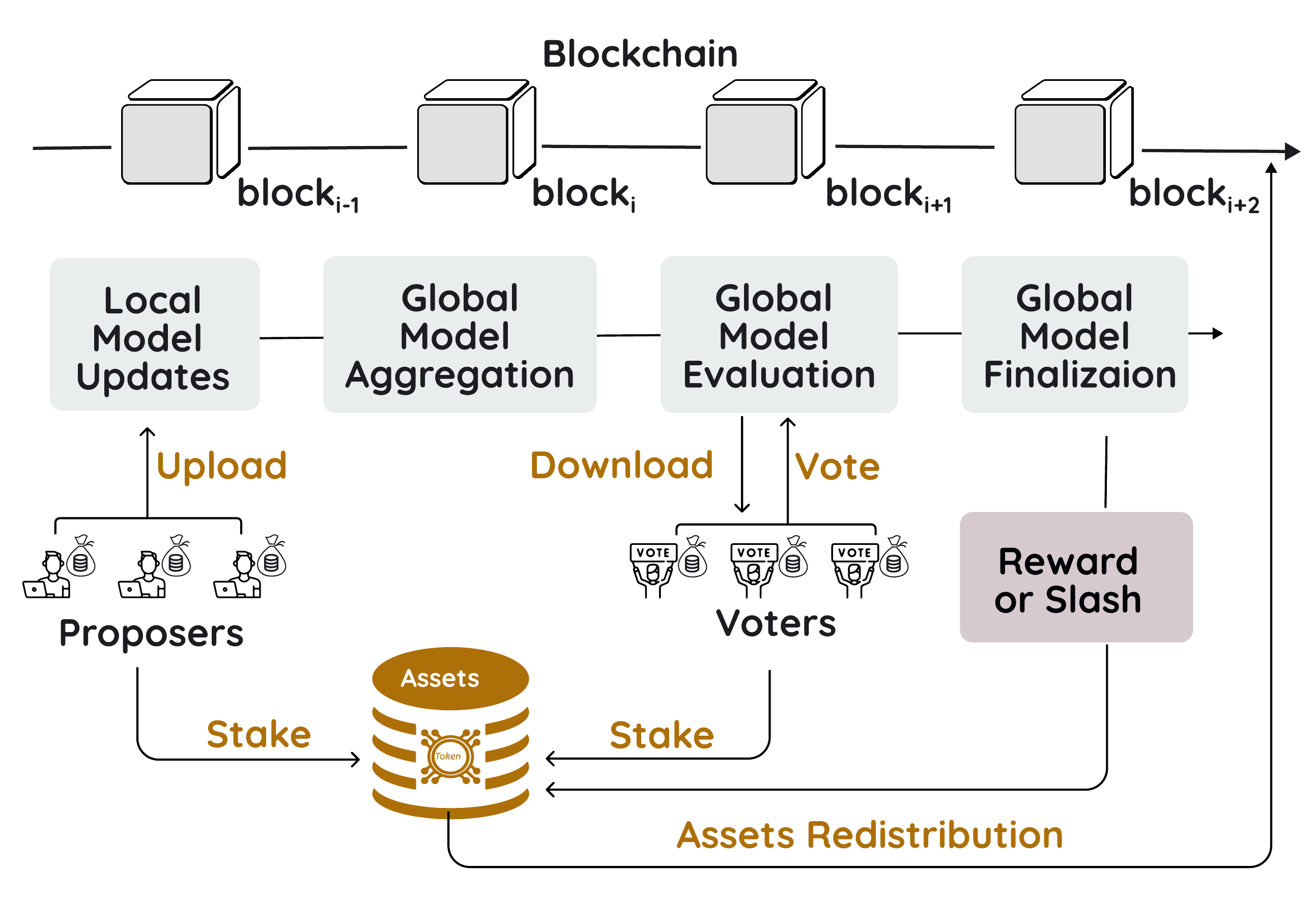}
  \caption{A stake-based aggregation mechanism for FL with blockchain. In each round, the proposers are randomly selected from the participating clients to perform local training and upload local updates to the blockchain. Then, voters download the aggregated local updates from the blockchain, perform local validation, and vote for acceptance or rejection. If the majority of voters vote for accepting the global aggregation, the global model will be updated, and the proposers and the voters who vote for acceptance will be rewarded. Conversely, if the majority of voters vote for rejection, the global model will not be updated, and the proposers and the voters who vote for acceptance will be slashed.}
  \label{fig:overview}
\end{figure}

To address the aforementioned concern, federated learning~(FL) has been proposed~\cite{mcmahan2017communication}. 
In a typical FL system, a central server~\cite{li2014communication} is responsible for aggregating and synchronizing model weights, while a set of clients manipulate multi-site data.
This facilitates data governance, as clients only exchange model weights or gradients with a central server instead of uploading local data to the central server, and has led to FL becoming a standardized solution to utilize multi-site data while preserving privacy.

Though FL perfectly implements \emph{data decentralization}, a trustworthy central server is required in the system. In such a system design, the central server in fact has privileges over clients, as the central server determines the global aggregation and synchronization. If the central server is compromised or manipulated by a malicious party, the clients are vulnerable if the central server intentionally distributes problematic model updates. This can potentially increase the cost of system management and maintenance. Towards avoiding this single point of failure, many efforts have been made to decentralize the central server, and one particularly promising solution is to use a blockchain as decentralized storage~\cite{qu2022blockchain}.

Originally proposed for cryptocurrencies, a blockchain is a distributed ledger that can record the state transition information among multiple parties~\cite{wood2014ethereum,chen2018machine}, without relying on a centralized server. Blockchain technology has gained widespread attention for its potential to revolutionize a variety of industries, such as finance~\cite{wood2014ethereum}, healthcare~\cite{soltanisehat2020technical}, and supply chain management~\cite{queiroz2020blockchain}. By leveraging the decentralized nature of the blockchain, FL can benefit from increased security, privacy, and efficiency, as well as reduced reliance on centralized servers~\cite{li2021blockchain}. Concretely, in FL with blockchain, each client participating in the learning process uploads their local model updates to the blockchain, where they are stored in \emph{blocks}, the metadata of a blockchain system. These blocks are then used to aggregate the local model updates into a global model, which can be downloaded by the clients. The use of blockchain \emph{smart contracts}~\cite{wood2014ethereum}, which are computer programs triggered by blockchain events, ensures that the global aggregation process is performed automatically and transparently, without the need for human intervention or centralized control.

Though integrating blockchain with existing FL systems can partially solve the threat to the central server, it cannot guarantee the quality of uploaded model updates from the clients. That is to say, blockchain-enabled FL systems are still vulnerable to client-side malicious attacks~\cite{bagdasaryan2020how}. In this work, we define malicious behaviors as actions that intentionally decrease the learning performance (\eg~accuracy and convergence) of the global model via poisoning attacks (such as data poisoning~\cite{tolpegin2020data} or model poisoning~\cite{bagdasaryan2020how}). Instead of hacking the central server, the attackers can sabotage the FL systems by manipulating the clients. This work focuses on defending against client-side poisoning attacks. One solution is to combine blockchain-enabled FL with cryptographic protocols, such as Fully Homomorphic Encryption (FHE)~\cite{miao2022privacy} and Secure Multi-Party Computation (SMPC)~\cite{kalapaaking2023blockchain}, to mitigate malicious behaviors from the client side. However, the adoption of these intricate cryptographic protocols introduces significant computational overhead for FL participants, thus impairing the system performance. Besides, the malicious clients can still attack the system without breaching the protocols. It is challenging to address malicious behaviors without substantially compromising the efficiency of a blockchain-based FL system.

We propose a generic framework that can integrate an FL system with a blockchain system and can defend against poisoning attacks without adopting complex cryptographic protocols. The proposed defense mechanism is motivated by \emph{proof-of-stake} (PoS)~\cite{bano2019sok}, a \emph{consensus mechanism} in blockchain, and \emph{The Resistance}~\cite{resistence}, a role-playing board game. PoS has an incentive mechanism that encourages honest behaviors by \emph{rewarding} it and punishes dishonest behaviors via \emph{slashing}. \emph{The Resistance}, on the other hand, has two mismatched competing parties, where the party with a larger size is denoted as the resistance force and the other party is denoted as the spies. In \emph{The Resistance}, there is a voting mechanism where, in each round, each player conducts independent reasoning and votes for a player, and the player with the highest votes will be deemed as a ``spy'' and kicked out of the game. The goal of the resistance force is to vote out all the spies while the spies aim to impersonate the resistance force and survive until the end. Based on these two concepts, this work proposes a novel majority-voting mechanism for global aggregation where each participating client independently validates the quality of aggregated local updates and votes for acceptance of the global update. The aggregation mechanism is stake-based where participating clients stake assets\footnote{In practice, the staked assets can be linked with cryptocurrency or real currency to increase the financial cost of malicious attacks.} or \emph{tokens} (a quantitative measurement of the asset, which can be used to indicate the trustworthiness of the client in our system) for their own actions. There are two types of actions, proposing (uploading local updates) and voting. If the majority vote is to accept the global aggregation, a proposer will be refunded with its staked tokens and a voter who votes for acceptance will not only be refunded but also be rewarded with the staked tokens from the voters who vote for rejection, and vice versa. The overall procedure of the stake-based aggregation mechanism is illustrated in Fig.~\ref{fig:overview}. To the best of our knowledge, this is the first work that integrates the majority voting and incentive mechanisms in the FL and blockchain literature. 

We evaluate the proposed framework on a practical financial problem, namely loan default prediction. We simulate the FL and blockchain environment for the Lending Club Kaggle challenge dataset and ChestX-ray14 dataset~\cite{wang2017chestx}, to conduct experiments in a controllable setting and to provide insights into the problem of interest. We empirically show that an FL system can maintain robust performance under malicious attacks by introducing the proposed stake-based aggregation mechanism.

The contributions of this work are summarized as follows:
\begin{IEEEenumerate}
    \item We formulate the problem of decentralized federated learning with blockchain in the presence of poisoning attacks. 
    \item For the first time, we introduce a novel stake-based aggregation mechanism designed to fortify federated learning systems against poisoning attacks. In contrast to prior solutions, our mechanism boasts the distinct advantage of seamless integration into any blockchain enabled with smart contracts, all without necessitating alterations to the foundational consensus structure of the underlying blockchain. This approach not only enhances security but also simplifies accessibility, rendering it a more user-friendly option for federated learning participants.
    \item We evaluate the robustness of the proposed framework in a simulated environment and provide initial empirical insights into the problem of interest. The findings show evidence that stake-based FL is an under-explored research problem with potential advantages compared with existing FL paradigms in terms of defending against poisoning attacks.
\end{IEEEenumerate}

The rest of the paper is organized as follows. Sec.~\ref{sec:related} reviews the related work on FL and blockchain computing. Sec.~\ref{sec:prob} formulates the problem of interest, defines the key concepts, and lists the necessary assumptions. Sec.~\ref{sec:method} presents the method and theoretical result. Sec.~\ref{sec:exp} and Sec.~\ref{sec:add} provide the experimental details on two different setups. Sec.~\ref{sec:con} concludes this work.

\begin{figure}[t]
  \centering
  \includegraphics[width =\columnwidth]
  {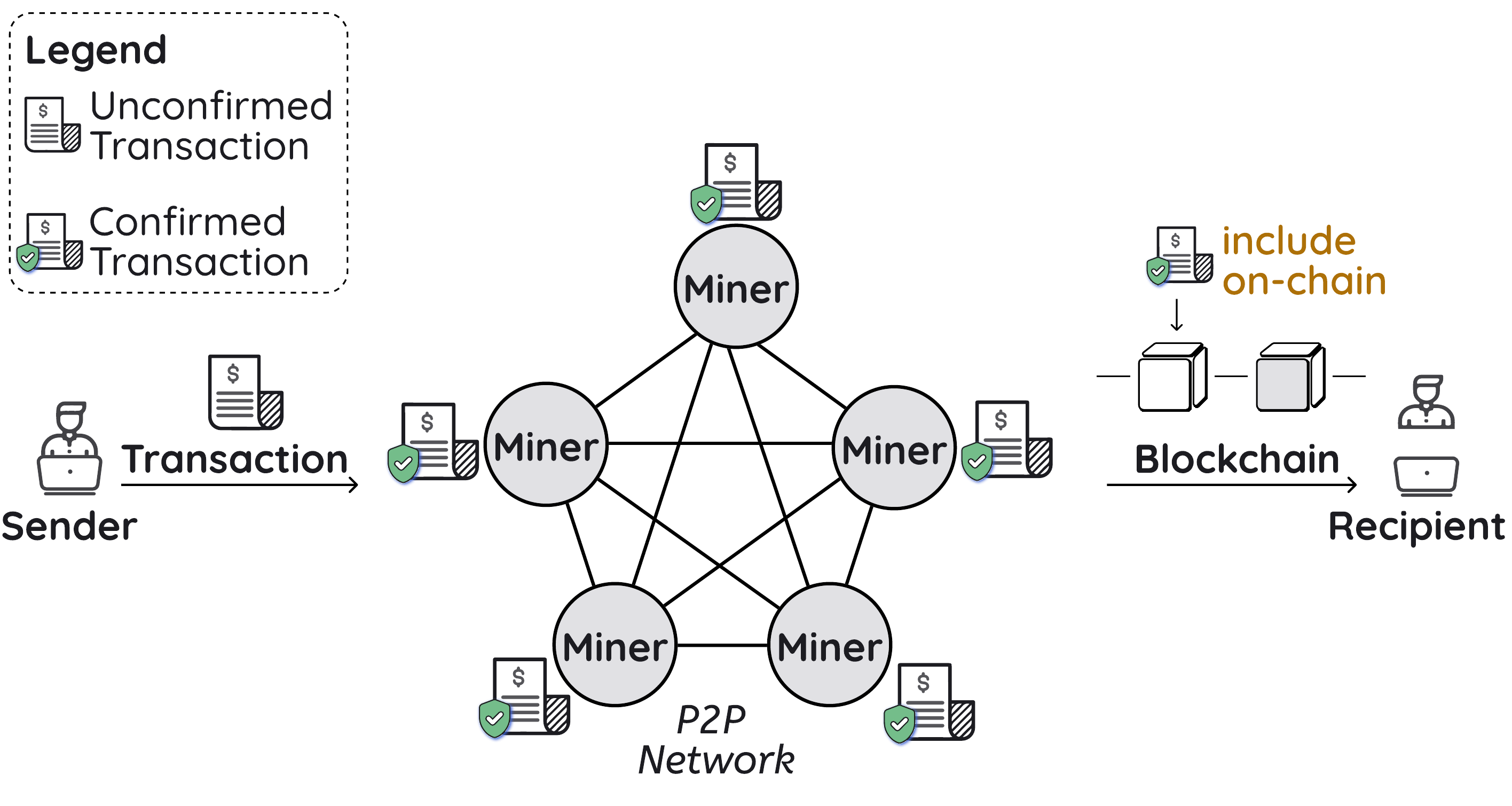}
  \caption{Blockchain workflow overview. The sender broadcasts the issued transaction to the P2P network, which will be confirmed by the miners. The confirmed transaction will be stored on a public blockchain and can be read by the recipient. Blockchain miners typically adopt a consensus mechanism to achieve an agreement on the state of the blockchain.}
  \label{fig:blockchain-overview}
\end{figure}

\begin{table*}[t]
    \centering
    \caption{Comparison of existing blockchain-based FL solutions.}
    \begin{tabular}{c|ccccc}
    \toprule
         Framework & Technique & Incentive Mechanism & Cryptocurrency Incentive & Blockchain Agnostic & Attack Mitigation\\ \hline
         \cite{miao2022privacy} &FHE &\xmark &\xmark &- &\cmark\\
         \cite{kalapaaking2023blockchain} &SMPC &\xmark &\xmark &-&\cmark\\
         \cite{yan2023privacy} &DP &\xmark &\xmark &-&\cmark\\
         \cite{cui2021security} &DP-GAN &\xmark &\xmark &-&\cmark\\
         \cite{toyoda2019mechanism} &incentive mechanism &\cmark &\cmark &- &\xmark\\
         \cite{zhang2021incentive} &reputation + reverse auction  &\cmark &\cmark &- &\xmark\\
         \cite{ma2022federated}& blockchain + reward only &\cmark & \cmark &\xmark &\cmark\\
          \cite{li2021blockchain}& committee consensus &\cmark & \cmark &\xmark &\cmark\\
          \cite{qu2022fl}& committee consensus &\cmark & \cmark &\xmark &\cmark\\
         \midrule
         Ours & majority vote + reward \& slash & \cmark & \cmark &\cmark &\cmark\\
    \bottomrule
    \end{tabular}
    \label{tab:existing-work}
\end{table*}

\section{Related Work}
\label{sec:related}
In this section, we review the recent progress on blockchain-based FL, and highlight the difference between the proposed method and existing studies.

\subsection{Blockchain}
Blockchains refer to distributed ledgers that operate on a global peer-to-peer (P2P) network, as exemplified by popular cryptocurrencies such as Bitcoin~\cite{bitcoin} and Ethereum~\cite{wood2014ethereum}. Users can freely join or leave the blockchain system, without a central authority in place to ensure common agreement on the distributed ledgers. Instead, users rely on consensus protocols~\cite{bano2019sok, garay2020sok}, such as proof-of-work (PoW) or PoS, to achieve agreement in a distributed setting. 

As shown in Fig.~\ref{fig:blockchain-overview}, a blockchain transaction typically involves a sender who transfers digital assets, such as cryptocurrencies, to a recipient. 
The sender authorizes the transaction with a digital signature combining transaction details and their private key. 
The transaction is then broadcasted over a P2P network to \emph{miners}, who are participants in the network responsible for verifying and adding new blocks of transactions to the blockchain. Miners validate and confirm the transaction using consensus protocols, to ensure that the transaction is legitimate and not a duplicate or fraudulent transaction. Once confirmed, the transaction is added to a block, which is then linked to the previous block using cryptographically hash functions~\cite{bonneau2015sok}, forming a chain of blocks (\ie, blockchain). The block is then propagated to all the participants in the network, creating a decentralized, immutable record of the transaction. The combination of cryptography and consensus protocols enhances the security, transparency, and decentralization of transactions, underscoring blockchain's potential across various applications~\cite{gatteschi2018blockchain,soltanisehat2020technical,queiroz2020blockchain,bao2020survey,reyna2018blockchain}.

Another key feature of blockchain technology is the use of smart contracts~\cite{wood2014ethereum}, which are quasi-Turing-complete programs that can be executed within a virtual machine.  When a transaction is initiated, a smart contract is typically used to encode the terms and conditions of the transaction, such as the amount, currency, and time of transfer. The smart contract is then stored on the blockchain network and executed automatically when the predefined conditions are met. 

\subsection{Federated Learning with Blockchain}
Traditional FL faces challenges~\cite{kairouz2021advances, wan2023data, zhang2024decentralized}, such as privacy and security concerns (\eg, poisoning attacks), unreliable communication, and difficulty in reaching a consensus among the parties. Blockchain, on the other hand, provides a decentralized, secure, and transparent platform for data storage and sharing. This makes the use of blockchain for FL a promising direction to potentially address privacy and security concerns by allowing parties to keep their data private while still contributing to the training process. Additionally, blockchain can provide a secure communication channel for FL participants and ensure the integrity of the FL process.

Current blockchain-based FL designs~\cite{zhang2024decentralized, zhu2023blockchain, qu2022blockchain, kim2019blockchained, weng2019deepchain} have been broadly used in diverse fields~\cite{nguyen2021federated,ali2021integration,li2022blockchain}. For example, Ma~\etal~\cite{ma2022federated} propose a blockchain-assisted decentralized FL (BLADE-FL) framework,
to prevent malicious clients from poisoning the learning process. Li~\etal~\cite{li2021blockchain} analyze the impact of lazy clients on the learning performance of BLADE-FL and propose optimization for minimizing the loss function. Cui~\etal~\cite{cui2021security} propose a blockchain-based
decentralized and asynchronous FL framework for anomaly detection in IoTs by using a model named DP-GAN. Qu~\etal~\cite{qu2022fl} introduce a committee-based blockchain consensus algorithm
for decentralized FL to prevent the single point of failure and poisoning attacks in FL.

Despite the potential benefits of combining FL with blockchain, several challenges remain. For instance, FL systems are still vulnerable to client-side malicious attacks~\cite{bagdasaryan2020how} and lack incentive-compatible mechanisms to motivate FL participants to behave honestly during the training process. For instance, Miao~\etal~\cite{miao2022privacy} leverage cosine similarity and blockchain to counteract poisoning attacks and penalize malicious clients. Their designs also rely on cryptographic techniques such as FHE. Kalapaaking~\etal~\cite{kalapaaking2023blockchain} adopt SMPC to enhance the security of blockchain-based FL which can mitigate poisoning attacks for healthcare systems.
Yan~\etal~\cite{yan2023privacy} integrate differential privacy (DP) techniques with blockchain to mitigate the issues of a single point of failure or untrusted aggregation caused by a malicious central server in privacy-preserving federated learning. However, using FHE, SMPC, or DP, instead of incorporating incentive mechanisms, may introduce an additional computation burden for clients or the server. 

Several incentive mechanisms~\cite{toyoda2019mechanism,zhang2021incentive} have recently been proposed to encourage participants and enhance model accuracy in blockchain-based FL. However, it remains unclear how to effectively utilize the blockchain infrastructure and leverage its inherent incentive mechanism (\ie~cryptocurrencies) to incentivize trustworthy FL behaviors and penalize malicious clients. Furthermore, to thwart potential malicious activities, such as poisoning attacks, existing blockchain-based FL solutions~\cite{qu2022blockchain, zhu2023blockchain, ma2022federated,li2021blockchain,qu2022fl} have pointed out that participants can engage in both the FL training process and the block validation in PoS-based blockchains or mining activities in PoW-based blockchains. These design choices not only raise the entry barrier for regular users wishing to partake in blockchain-based FL systems but also add complexity to the fundamental consensus mechanism. 

In this work, we introduce a PoS-based reward-and-slash mechanism for the FL system. We compare our solution with existing work in Table~\ref{tab:existing-work}. Our solution can be seamlessly integrated into any smart-contract-enabled blockchain without requiring modifications to the underlying consensus design (\ie, our design is blockchain agnostic). This approach facilitates the participation of any FL users, making it more accessible and user-friendly.

\section{Problem Formulation}
\label{sec:prob}
This section introduces the problem of interest, the definition of the malicious behaviors considered, and the underlying assumptions in this work. The main definitions and notations adopted in this work are summarized in Table~\ref{tab:def}.

\subsection{Setup}
\label{sec:prob:setup}
There are $K > 1$ clients in a federated system. Let $\mathcal{K} = \{1, 2, \cdots, K\}$ denote the set of all clients. Let $\mathcal{D}_k$ denote the local data stored in client $k$, we have $\mathcal{D}_k \cap \mathcal{D}_l = \emptyset$ for $k \neq l$ and $k, l \in \mathcal{K}$. Each local dataset $\mathcal{D}_k$ can be randomly split into a training set and a test set, which are both private to client $k$. In addition to $K$ clients, a blockchain plays the role of a parameter server~\cite{li2014communication} for global aggregation. Let $f_{\theta}$ be the model of interest. In the parameter server, the parameter set $\theta_0^0$ is randomly initialized at round $0$ and $K$ clients download $\theta_0^0$ from the blockchain as $K$ local copies $\{\theta_k^0\}_{k=1}^K$ for full synchronization. During the federated optimization phase, a set of $\mathcal{K}_p^t$ clients is randomly selected for round $t$. For each $k \in \mathcal{K}_p^t$, the client $k$ updates $\theta_k^{t-1}$ by training on the training set of $\mathcal{D}_k$ independently for a number of local epochs. Then, the blockchain aggregates updated $\{\theta_k^{t}\}_{k \in \mathcal{K}}$ collected from all the $K$ clients to update $\theta_0^{t}$. The $K$ clients then synchronize with the parameter server, \ie~$\theta_k^{t} \leftarrow \theta_0^{t}$. To facilitate data governance, as required in among others the medical domain~\cite{hipaa,gdpr}, we assume that the patient's data (either raw data or encoded data) in a client can not be uploaded to the blockchain or other clients, \ie~only parameters $\{\theta_k\}_{k=0}^K$ and \emph{metadata} (\eg~the statistics of data)~\cite{dong2022federated,dong2022learning} can be exchanged between the blockchain and the clients. It is worth mentioning that this work focuses on the interactions between FL and blockchain, where blockchain computing (or \emph{mining}, in a more fashionable sense) and the application of additional privacy-preserving techniques~\cite{abadi2016deep} are considered orthogonal research directions and thus beyond the scope of this work.

\subsection{Malicious Behaviors}
\label{sec:prob:mal}
The definition of malicious behavior in this work is an action that intentionally decreases the global model performance. There are two types of actions for each client that interact with the federated system, \ie~a client can propose (\ie~be a \emph{proposer}) and vote (\ie~be a \emph{voter}). Proposing is to upload local model or gradient updates to the parameter server, while voting is a peer-review process to validate the ``virtually'' aggregated model updates. The technical details of the two actions are described in Sec.~\ref{sec:method}. There are thus two corresponding malicious behaviors. The first malicious behavior is to propose harmful local model updates and the second one is to vote dishonestly. More specifically, in the second case, a client votes for approval when it is aware that the proposed model updates are poisoned and votes for rejection when there is no evidence that indicates that the proposed model updates are poisoned. It is worth mentioning that the clients themselves might not intentionally attack the FL system as they can be compromised by attackers. For simplicity, we define the clients that have malicious behaviors as \emph{malicious clients} in this work, denoted as $\mathcal{K}_m$. We use $\eta$ to denote the ratio of malicious clients among all clients, \ie~$\eta = \frac{|\mathcal{K}_m|}{\mathcal{K}}$, where $|\cdot|$ is the cardinality of a set.

\subsection{Assumptions}
\label{sec:prob:assume}
There are six important assumptions in this work. 

\begin{IEEEitemize}
    \item \textbf{A1}: The goal of malicious behaviors is to decrease the global model performance. This is also reflected in Sec.~\ref{sec:prob:mal}. Under these assumptions, behaviors that are harmful to the system but do not influence the global model performance are beyond the scope of discussion in this work. An example is eavesdropping, \ie~cloning the model specifications.
    \item \textbf{A2}: All clients are rational. This means that both honest and malicious clients expect to maximize their gain or minimize their loss while achieving their goals.
    \item \textbf{A3}: Following previous studies on blockchain~\cite{li2021blockchain}, we assume that $\eta$ is strictly smaller than $50\%$. This means there are always more honest clients than malicious clients in a federated system.
    \item \textbf{A4}: There is no capacity constraint on the hardware, including computing, communication, and storage, allowing us to solely focus on the algorithmic side of the problem.
    \item \textbf{A5}: The underlying blockchain of the FL system of interest is running securely with a consensus protocol that ensures the validity and integrity of transactions and blocks. While the security of the blockchain is crucial for the overall security of the FL system, addressing the malicious miners falls outside the scope of this study. 
\end{IEEEitemize}

\begin{table}[t]
    \centering
    \caption{Summary of main definitions and notations.}
    \renewcommand\arraystretch{1.25}
    \resizebox{\columnwidth}{!}{
    \begin{tabular}{r||l}
    \toprule
         Notation & Description\\ \hline
         blockchain-based FL& FL framework with a blockchain architecture\\
         FedAVG & Federated Averaging~\cite{mcmahan2017communication}, a FL algorithm\\
         non-IID & Non-independently and identically distributed\\
         $\mathcal{K}$ & Set of participating clients at round $t$\\
         $K_v^t$ & The selected set of voters\\
         $\mathcal{K}_p^t$ & Set of proposers at round $t$\\
         $a^t$ & Majority voting decision at round $t$\\
        $M_k$ & Asset of client $k$\\
        $\gamma_p$ & Staked tokens for proposing\\
        ${pool}_p$ & Pool for storing proposers' stake\\
    \bottomrule
    \end{tabular}
    }
    \label{tab:def}
\end{table}

\section{Method}
\label{sec:method}
In this section, we first introduce the basics of federated aggregation in Sec.~\ref{sec:method:agg}, and describe the local validation and majority voting in Sec.~\ref{sec:method:val} and Sec.~\ref{sec:method:vote}. We then propose a novel incentive mechanism in Sec.~\ref{sec:method:asset}. A theoretical analysis on malicious voting is presented in Sec.~\ref{sec:method:analysis}. We then describe the whole training pipeline in Sec.~\ref{sec:method:train}. Finally, the analysis of computational cost is provided in Sec.~\ref{sec:method:cost}.

\begin{figure*}[t]
  \centering
  \includegraphics[width =\textwidth]
  {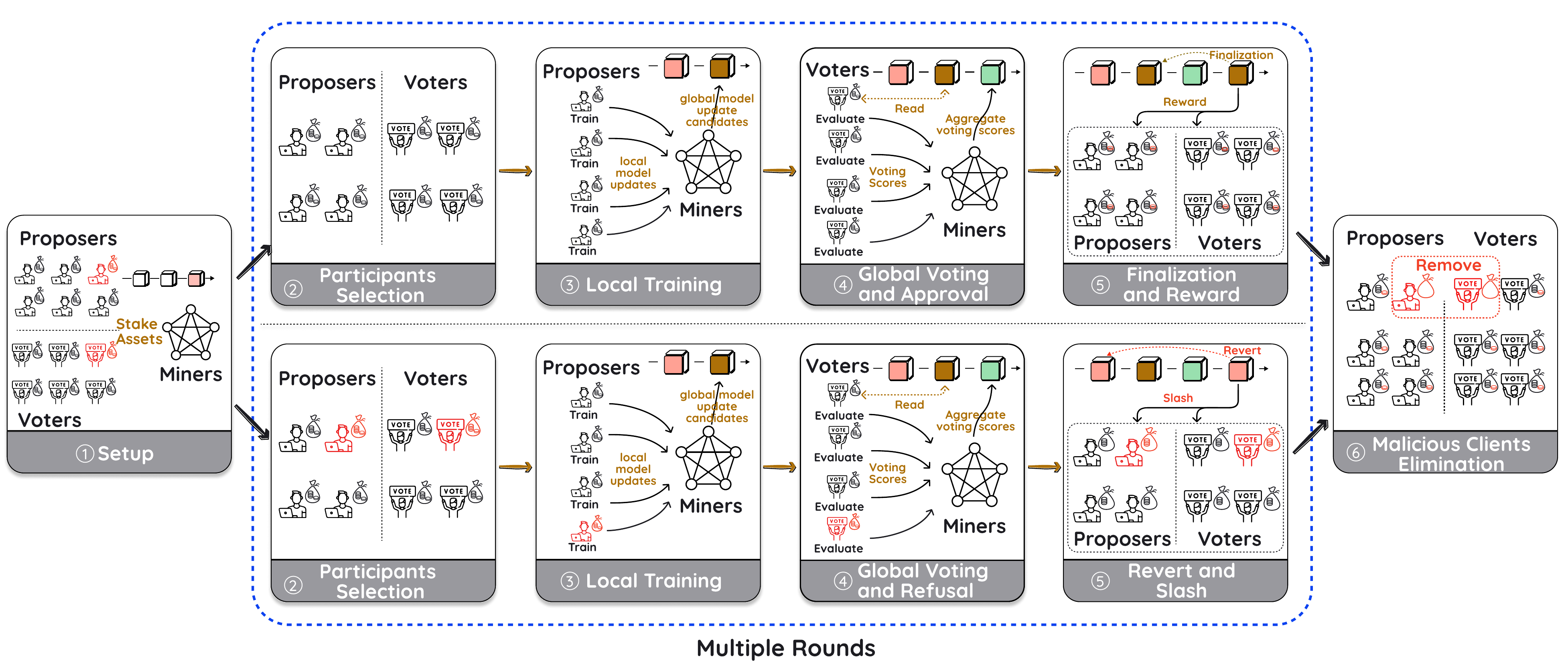}
  \caption{A round-based training process. In the initial state (indexed as \ding{172}), both honest (black) and malicious ({\color{red}red}) clients exist in an FL system. In the final state (indexed as \ding{177}), all malicious clients are expected to be removed from the system. To reach the final state from the initial state, multiple rounds of training are required. Here are two possible scenarios, the proposed aggregation is either approved (the upper branch) or denied (the lower branch) by the voters. In each round (within the dotted {\color{blue}blue} line), a subset of clients are randomly selected as proposers, and another subset of clients are randomly selected as voters. The proposers and voters interact with the blockchain following the order of {\color{orange}orange} arrows (from \ding{173} to \ding{176}).}
  \label{fig:framework}
\end{figure*}

\subsection{Federated Aggregation}
\label{sec:method:agg}
In this work, we illustrate the proposed framework in the context of the seminal FL method, FedAVG~\cite{mcmahan2017communication}. At the end of round $t$, the local models $\{\theta_k^{t}\}_{k=1}^K$ are uploaded and aggregated as a weighted average:
\begin{equation}
    \theta_0^t = \sum_{k=1}^K a_k \theta_k^{t},
    \label{eq:FedAvg}
\end{equation}
where $a_k = \frac{n_k}{N}$. The metadata $n_k = |\mathcal{D}_k|$ is the number of local training examples stored in client $k$ and $N = \sum_{k=1}^K n_k$ is the total number of training examples in the $K$ clients.

\subsection{Local Validation}
\label{sec:method:val}
In contrast to standard FL algorithms, the aggregated global model is not recorded in a block directly. Instead, $\tilde{\theta}_0^t$, a copy of $\theta_0^t$ is downloaded by a randomly selected set of clients, denoted as \emph{voters}, $\mathcal{K}_v^t$. A voter $k$ runs a local inference with $\tilde{\theta}_0^t$ on its local test set and outputs a local validation score. The local validation score $s^{t}_k$ is a scalar, which can be linked with common metrics of ML tasks\footnote{For example, common evaluation metrics include accuracy for classification, mean Intersection over Union (mIOU) for semantic segmentation, and mean average precision (mAP) for object detection.}. If $s^{t}_k$ is not lower than a threshold, the voter votes for accepting this aggregated model; otherwise, the voter votes against it. The threshold can be based on a validation score $s^{t-1}_k$ acquired in the previous round. In the training of ML tasks, the scores can be volatile due to the characteristics of the tasks. Thus, a hyperparameter $\epsilon \in (0, 1)$ is introduced to control the tolerance of performance decrease in a single round. Mathematically, the $k$-th voter has the following score.
\begin{equation}
  v_k^t =
    \begin{cases}
      \phantom{+}1, & s^{t}_k \ge (1 - \epsilon)s^{t-1}_k \\
      -1, & s^{t}_k < (1 - \epsilon)s^{t-1}_k
    \end{cases} 
\end{equation}
It is worth mentioning that the likelihood of the attackers consistently manipulating the scores by fooling all the randomly selected voters (\eg~via adversarial attacks~\cite{goodfellow2014explaining}) diminishes quickly towards zero as the number of epochs increases. According to \textbf{A4}, the majority of voters are honest. It is thus difficult to attack (either via data poisoning or model poisoning) as the validation set of each client is private.

\subsection{Majority Voting}
\label{sec:method:vote}
 The majority voting process for whether to apply the global aggregation operation at round $t$ can be described below. Here, we use a binary variable $a^t$ to denote the decision.
\begin{equation}
  a^t =
    \begin{cases}
      \phantom{+}1, & \sum_{k \in \mathcal{K}_v} v_k > 0\\
      -1, & \sum_{k \in \mathcal{K}_v} v_k \le 0
    \end{cases}   
\end{equation}
If $a^t = 1$, the global aggregation will be finalized and recorded in the block; otherwise, the global aggregation will be discarded.

\subsection{Asset Redistribution}
\label{sec:method:asset}
As there are two independent actions, there are two parallel reward-and-slash designs for proposing and voting. For both actions, the randomly selected proposers and voters are required to stake a fixed sum of tokens before they act. If some of these actors fail to stake (they do not have enough tokens left), they lose their access to the blockchain and are removed from the FL system permanently. Proposers will be rewarded with tokens accumulated in an independent pool (if there are any tokens left in the pool) if the global aggregation is approved and lose their stakes if the global aggregation is rejected. The reward-and-slash design for the proposers is illustrated in Algorithm~\ref{algo:1}. For the voters, the majority party will not only take back their stakes but also be rewarded with the staked tokens lost by the minority party. The reward-and-slash design for the voters is illustrated in Algorithm~\ref{algo:2}. In the following section, Sec.~\ref{sec:method:analysis}, we demonstrate that under the proposed design and assumptions in Sec.~\ref{sec:prob:assume}, malicious voters have no incentive to make dishonest votes. Note, Sec.~\ref{sec:method:asset} highlights the key difference between the proposed voting mechanism and traditional majority voting as the voting is directly linked with the incentive mechanism. 

\begin{algorithm}[th]
    \centering
    \begin{algorithmic}[1]
        \Statex $a^t$: Majority voting decision at round $t$
        \Statex $\mathcal{K}$: Set of participating clients at round $t$
        \Statex $\mathcal{K}_p^t$: Set of proposers at round $t$
        \Statex $M_k$: Asset of client $k$
        \Statex $\gamma_p$: Staked tokens for proposing
        \Statex ${pool}_p$: Pool for storing proposers' stake
        \If {$a^t == -1$} 
            \For{$k \in \mathcal{K}_p^t$}
                \If {$M_k \ge \gamma_p$}
                    \State $M_k \leftarrow M_k - \gamma_p$
                    \State ${pool}_p \leftarrow {pool}_p + \gamma_p$
                \Else
                    \State ${pool}_p \leftarrow {pool}_p + M_k $
                    \State $M_k \leftarrow 0$
                    \State $\mathcal{K}^t \leftarrow \mathcal{K}^t \setminus \{k\}$ 
                \EndIf
            \EndFor
        \Else
            \If {${pool}_p > 0$}
                \For{$k \in \mathcal{K}_p^t$}
                    \State $M_k \leftarrow M_k + \frac{{pool}_p}{|\mathcal{K}_p^t|} $
                \EndFor
                \State $ {pool}_p \leftarrow 0$
            \EndIf 
        \EndIf
    \end{algorithmic}
    \caption{Reward-and-slash design for a set of randomly selected proposers.}
    \label{algo:1}
\end{algorithm}

\begin{algorithm}[th]
    \centering
    \begin{algorithmic}[1]
        \Statex $a^t$: Majority voting decision at round $t$
        \Statex $\mathcal{K}$: Set of participating clients at round $t$
        \Statex $\mathcal{K}_v^t$: Set of voters at round $t$
        \Statex $\mathcal{K}_m^t$: Set of voters at round $t$ with $v_k^t == a^t$
        \Statex $M_k$: Asset of client $k$
        \Statex $\gamma_v$: Staked tokens for voting
        \Statex ${pool}_v$: Pool for storing voters' stake
        \For{$k \in \mathcal{K}_v^t \setminus \mathcal{K}_m^t$}
            \If {$M_k \ge \gamma_v$}
                \State $M_k \leftarrow M_k - \gamma_v$
                \State ${pool}_v \leftarrow {pool}_v + \gamma_p$
            \Else
                \State ${pool}_v \leftarrow {pool}_v + M_k $
                \State $M_k \leftarrow 0$
                \State $\mathcal{K}^t \leftarrow \mathcal{K}^t \setminus \{k\}$ 
            \EndIf
        \EndFor
        \For{$k \in \mathcal{K}_m^t$}
            \State $M_k \leftarrow M_k + \frac{{pool}_v}{|\mathcal{K}_m^t|} $
        \EndFor
        \State ${pool}_v \leftarrow  0$
    \end{algorithmic}
    \caption{Reward-and-slash design for a set of randomly selected voters.}
    \label{algo:2}
\end{algorithm}

\subsection{Theoretical Analysis on Malicious Votes}
\label{sec:method:analysis}
In this section, we theoretically show that malicious voters in the proposed framework have no incentive to make dishonest votes.

\begin{theorem}[Honest Voting Hypothesis]
\label{thm:vote}
When all clients are rational, a malicious client should not make a malicious vote.
\end{theorem}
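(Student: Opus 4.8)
The plan is to compare, for a single rational malicious voter, the payoff of casting an honest vote against that of casting a malicious one, and to show that honest voting weakly dominates. Following \textbf{A2}, I would model each voter's payoff as having two parts: a monetary part governed by the reward-and-slash rule of Algorithm~\ref{algo:2}, and a ``goal'' part measuring progress toward degrading the global model (\textbf{A1}). The strategy is to argue that a malicious vote is strictly worse on the monetary part while contributing nothing on the goal part, so that no rational client would cast it.

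First I would fix a round $t$ and a voter $k \in \mathcal{K}_v^t$, and partition $\mathcal{K}_v^t$ into honest and malicious voters. Using the local-validation mechanism of Sec~\ref{sec:method:val} --- in particular that each voter's test set is private, so the poisoned/clean status of $\tilde{\theta}_0^t$ cannot be disguised from the honest voters --- I would argue that all honest voters cast the same ``correct'' vote $\delta \in \{+1,-1\}$ reflecting the true quality of the aggregated update. By \textbf{A3}, honest voters are a strict majority of $\mathcal{K}_v^t$, so $\operatorname{sign}\!\left(\sum_{j \in \mathcal{K}_v^t} v_j\right) = \delta$ irrespective of how the malicious voters behave; hence $a^t = \delta$ is decided entirely by the honest bloc. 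A malicious vote, by the definition in Sec~\ref{sec:prob:mal}, is by construction the opposite of $\delta$, i.e.\ $v_k = -a^t$, which places voter $k$ in the minority.

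Next I would read off the consequences from Algorithm~\ref{algo:2}: a minority voter forfeits its stake $\gamma_v$, whereas a majority voter recovers $\gamma_v$ and additionally receives a nonnegative share of ${pool}_v$. Thus voting honestly yields payoff at least $0$ while voting maliciously yields payoff $-\gamma_v < 0$ on the monetary part. For the goal part, because $a^t=\delta$ is pinned down by the honest majority, flipping $v_k$ does not change the recorded global update; the malicious vote therefore does not advance the objective of \textbf{A1}. Combining the two parts, a malicious vote incurs a strict loss with no compensating gain, so by \textbf{A2} the rational malicious client does not cast it, which is the claim.

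The main obstacle is establishing the decisiveness of the honest bloc in the second step, and this is where the argument is genuinely delicate. If the honest voters did not vote unanimously for $\delta$, a malicious voter could become \emph{pivotal}: by flipping the outcome it would land in the (new) majority, escape slashing, and simultaneously achieve the \textbf{A1} goal --- precisely the case the theorem must rule out. I therefore expect the crux to be justifying (i) that honest validation is reliable enough that honest voters agree on $\delta$ (the hard-to-fool property argued in Sec~\ref{sec:method:val}), and (ii) that the randomly drawn voter set inherits the global honest-majority property of \textbf{A3}, at least in expectation or with high probability. Making the ``decisive honest majority'' hypothesis precise, and hence confining the analysis to the non-pivotal regime, is the key step on which the whole result rests.
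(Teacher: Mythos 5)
Your route is genuinely different from the paper's, and the step you yourself flag as the crux is exactly where it breaks down. The paper does not argue non-pivotality at all: it lets $r$ denote the realized fraction of malicious voters in the sampled set $\mathcal{K}_v$, observes that a malicious voter loses $\gamma_v$ when $r<0.5$ but gains $\frac{1-r}{r}\gamma_v$ when $r>0.5$, and then computes the expected return by integrating over $r\in[0,1]$,
\begin{equation*}
\mathcal{R} \;=\; \int_{0}^{0.5}(-\gamma_v)\,dr \;+\; \int_{0.5}^{1}\frac{1-r}{r}\,\gamma_v\,dr \;=\; -(\ln(0.5)+1)\,\gamma_v \;<\; 0,
\end{equation*}
concluding from \textbf{A2} that a rational client facing a negative expected return will not cast malicious votes. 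In other words, the paper explicitly admits the regime in which the malicious bloc wins the vote and profits, and argues only that the losses in the complementary regime outweigh those gains in expectation.

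Your argument instead requires the honest bloc to be decisive in every round, so that a malicious vote is always a minority vote and is strictly dominated. That premise is false as stated: \textbf{A3} bounds the malicious fraction of the \emph{population}, not of the randomly drawn voter set $\mathcal{K}_v^t$, and with positive probability the sample has a malicious majority (especially for $\eta$ near $0.5$ and small $n_v$). On that event your dominance claim inverts --- the malicious voter lands in the majority, recovers its stake, collects a share of ${pool}_v$, and flips $a^t$, thereby also advancing the \textbf{A1} objective --- so honest voting does not weakly dominate pointwise, and the theorem can only be recovered by an expectation or high-probability argument over the sample composition. You correctly identify this as the missing step, but you do not close it; the paper closes it (with its own idealization, namely an implicit uniform prior on $r$ that arguably sits in tension with \textbf{A3}) via the integral above. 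A secondary gap is your assumption that all honest voters cast an identical vote $\delta$: the scores $s_k^t$ are computed on distinct private test sets, so honest voters need not be unanimous, which further erodes the ``decisive honest bloc'' premise your dominance argument rests on.
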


\begin{proof}
Let $\mathcal{K}_v$ denote a randomly selected set of voters and $n_v = |\mathcal{K}_v|$. For client $k \in \mathcal{K}_v$, let $\gamma_v > 0$ denote the staked tokens for voting, \ie~client $k$ must stake $\gamma_v$ to participate in the voting, otherwise, it will be removed from the system. 

Let us consider a multi-agent scenario, where malicious clients can collude. No matter how the malicious clients cooperate, there are two types of malicious clients. The first type behaves maliciously to achieve the goal of sabotaging the FL training, \ie, lowering the global performance. The second type acts honestly to hide themselves and survive to be able to implement the complex policy to sabotage the FL training at the a later stage. If the malicious clients belong to the second type, they are factually ``honest'' ones.

Let $r$ be the ratio of malicious clients in $\mathcal{K}_v$, there are $r \cdot n_v$ malicious clients in $\mathcal{K}_v$ and $(1 - r) \cdot n_v$ honest clients. If $r \cdot n_v < (1 - r) \cdot n_v$, \ie~$r < 0.5$, each malicious client will lose $\gamma_v$; if $r \cdot n_v > (1 - r) \cdot n_v$, \ie~$r > 0.5$, each malicious client will gain $\frac{(1 - r) \cdot n_v \cdot \gamma_v }{r \cdot n_v} = \frac{1- r}{r} \gamma_v$. The expected return $\mathcal{R}$ of a malicious client will be 
\begin{equation}
\begin{split}
    \mathcal{R} &= \int_{0}^{0.5} -\gamma_v dr + \int_{0.5}^{1} \frac{1- r}{r} \gamma_v dr \\
                &= - 0.5 \gamma_v + ((\ln(1) - 1) - (\ln(0.5) - 0.5)) \gamma_v \\
                &=  -( \ln(0.5) + 1) \gamma_v < 0 
\end{split}
\label{eq:thm}
\end{equation}

Under \textbf{A2}, each client is rational. As $\mathcal{R} < 0$, in the long run, a malicious client will lose all tokens and be removed from the system. So, a given client has no reason to make a dishonest vote resulting in honest votes by all clients.
\end{proof}

Theorem~\ref{thm:vote} will further be empirically validated in Sec.~\ref{sec:exp:thm}. 

If \textbf{A2} holds, we are certain that the malicious voters will reach a consensus internally before they act to win the majority vote. Intuitively, all malicious voters can be considered as a group together. In this case, this "group" will behave exactly as the single malicious client in Theorem~\ref{thm:vote} based on the same reasoning. The proof is omitted.

\subsection{Training}
\label{sec:method:train}
Each round consists of the following steps: proposer selection, local training, global aggregation, local validation, majority voting, token redistribution, and block creation (recording \emph{state}\footnote{For example, the state can record the global model and tokens of each client.} information). The above steps are repeated in multiple rounds until certain stopping criteria are fulfilled. The complete training process is depicted in Fig.~\ref{fig:framework}. The stopping criteria could be a fixed amount of training epochs, which is commonly adopted in ML.

\subsection{Computation Complexity and Cost}
\label{sec:method:cost}
For proposers, the primary computational cost is driven by the local training algorithm, mirroring the structure of traditional FL. For voters, their computation cost stems from evaluating the aggregated global model. In addition to mining blocks, miners engage in further computations by consolidating global model updates, akin to the responsibilities of a centralized aggregator in traditional FL. The overall computational complexity is contingent upon the underlying training network backbone.

The communication and storage costs are outlined as follows. We assume the model size is $M$. During an epoch, the communication cost for a client (\ie, a proposer or a voter) is $O(M)$. And the storage cost on the blockchain is $O(K\cdot M)$, where $K$ is the number of clients.

\section{Experiments}
\label{sec:exp}

\begin{figure*}[htbp]
\centering
\hfill
\subfigure[$\eta$ = 0.1.]{
\includegraphics[width=0.483\columnwidth]{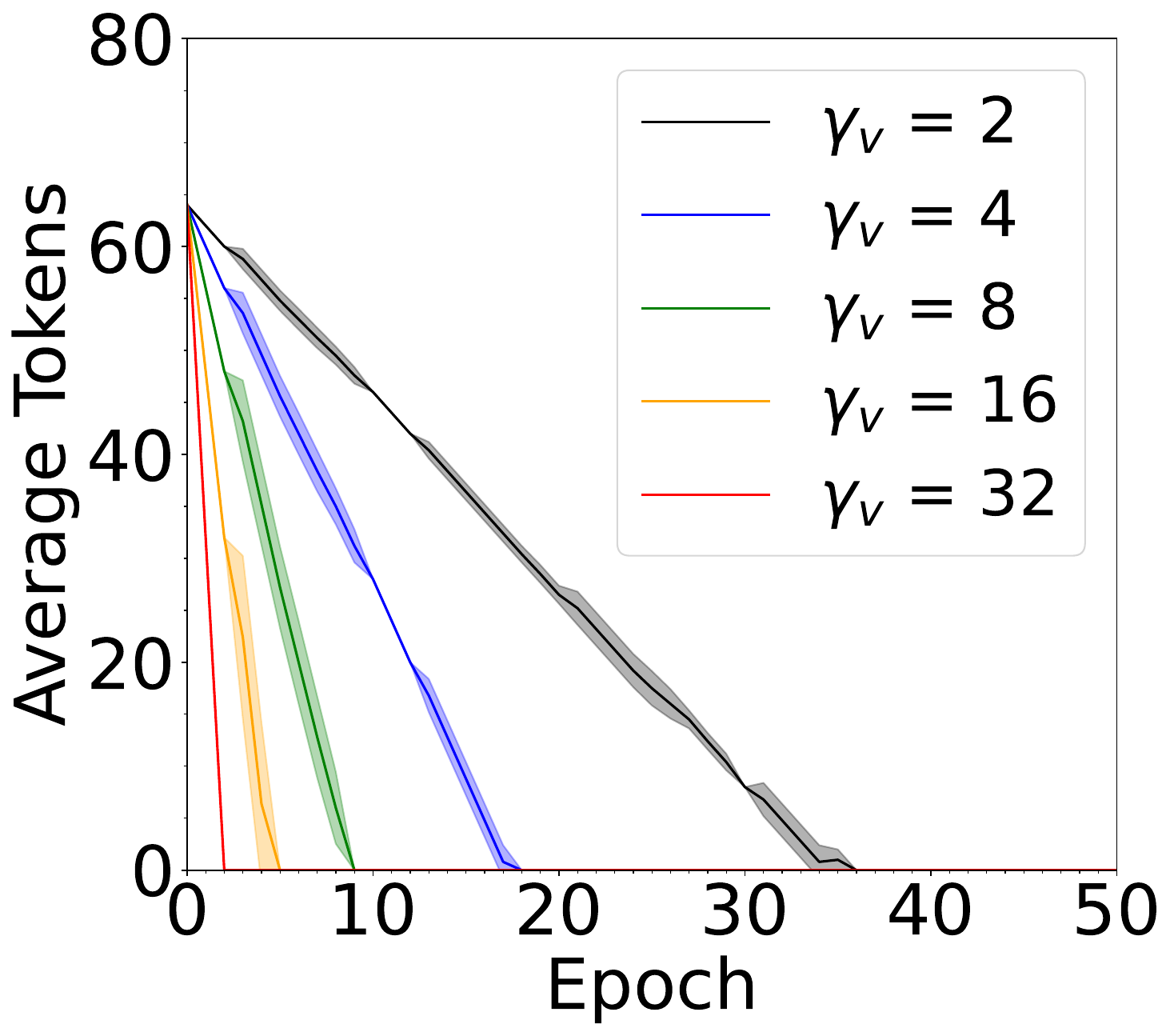}
\label{fig:mal_voters_tokens_with_rate_0.1}
}%
\hfill
\subfigure[$\eta$ = 0.2.]{
\includegraphics[width=0.483\columnwidth]{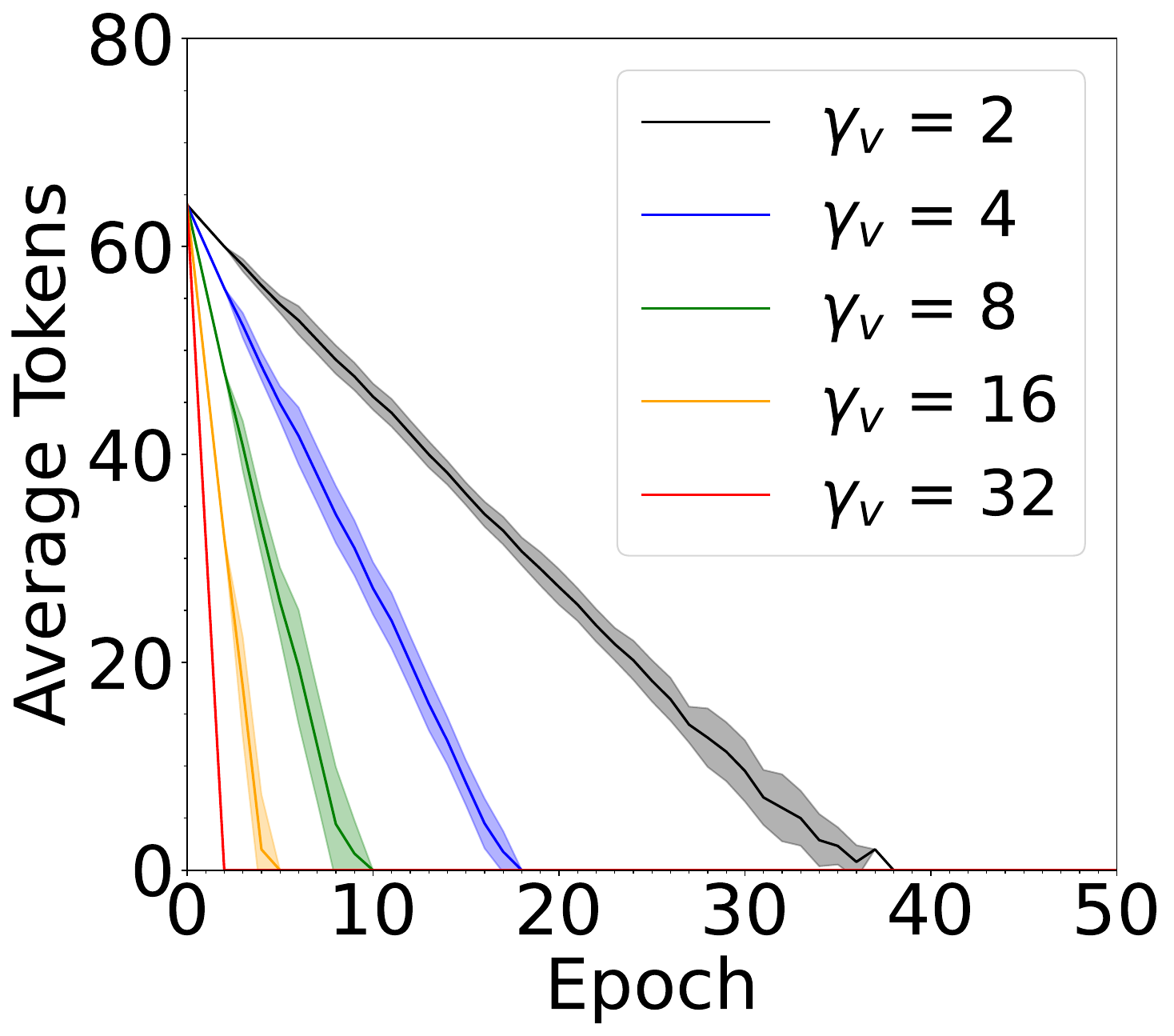}
\label{fig:mal_voters_tokens_with_rate_0.2}
}%
\hfill
\subfigure[$\eta$ = 0.3.]{
\includegraphics[width=0.483\columnwidth]{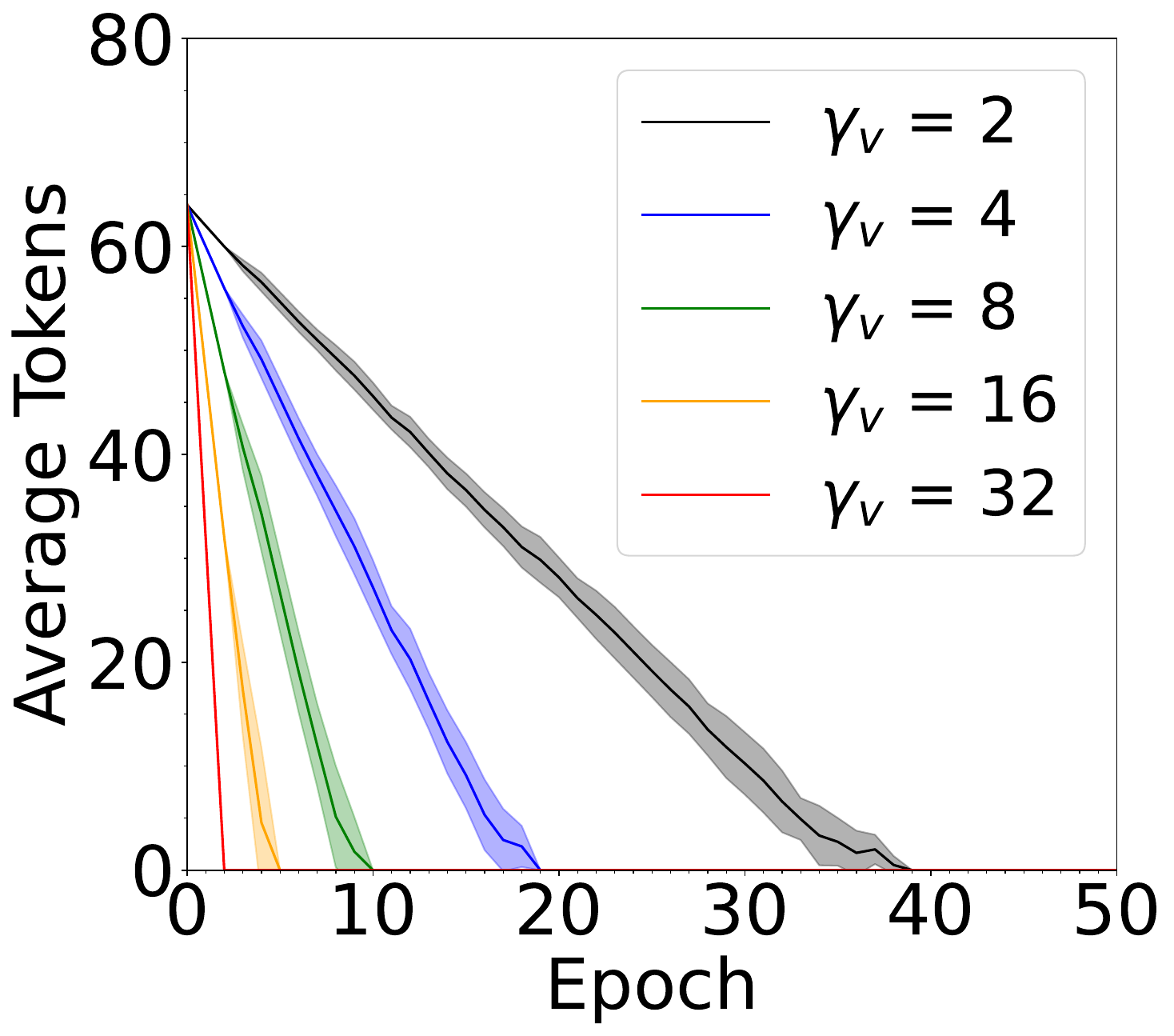}
\label{fig:mal_voters_tokens_with_rate_0.3}
}%
\hfill
\subfigure[$\eta$ = 0.4.]{
\includegraphics[width=0.483\columnwidth]{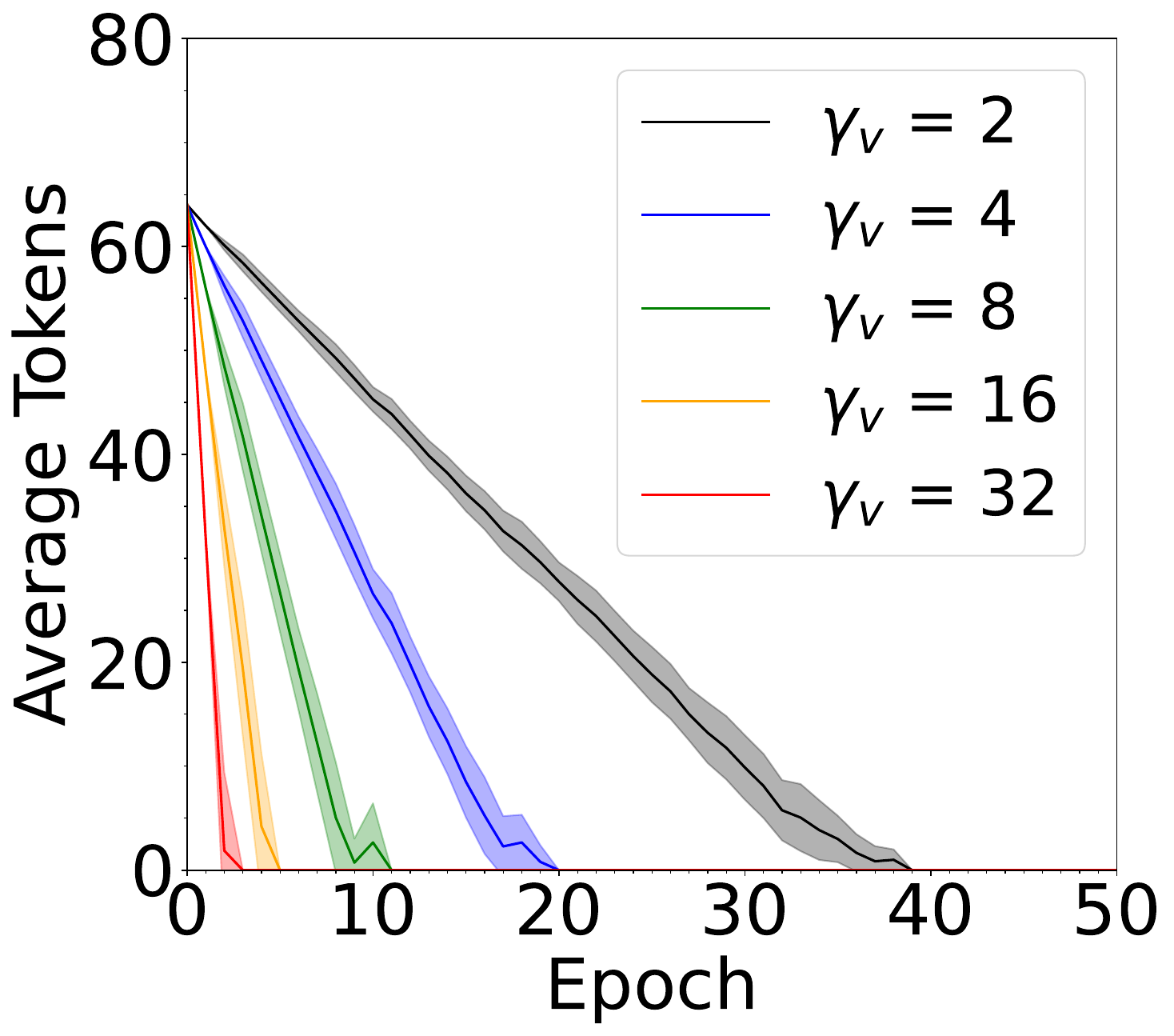}
\label{fig:mal_voters_tokens_with_rate_0.4}
}%
\hfill
\centering
\caption{Token distribution results for malicious voters when all proposers are honest. The malicious voters' tokens decrease quickly as the number of epochs increases and a large $\gamma_p$ leads to a high decreasing rate. This empirically validates our proof of Theorem~\ref{thm:vote}. The solid line denotes the mean over 5 runs with different random seeds and the shaded region denotes 1 standard deviation around the mean.}

\label{fig:mal_voters_token_distribution_sv=4}
\end{figure*}

\begin{table*}[t]
\centering
\caption{Performance comparison under different values of the ratio of malicious clients ($\eta$). The reported numbers of the performance are mean and standard deviation under 5 random seeds.}
\label{tab:training_res}
\begin{tabular}{l|c|c|c|c}\hline
    Model & $\eta = 0.1$ & $\eta = 0.2$ & $\eta = 0.3$ & $\eta = 0.4$ \\\hline
    FedAVG w/ mal &$0.963\pm0.017$ & $0.946\pm0.034$ &$0.801\pm0.222$  &$0.709\pm0.266$ \\\hline
    FedAVG w/ block (Ours) &$0.965\pm0.008$ &$0.969\pm0.003$ &$0.952\pm0.020$ &$0.955\pm0.021$  \\\hline\hline
    FedAVG w/o mal &$0.975\pm0.004$ &$0.975\pm0.004$ &$0.975\pm0.004$ &$0.975\pm0.004$  \\\hline
    \emph{Oracle} &$0.971\pm0.007$ &$0.971\pm0.007$ &$0.971\pm0.007$ & $0.971\pm0.007$\\\hline
\end{tabular}

\end{table*}

\begin{figure*}[t]
\centering
\hfill
\subfigure[$\eta$ = 0.1.]{
\includegraphics[width=0.483\columnwidth]{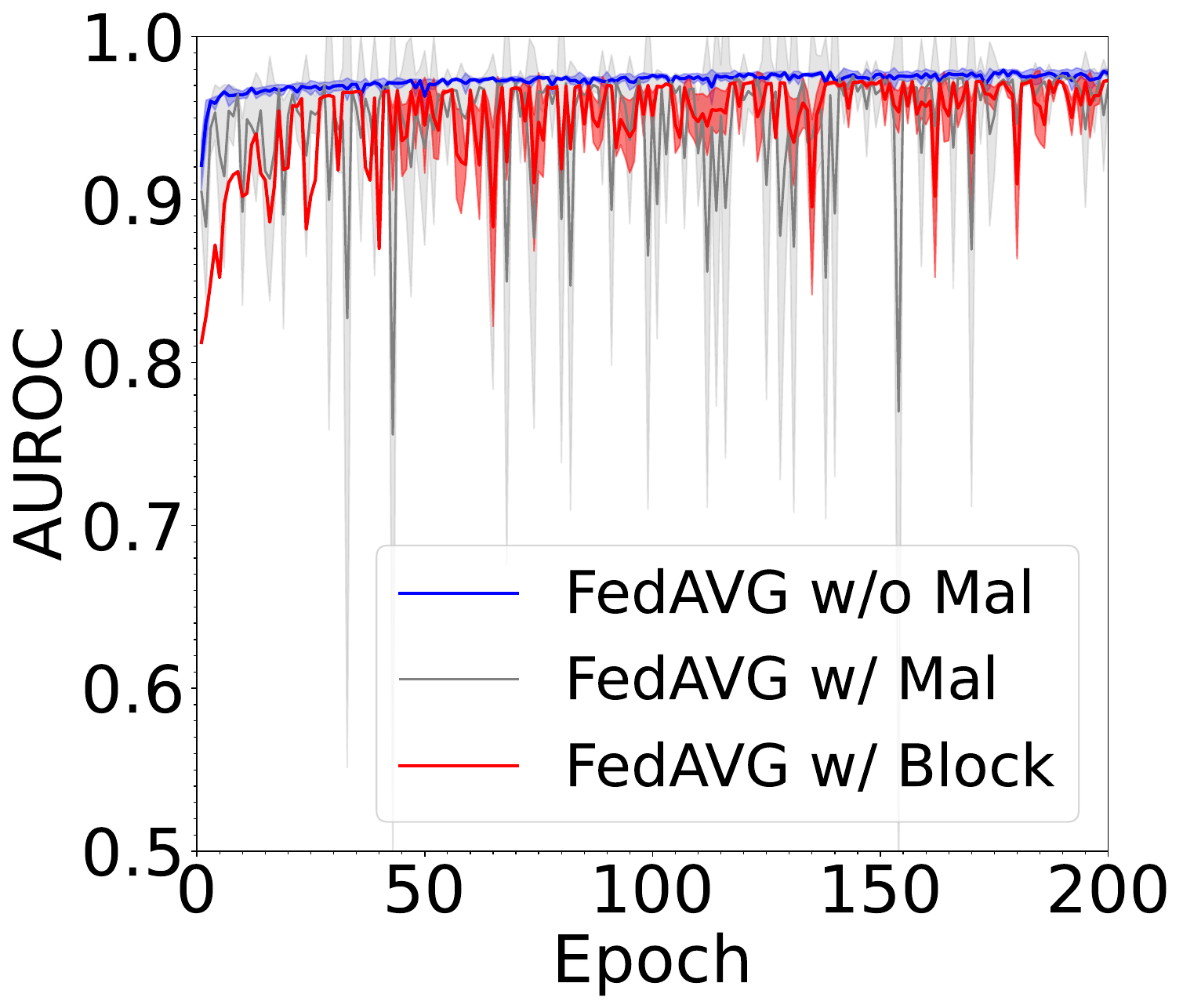}
\label{fig:training_res_with_rate_0.1}
}%
\hfill
\subfigure[$\eta$ = 0.2.]{
\includegraphics[width=0.483\columnwidth]{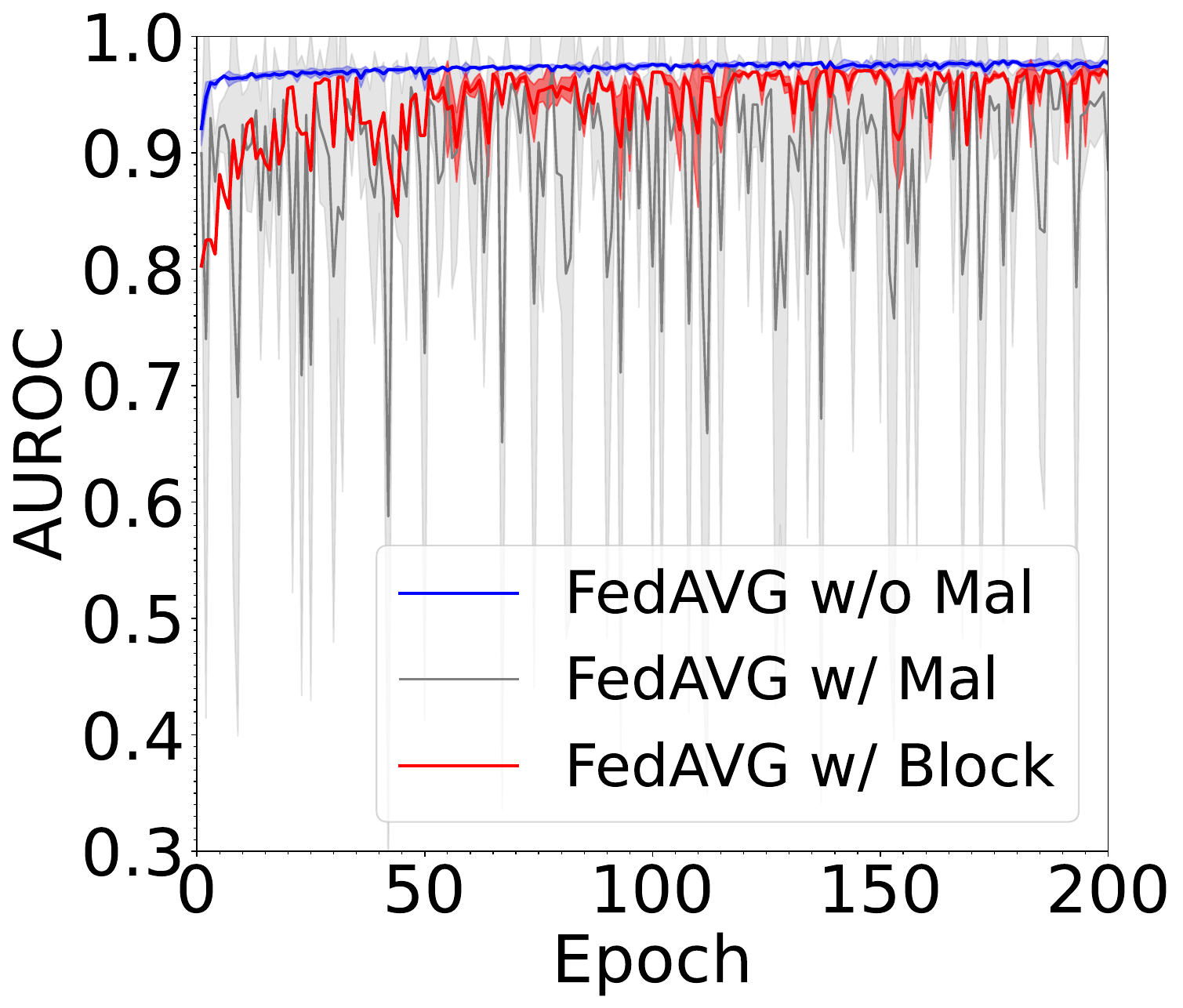}
\label{fig:training_res_with_rate_0.2}
}%
\hfill
\subfigure[$\eta$ = 0.3.]{
\includegraphics[width=0.483\columnwidth]{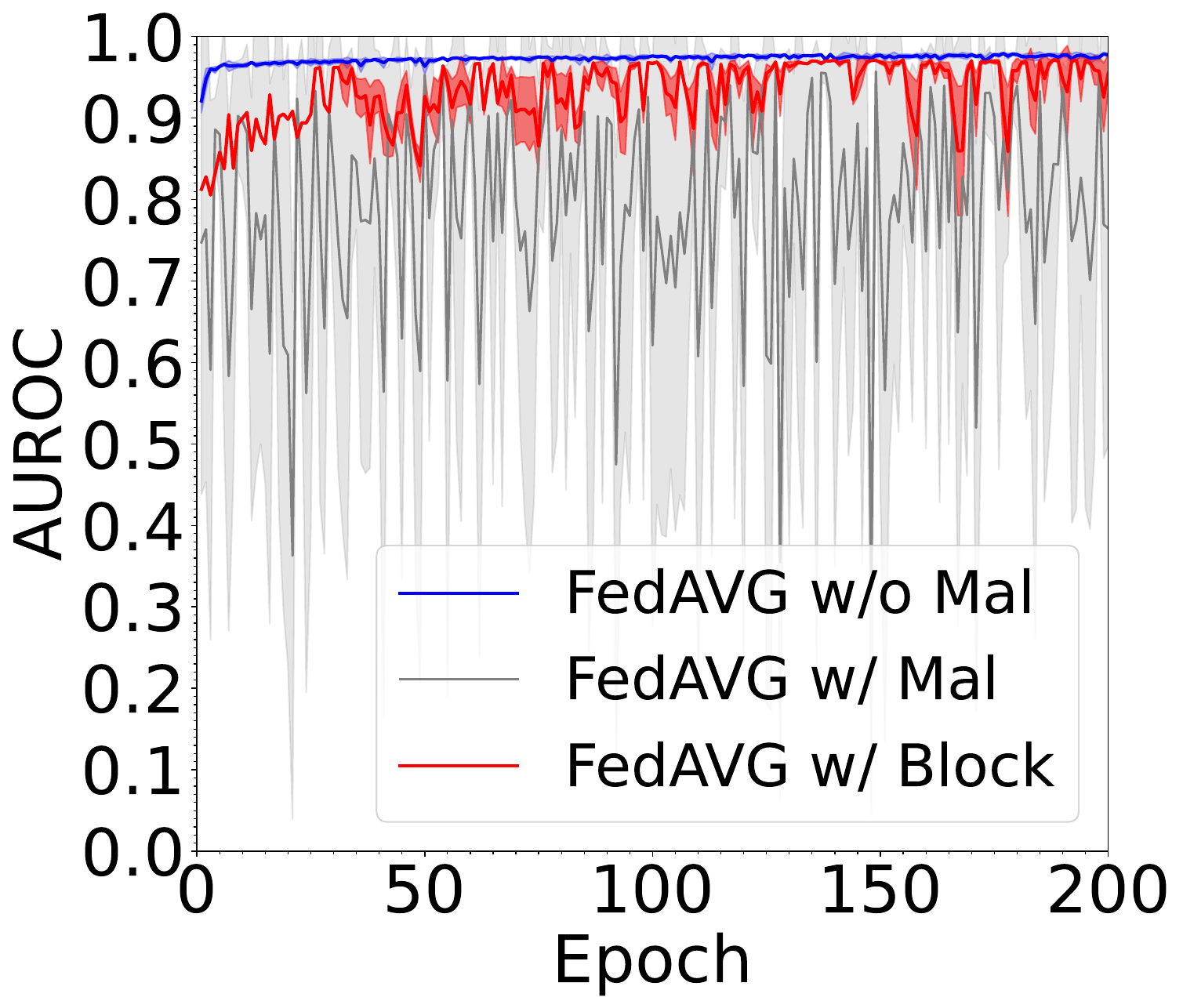}
\label{fig:training_res_with_rate_0.3}
}%
\hfill
\subfigure[$\eta$ = 0.4.]{
\includegraphics[width=0.483\columnwidth]{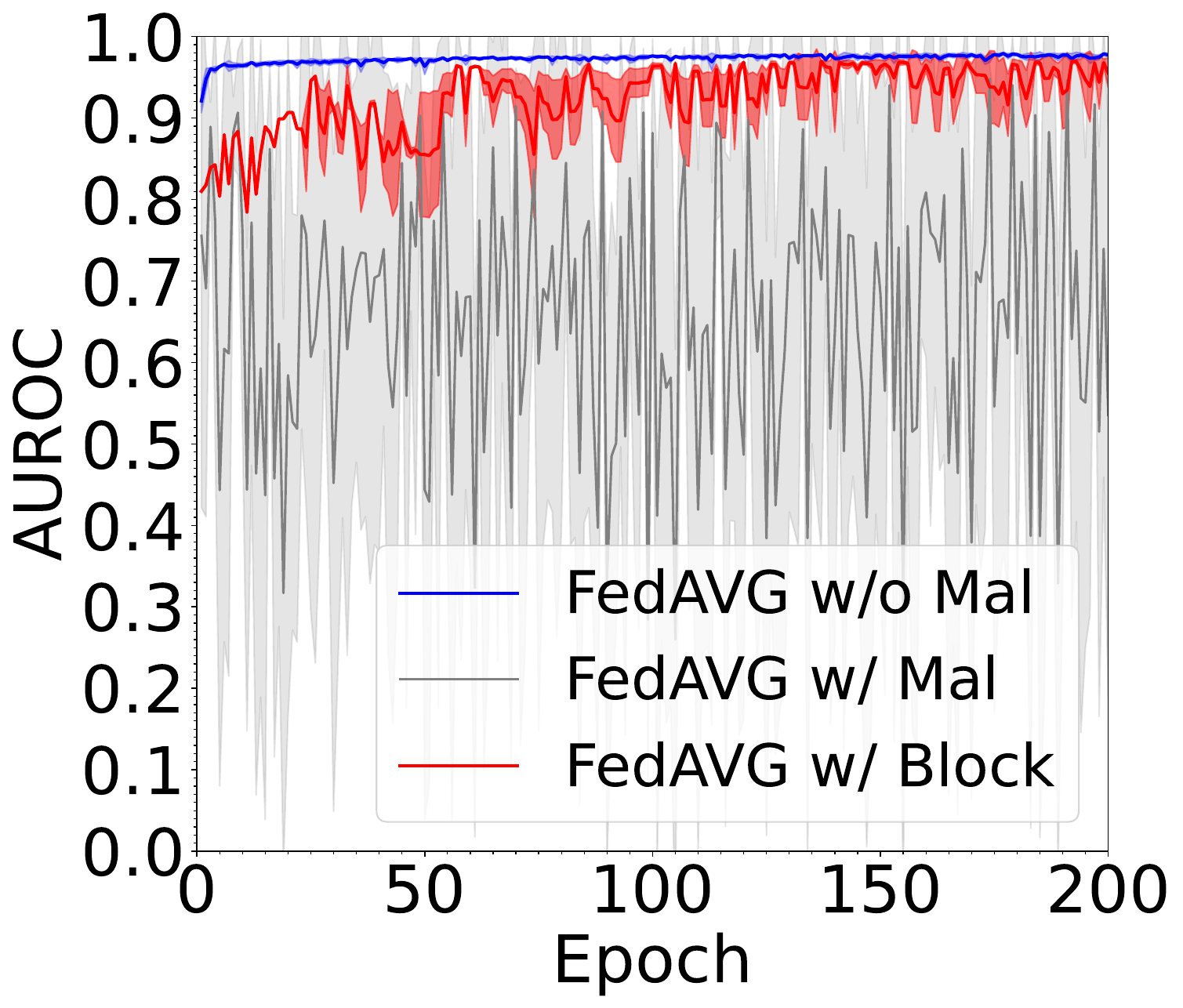}
\label{fig:training_res_with_rate_0.4}
}%
\hfill
\centering
\caption{Federated training under different values of the ratio of malicious clients ($\eta$). Each subfigure shows the training AUROCs when the training time (\ie~the number of epochs) increases.
The solid lines are the mean AUROCs and the shaded regions are 1 standard deviation around the means. We compare the performances of FedAVG with blockchain (i.e., w/ Block), FedAVG with malicious clients (i.e., w/ mal), and FedAVG without malicious clients (i.e., w/o mal). We observe that FedAVG w/ Block significantly outperforms FedAVG w/ mal, while being comparable with FedAVG w/o mal, the performance upper bound under this setup.}
\label{fig:training_res}
\end{figure*}

\subsection{Experimental Setup}
We first evaluate the proposed framework in a simulated environment.

\subsubsection{Data and Task} 
We consider a standard binary classification task, namely loan default prediction. We use the Kaggle Lending Club dataset\footnote{\url{https://www.kaggle.com/datasets/wordsforthewise/lending-club}} to simulate a realistic financial application scenario. 
We pre-process the raw dataset by dropping all entries with missing values. For the labels, we only keep ``Fully Paid'' and ``Charged Off'' to simplify the task as a binary classification task.
We randomly select $80\%$ of the data as the training set and use the rest of the data as the test set. The training set is split into $K$ subsets of equal size and distributed across $K$ clients. Within each client, $20\%$ of the local data are randomly selected as the validation set. \\

\subsubsection{Implementation}
\label{sec:exp:imp}
There are $K = 50$ clients in the system and each client is initialized with 64 tokens. We use a 3-layer multi-layer perceptron (MLP) as the network backbone. Apart from the last layer, each layer of the MLP has 128 hidden nodes. We use a standard Adam~\cite{kingma2015adam} optimizer with fixed learning rate $10^{-3}$ and batch size 128. No data augmentation is applied. We use the binary accuracy as both the local validation score and evaluation metric. 
In our experiments, malicious clients are randomly selected before the training according to the ratio $\eta$. In each training round, if a malicious client is selected as the proposer or voter, it will act maliciously as described in Sec.~\ref{sec:method}.\footnote{If a malicious client acts honestly, then it will be considered as an honest one and makes no harm to the system.} We consider a simple data poisoning attack~\cite{bagdasaryan2020how}, where malicious clients are trained to confuse the model. Specifically, the local models are trained to lower the model performance by using the wrong labels, but maintaining a low weight divergence from the aggregated weights from the last round.
All baselines are implemented in PyTorch 1.12.1~\cite{paszke2019pytorch} on one NVIDIA Tesla T4 GPU. We leverage Ethereum smart contracts to deploy our reward-and-slash design in a private blockchain and simulate the training process using the Python library \texttt{Web3.py}\footnote{\url{https://web3py.readthedocs.io/en/v5/}}. We set $\epsilon = 0.05$ based on empirical experience.\footnote{We notice that too small $\epsilon$ can cause large oscillation, which slows the convergence, and too large $\epsilon$ can facilitate the convergence at the expense of decreased detection performance, \ie~the system fails to remove the majority of malicious clients.}

\subsubsection{Baselines}
\label{sec:exp:base}
So far, there is no such blockchain-based FL baseline suitable for the comparative evaluation of counteracting malicious behaviors in our problem setting. It is worth mentioning that the proposed framework can be integrated with the existing FL method. In our experiments, we use FedAVG as the backbone FL method to illustrate our defending mechanism. We consider 4 baselines. The first one is an \emph{Oracle} approach, a centralized baseline without malicious attacks. The \emph{Oracle} should provide the upper-bound performance of the experiment. The second one is FedAVG without malicious attacks (denoted as FedAVG w/o mal), which is equivalent to FedAVG under $\eta = 0$ and should provide the upper-bound performance for a decentralized environment. The third one is FedAVG under malicious attacks (denoted as FedAVG w/ mal), where $\eta$ of clients are malicious. The fourth one is the proposed method, FedAVG with blockchain under malicious attacks (denoted as FedAVG w/ block). For FL baselines, $10\%$ of clients are randomly selected to perform local training at each epoch. For FedAVG w/ block, we simply use the remaining $90\%$ of the clients as voters.

\begin{figure*}[t]
\centering
\hfill
\subfigure[$\eta$ = 0.1.]{
\includegraphics[width=0.483\columnwidth]{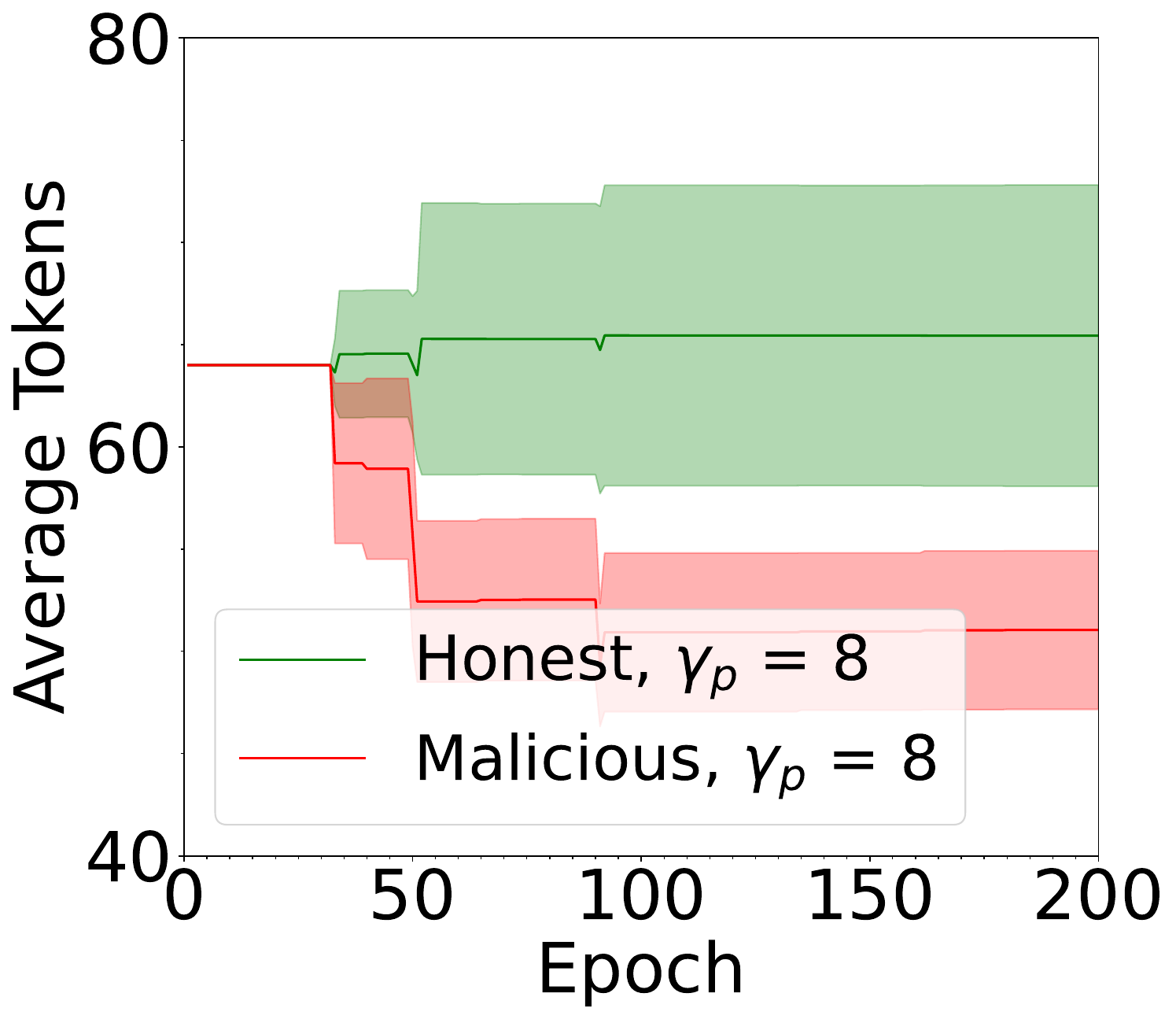}
\label{fig:tokens_with_rate_0.1}
}%
\hfill
\subfigure[$\eta$ = 0.2.]{
\includegraphics[width=0.483\columnwidth]{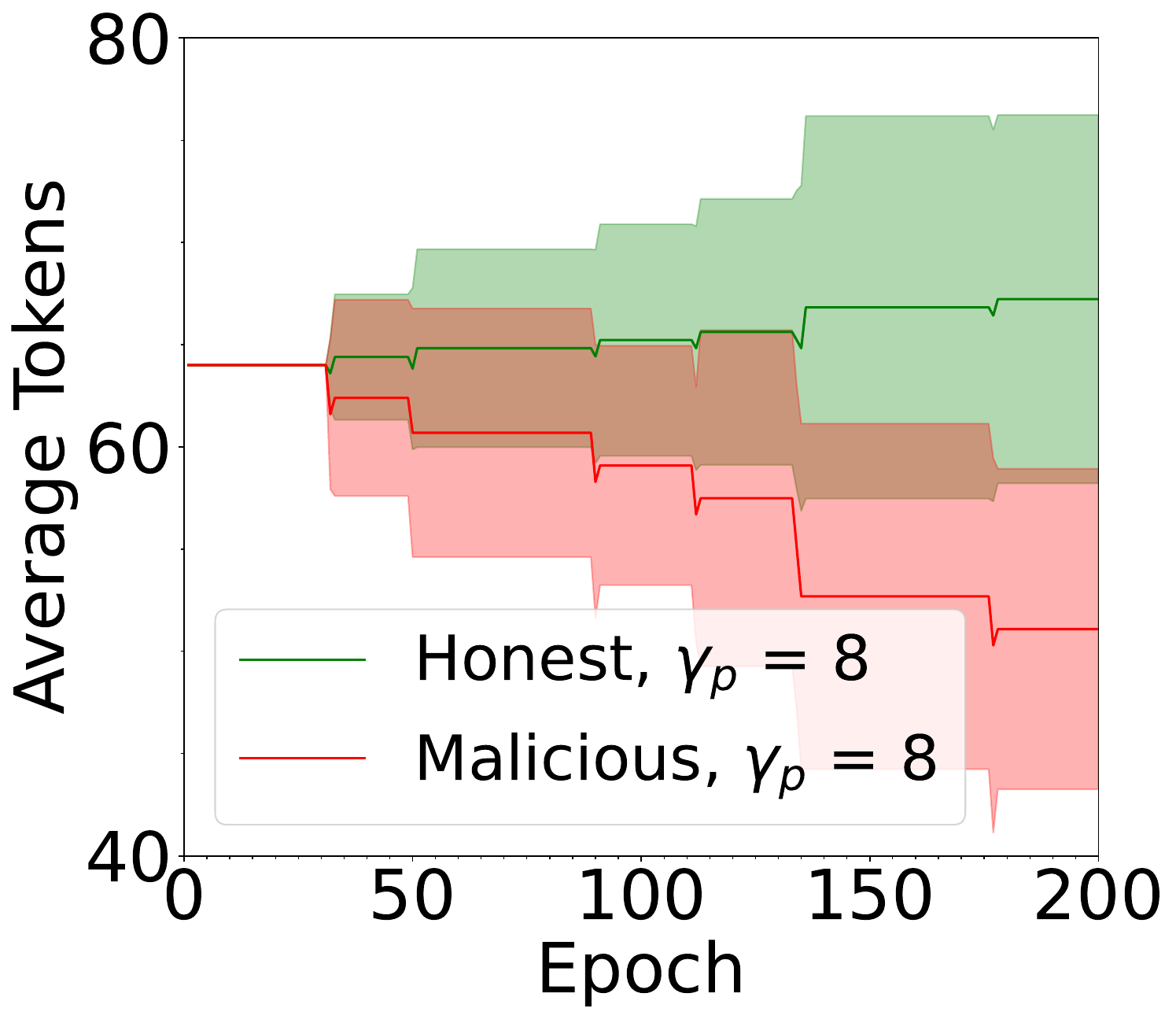}
\label{fig:tokens_with_rate_0.2}
}%
\hfill
\subfigure[$\eta$ = 0.3.]{
\includegraphics[width=0.483\columnwidth]{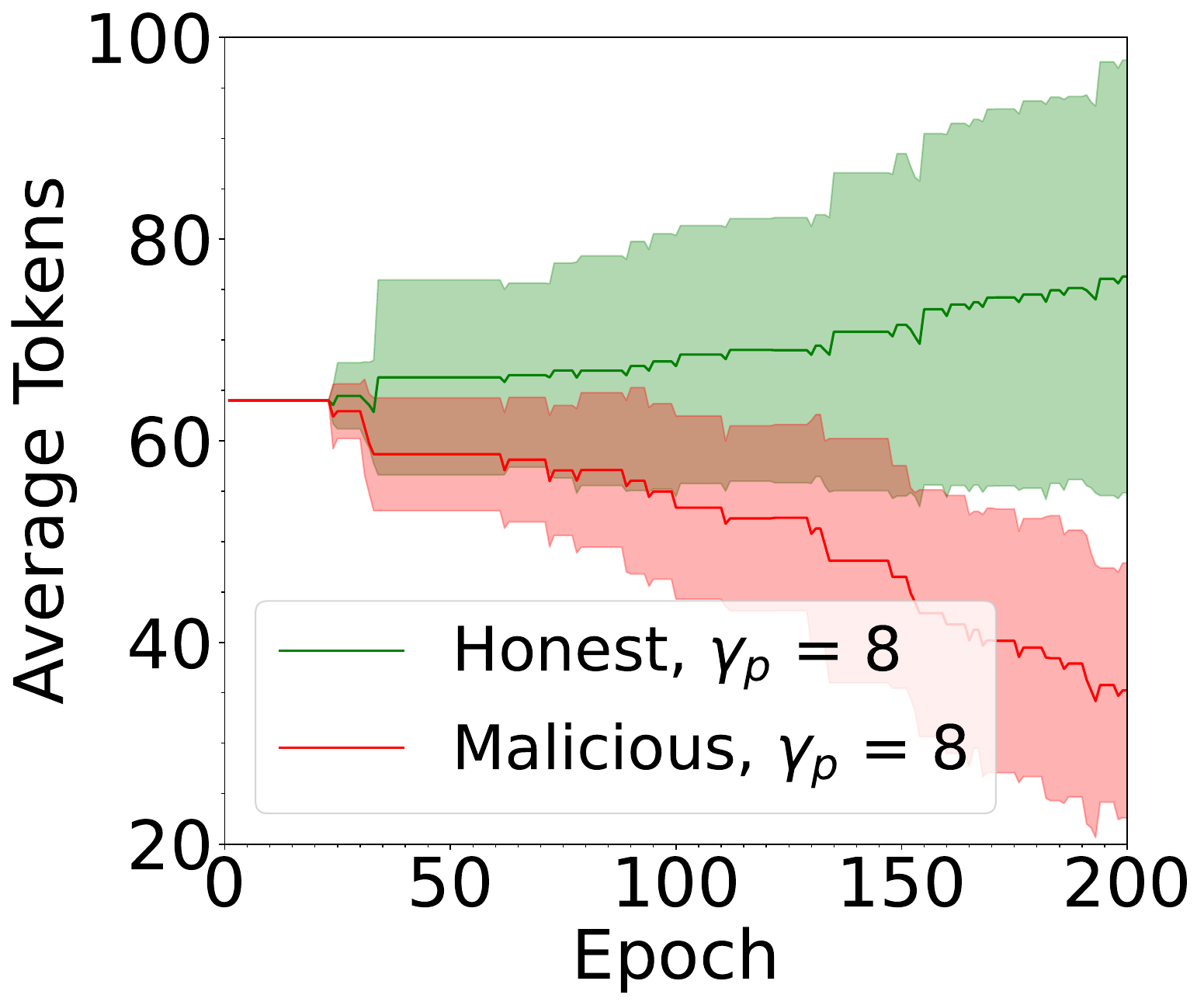}
\label{fig:tokens_with_rate_0.3}
}%
\hfill
\subfigure[$\eta$ = 0.4.]{
\includegraphics[width=0.483\columnwidth]{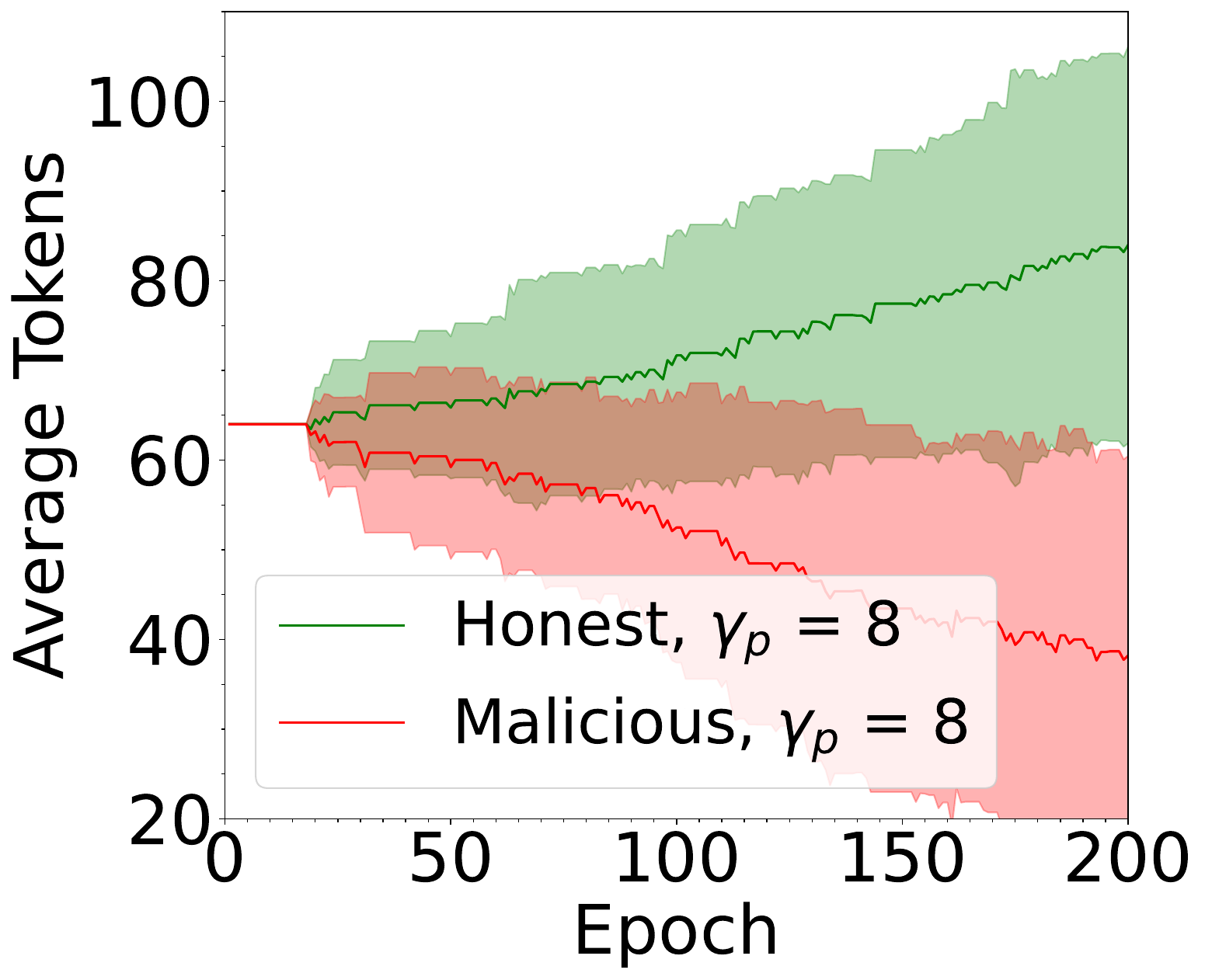}
\label{fig:tokens_with_rate_0.4}
}%
\hfill
\centering
\caption{Token distribution results for clients when setting the parameter for slashing proposers as  $\gamma_p = 8$. The expected average token of malicious proposers fluctuates down during the training process.}

\label{fig:token_distribution_sp=8}
\end{figure*}

\begin{figure*}[h]
\centering
\hfill
\subfigure[$\eta$ = 0.1.]{
\includegraphics[width=0.483\columnwidth]{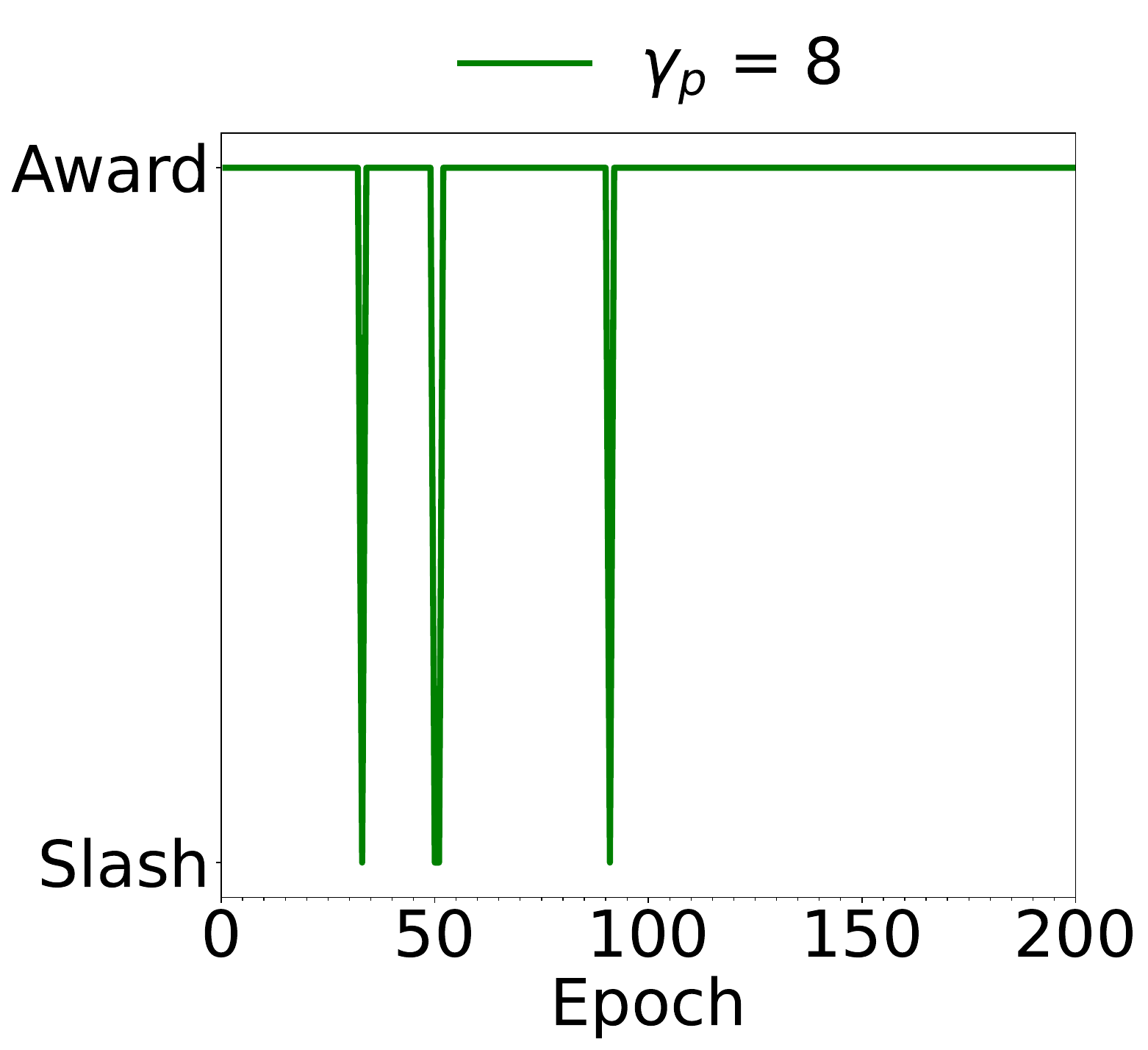}
\label{fig:award_slash_round_with_rate_0.1}
}%
\hfill
\subfigure[$\eta$ = 0.4.]{
\includegraphics[width=0.483\columnwidth]{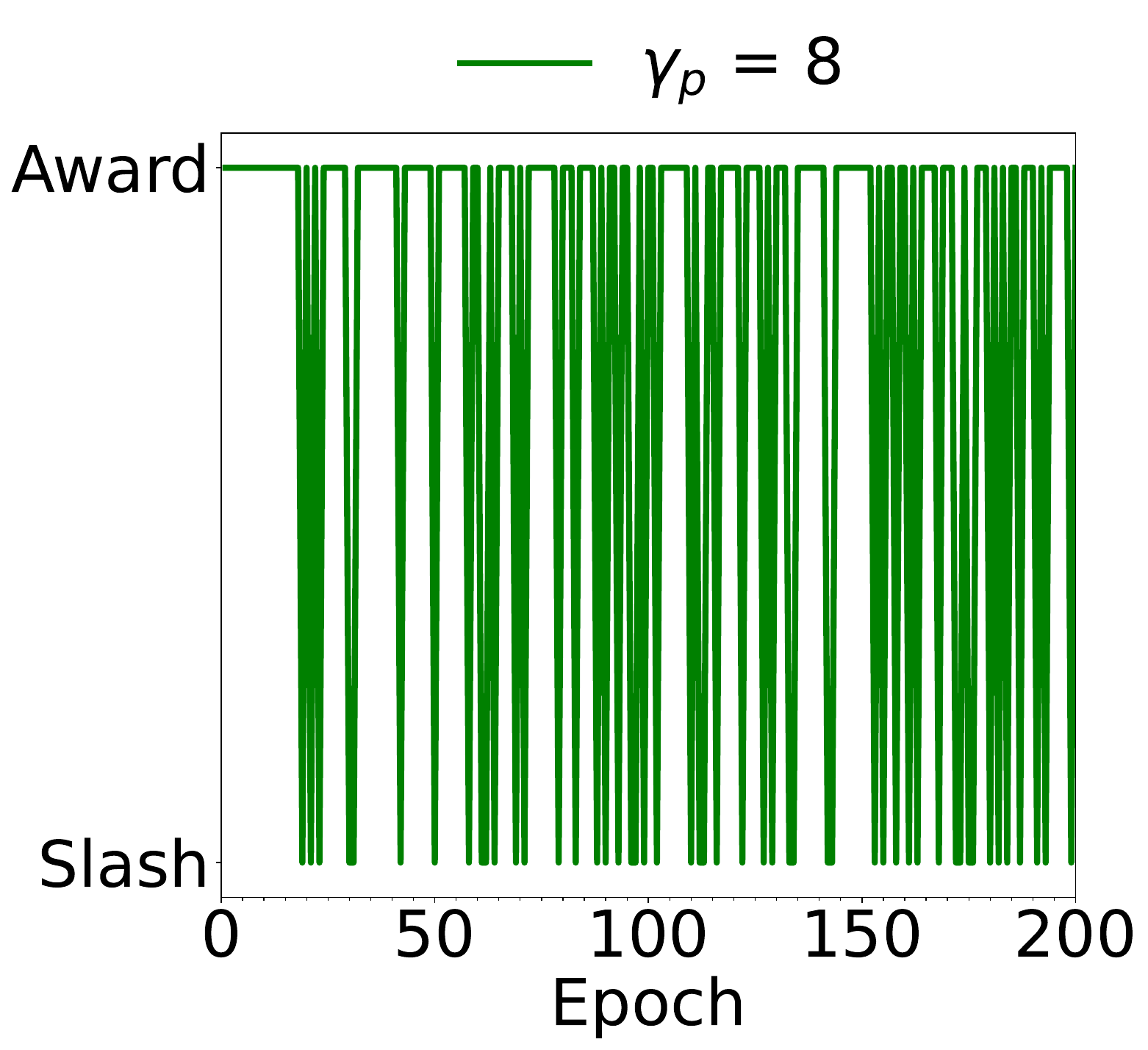}
\label{fig:award_slash_round_with_rate_0.4}
}%
\hfill
\subfigure[$\eta$ = 0.1.]{
\includegraphics[width=0.483\columnwidth]{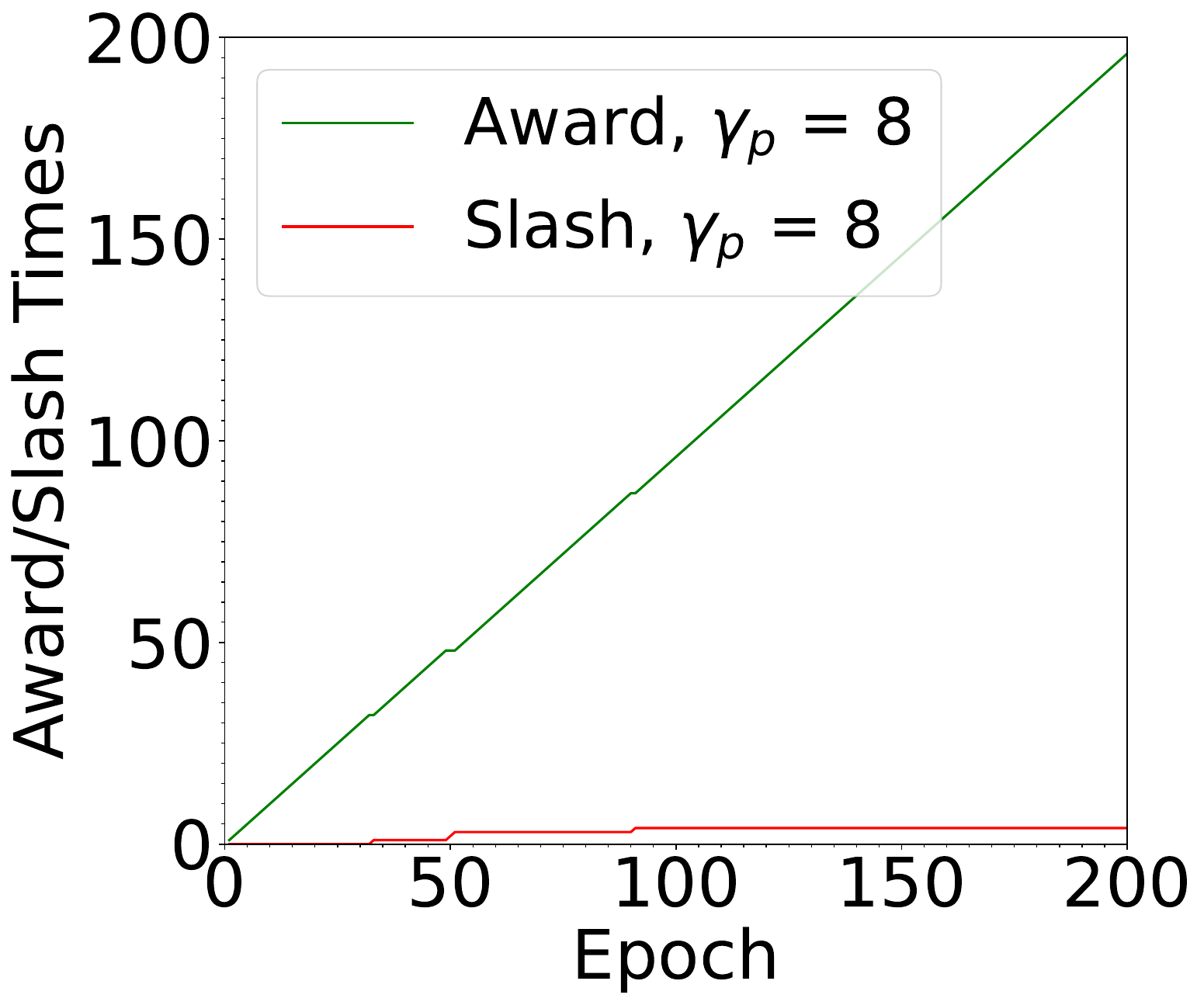}
\label{fig:award_slash_num_with_rate_0.1}
}%
\hfill
\subfigure[$\eta$ = 0.4.]{
\includegraphics[width=0.483\columnwidth]{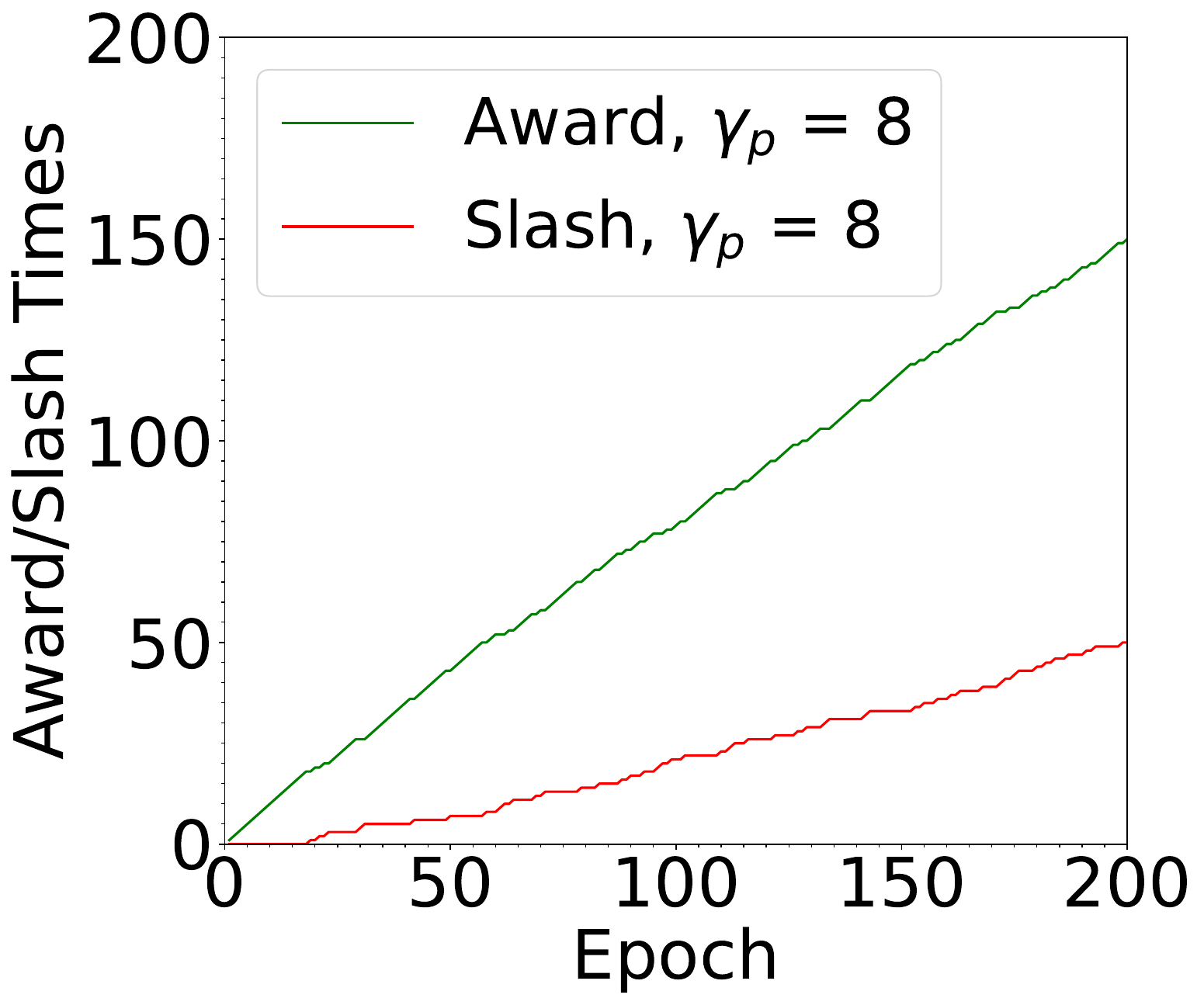}
\label{fig:award_slash_num_with_rate_0.4}
}%
\hfill
\centering
\caption{(a)-(b) Number of award and slash epochs for clients when setting the parameter for slashing proposers as  $\gamma_p = 8$. (c)-(d) The number of slash epochs increases when the number of malicious proposers increases.}
\label{fig:award_slash_num}
\end{figure*}

\begin{figure*}[t]
\centering
\hfill
\subfigure[$\eta$ = 0.1.]{
\includegraphics[width=0.483\columnwidth]{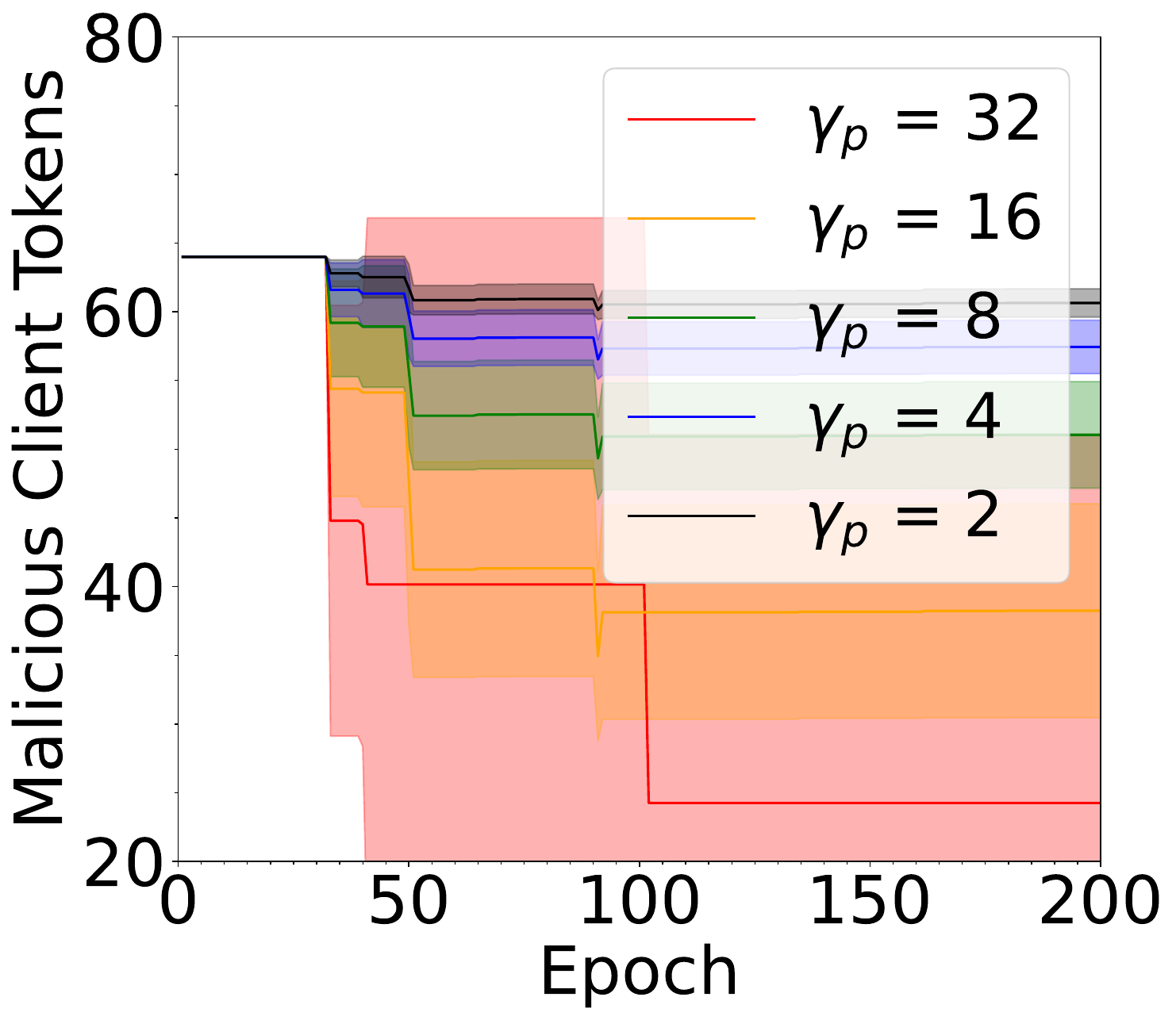}
\label{fig:malicious_node_tokens_with_rate_0.1}
}%
\hfill
\subfigure[$\eta$ = 0.2.]{
\includegraphics[width=0.483\columnwidth]{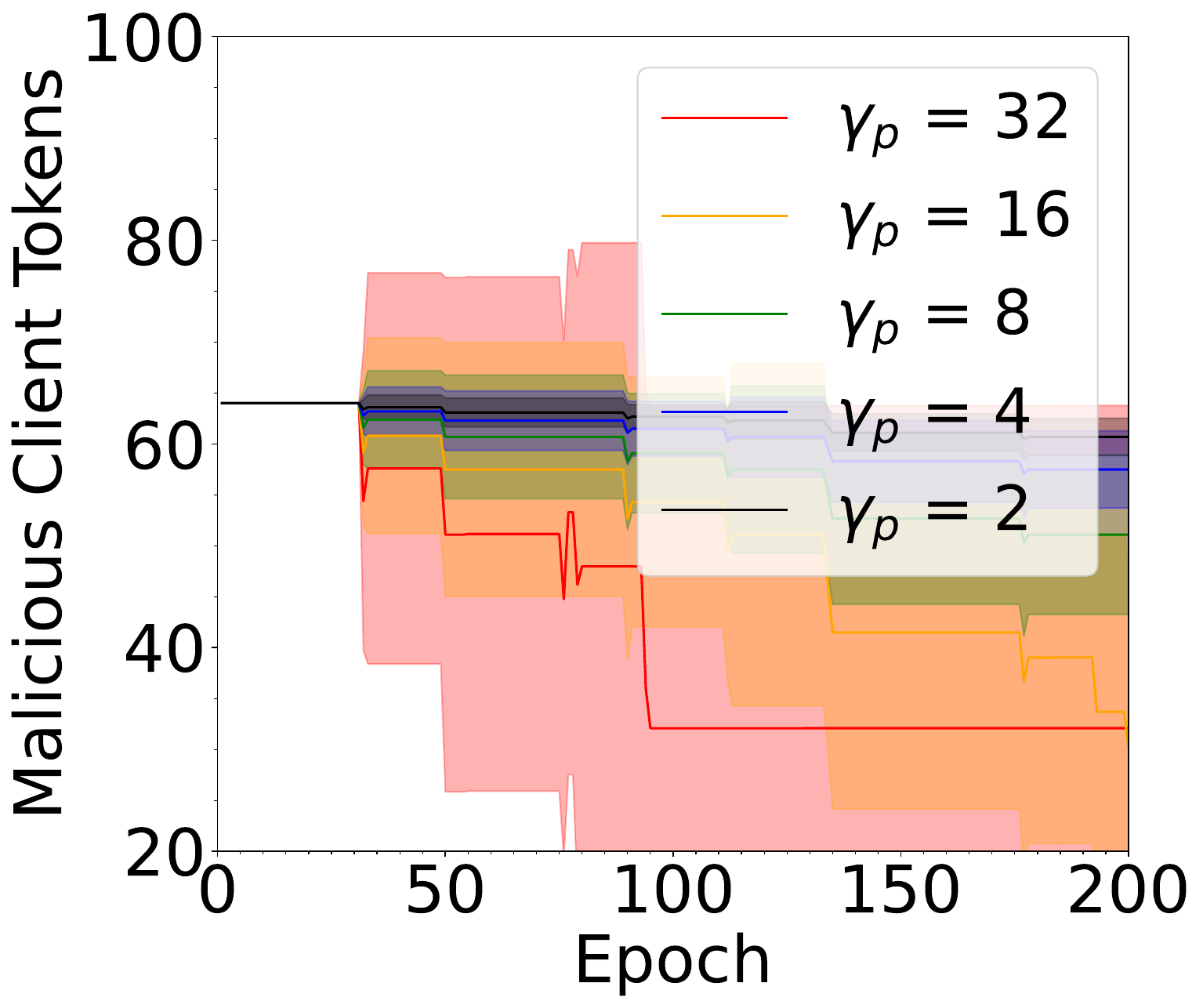}
\label{fig:malicious_node_tokens_with_rate_0.2}
}%
\hfill
\subfigure[$\eta$ = 0.3.]{
\includegraphics[width=0.483\columnwidth]{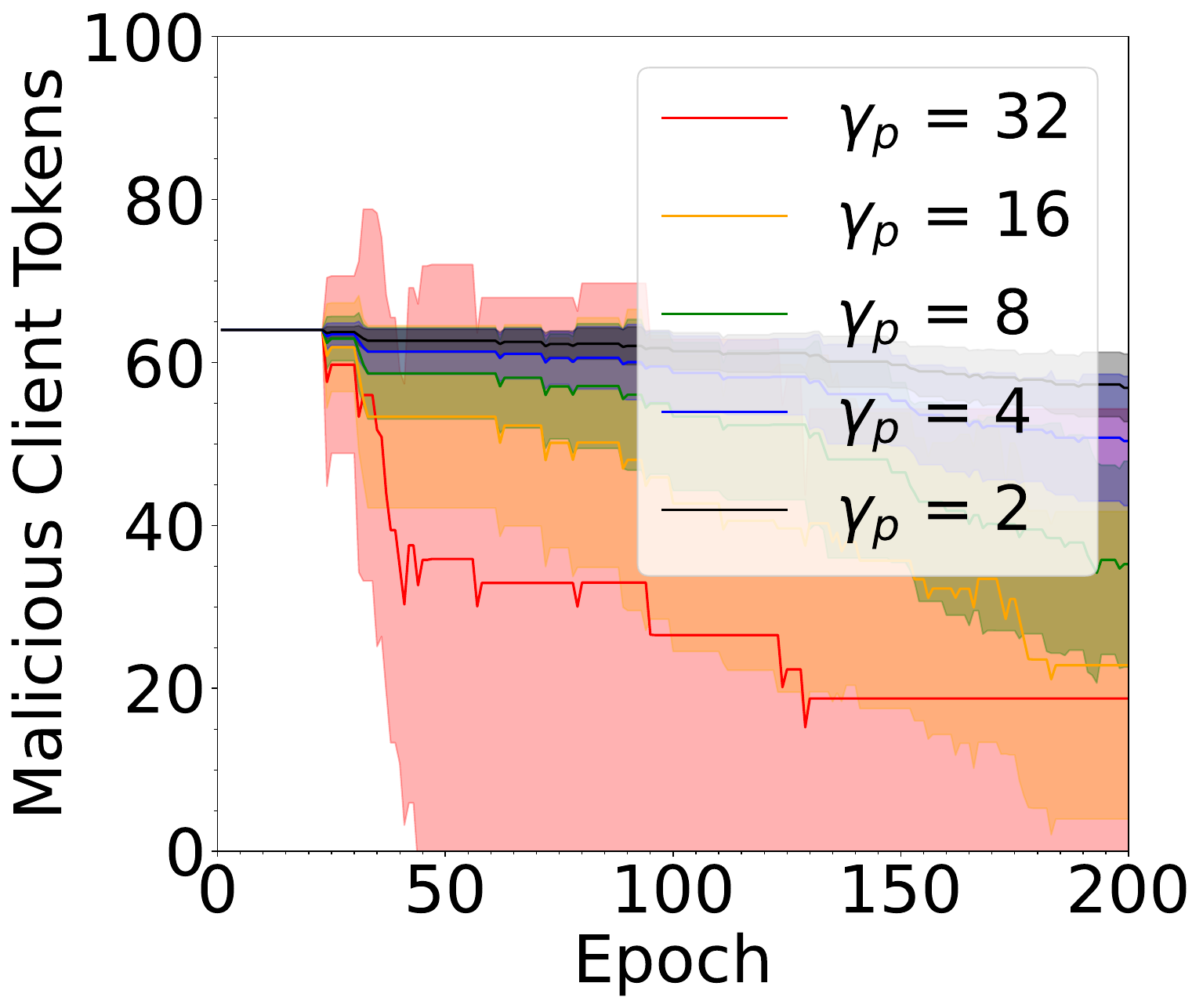}
\label{fig:malicious_node_tokens_with_rate_0.3}
}%
\hfill
\subfigure[$\eta$ = 0.4.]{
\includegraphics[width=0.483\columnwidth]{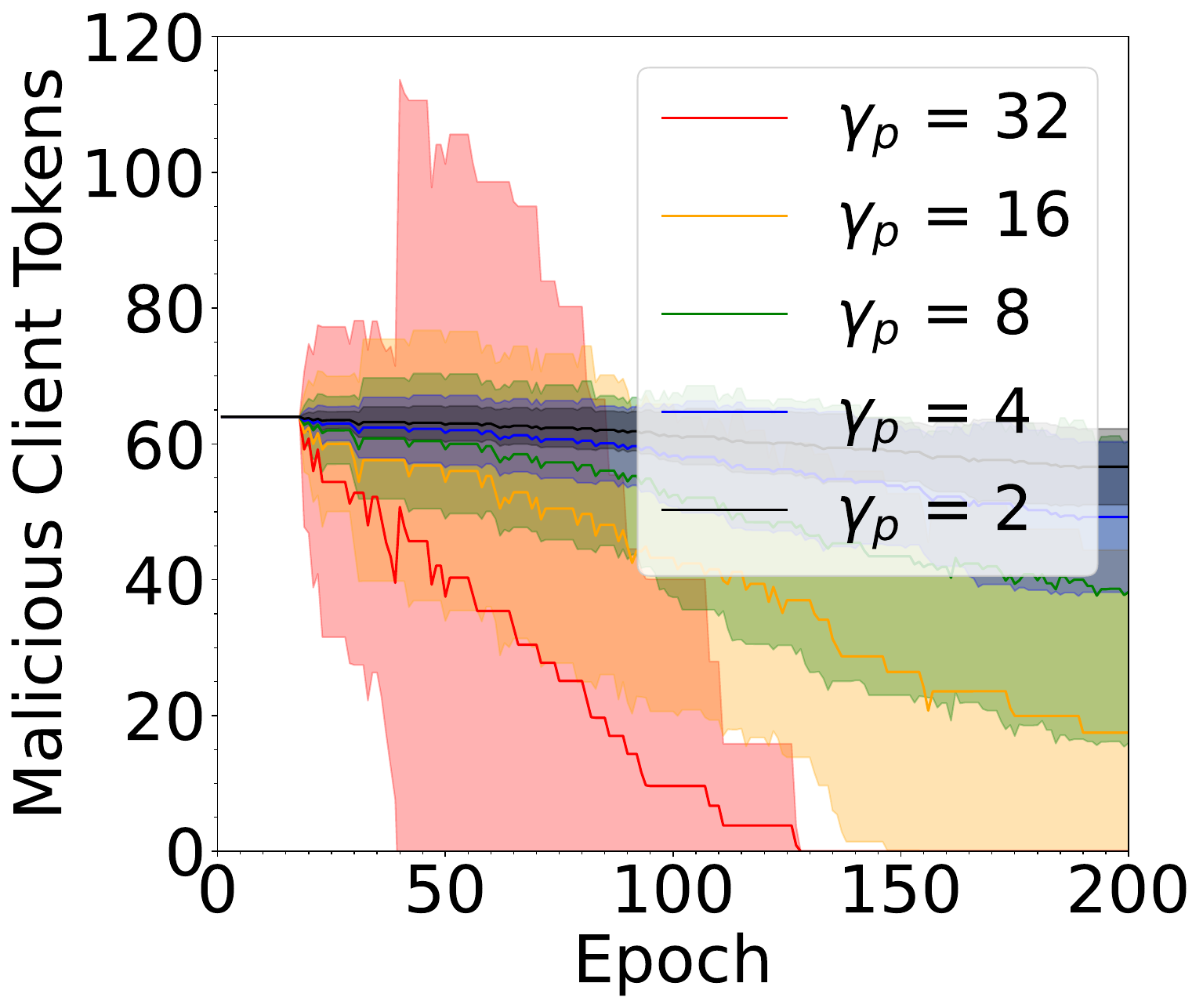}
\label{fig:malicious_node_tokens_with_rate_0.4}
}%
\hfill
\centering
\caption{Token distribution results for malicious clients when choosing $\gamma_p = 2, 4, 8, 16,$ and $ 32$. The expected average token of malicious proposers exhibits a higher rate of decrease when a large value of $\gamma_p$ is selected.}

\label{fig:token_distribution_malicious}
\end{figure*}

\begin{figure*}[t]
\centering
\hfill
\subfigure[$\eta$ = 0.1.]{
\includegraphics[width=0.483\columnwidth]{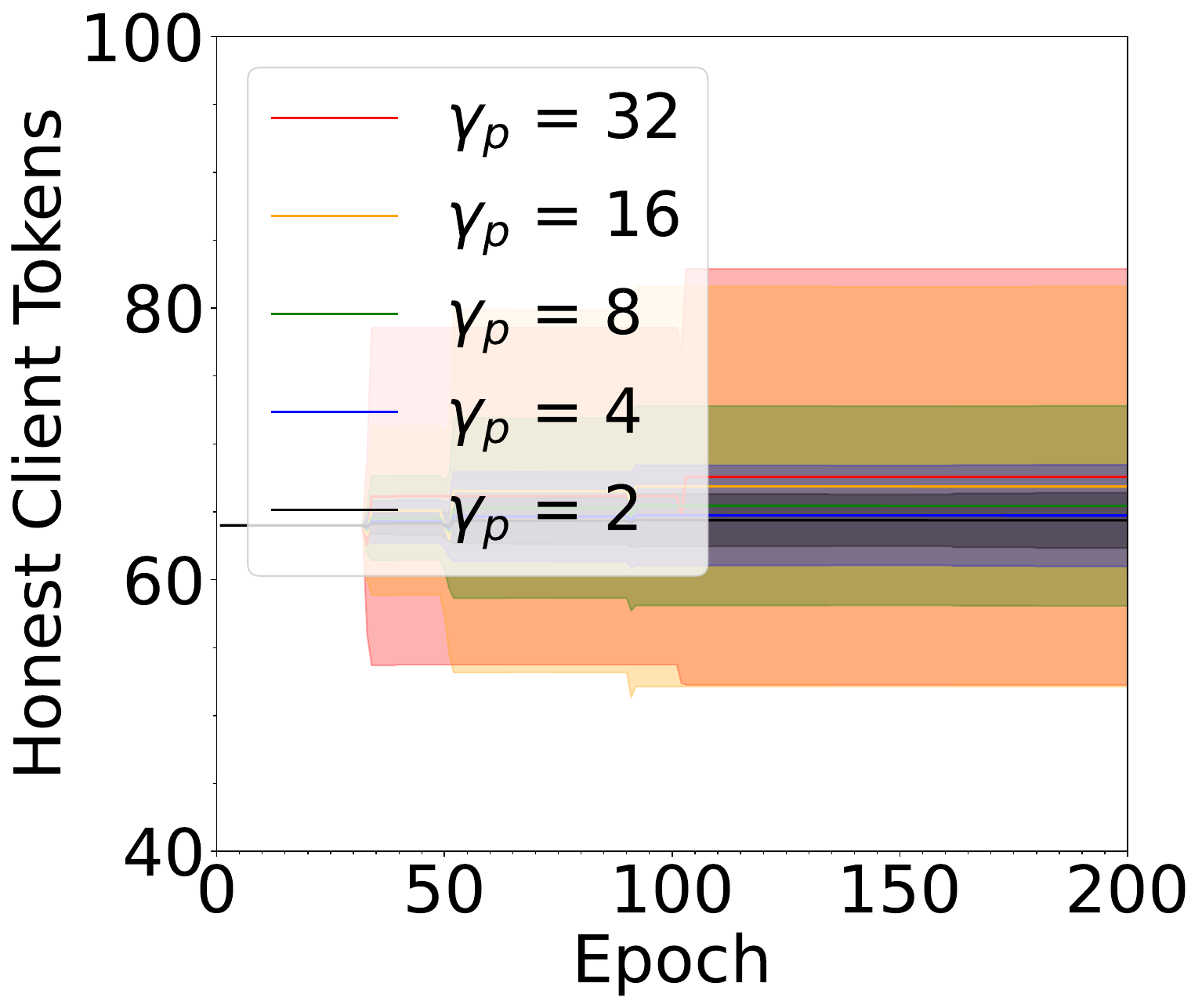}
\label{fig:normal_node_tokens_with_rate_0.1}
}%
\hfill
\subfigure[$\eta$ = 0.2.]{
\includegraphics[width=0.483\columnwidth]{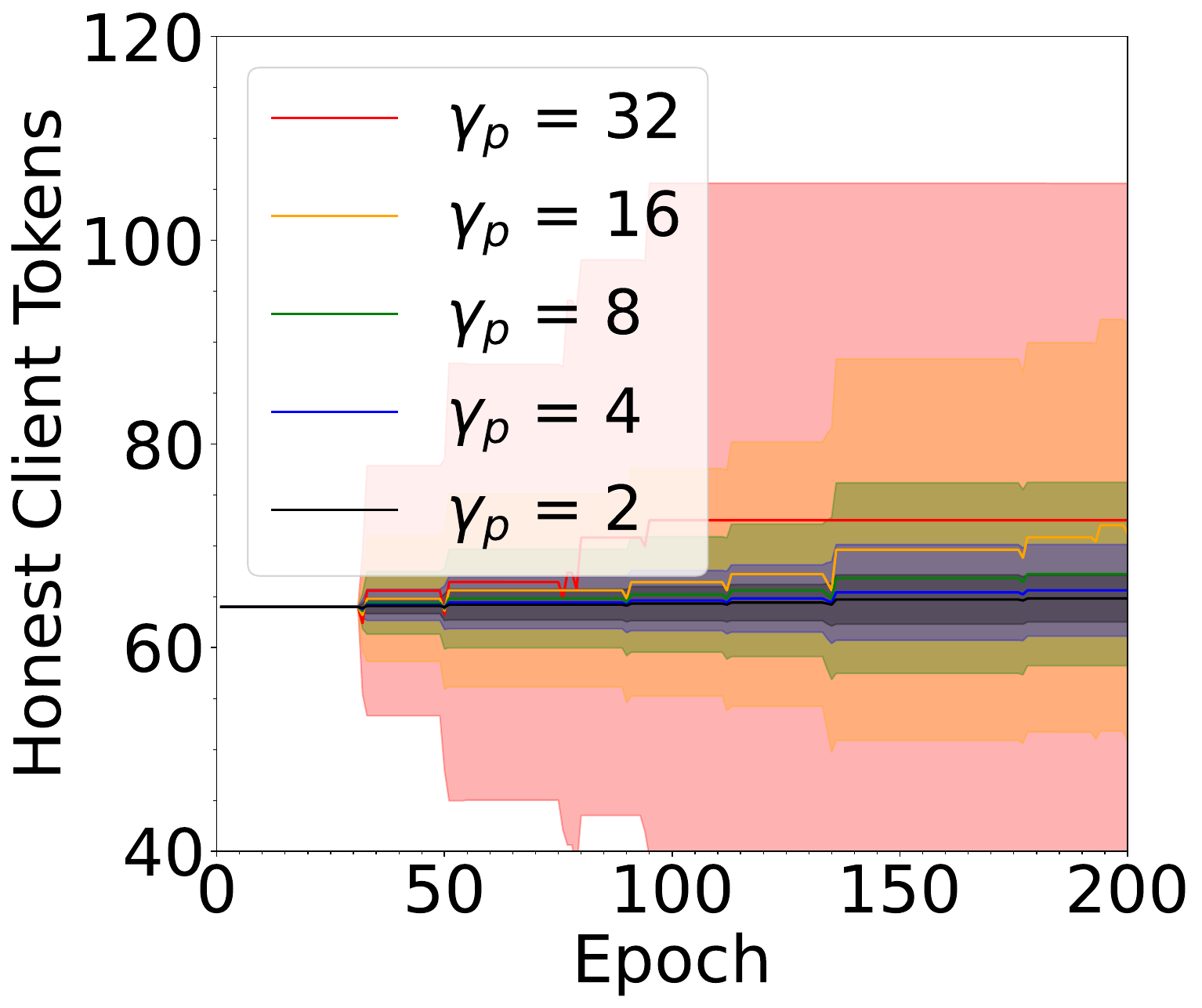}
\label{fig:normal_node_tokens_with_rate_0.2}
}%
\hfill
\subfigure[$\eta$ = 0.3.]{
\includegraphics[width=0.483\columnwidth]{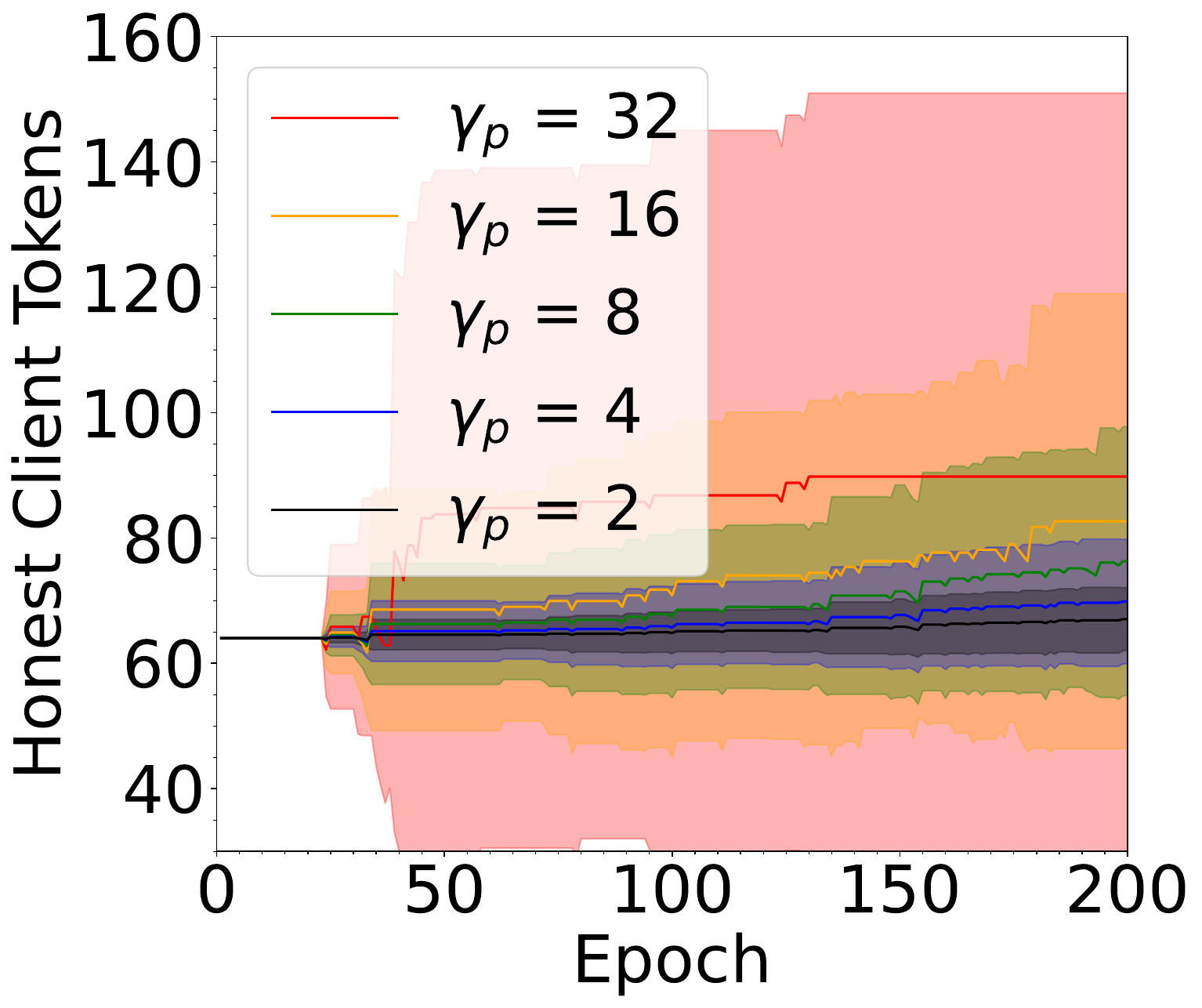}
\label{fig:normal_node_tokens_with_rate_0.3}
}%
\hfill
\subfigure[$\eta$ = 0.4.]{
\includegraphics[width=0.483\columnwidth]{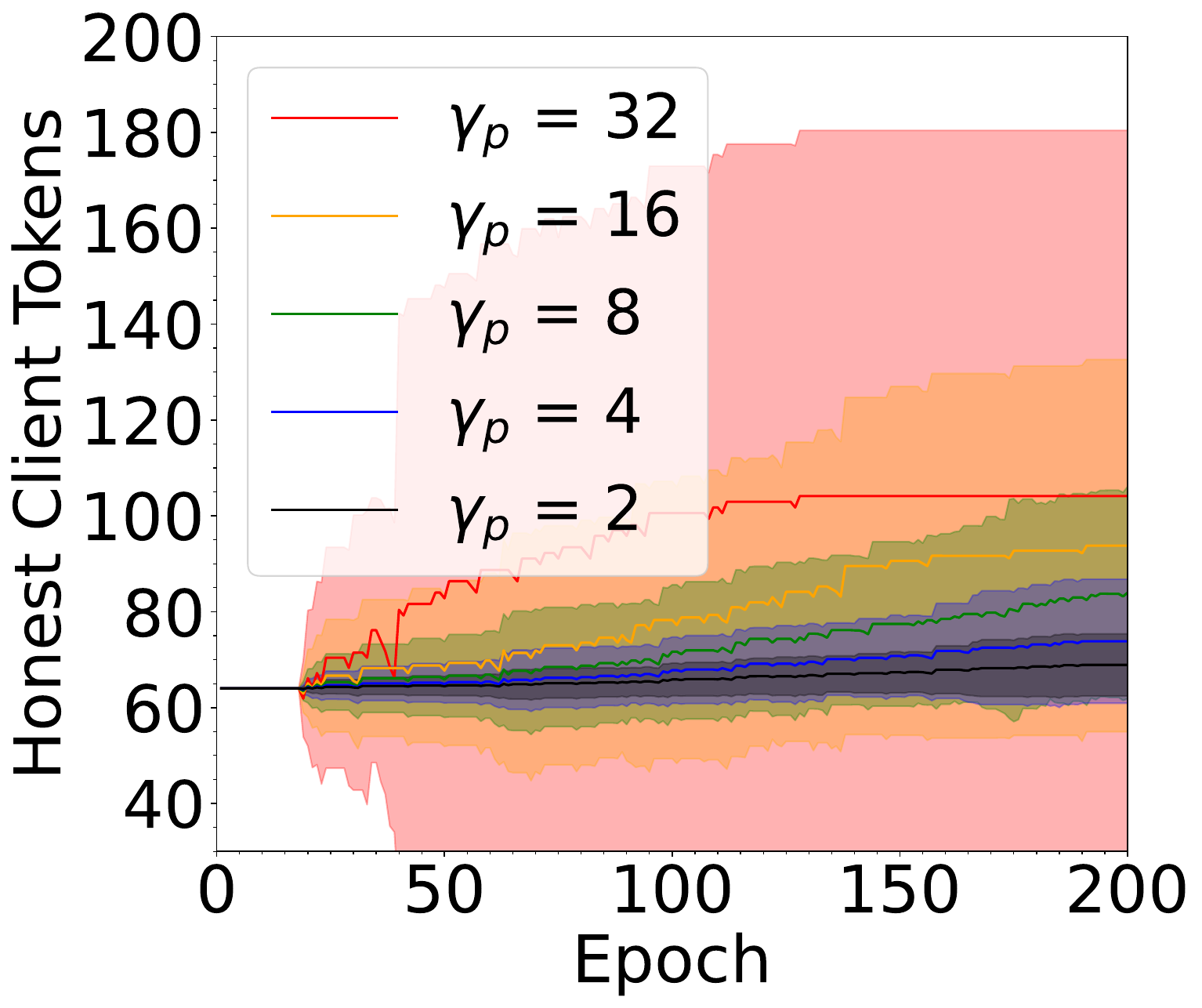}
\label{fig:normal_node_tokens_with_rate_0.4}
}%
\hfill
\centering
\caption{Token distribution results for honest clients when choosing $\gamma_p = 2, 4, 8, 16,$ and $ 32$. The expected average token of honest proposers displays a higher rate of growth when a large value of $\gamma_p$ is selected.}

\label{fig:token_distribution_honest}
\end{figure*}

\subsection{Results}
\subsubsection{Empirical Analysis on Malicious Voters}
\label{sec:exp:thm}
To empirically validate the theoretical result in Sec.~\ref{sec:method:analysis}, we first simulate a hypothetical scenario where there are only honest proposers. As there are more honest proposers than malicious proposers at each round on average, the effect of malicious weights can be seen as slowing the convergence and decreasing the global performance, which will be validated in Sec.~\ref{sec:exp:base}. Here, we further simplify the scenario to focus on the behavior of malicious voters. As shown in Fig.~\ref{fig:mal_voters_token_distribution_sv=4}, given the set of the hyperparameter for slashing voters $\gamma_r = \{2, 4, 8, 16, 32\}$, the malicious voters will be eliminated from the system shortly (\ie~their average tokens decline to $0$ within $\approx 40$ epochs).

\begin{figure*}[htbp]
\centering
\hfill
\subfigure[$\eta$ = 0.1.]{
\includegraphics[width=0.483\columnwidth]{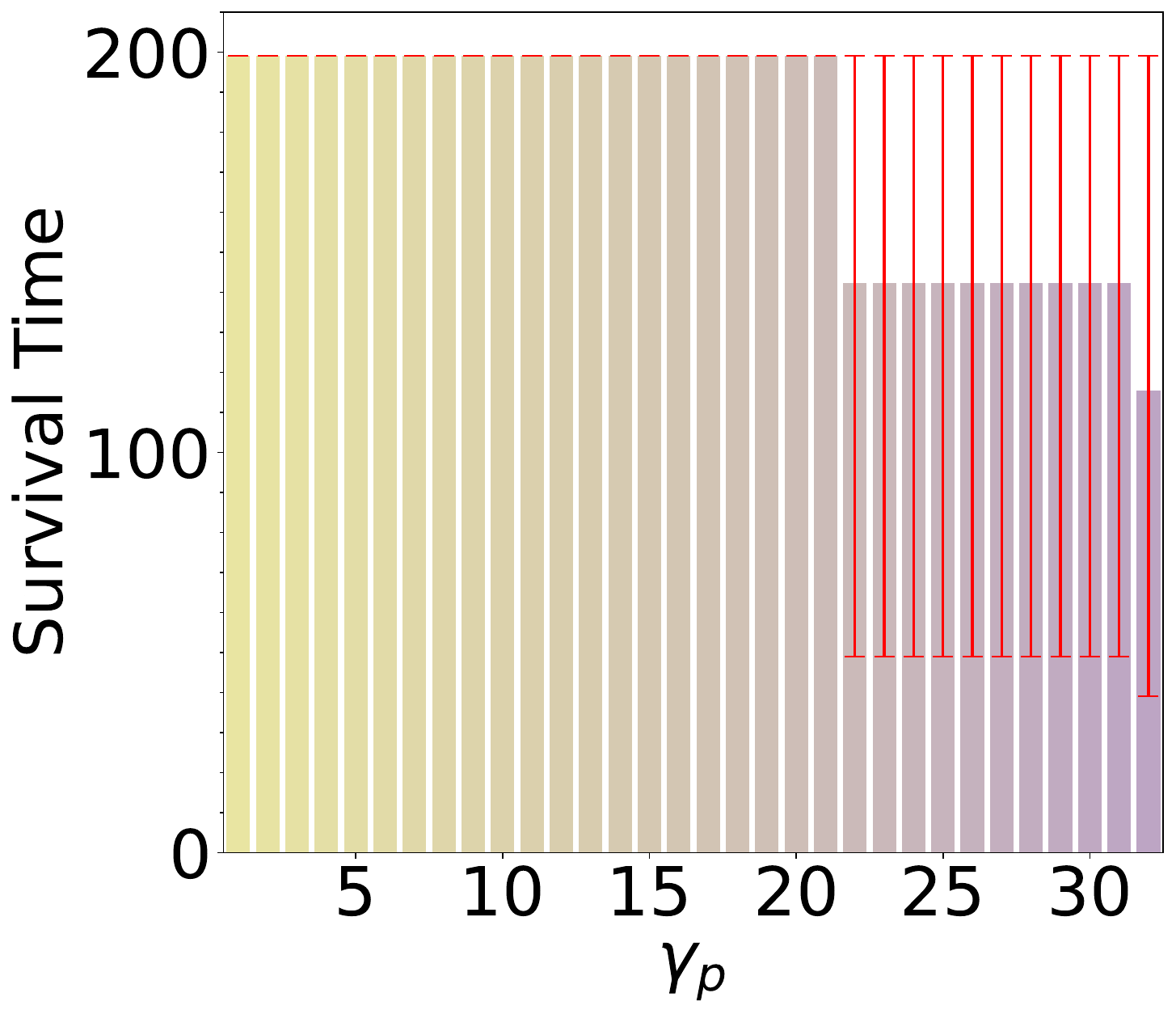}
\label{fig:malicious_client_liveness_with_rate_0.1}
}%
\hfill
\subfigure[$\eta$ = 0.2.]{
\includegraphics[width=0.483\columnwidth]{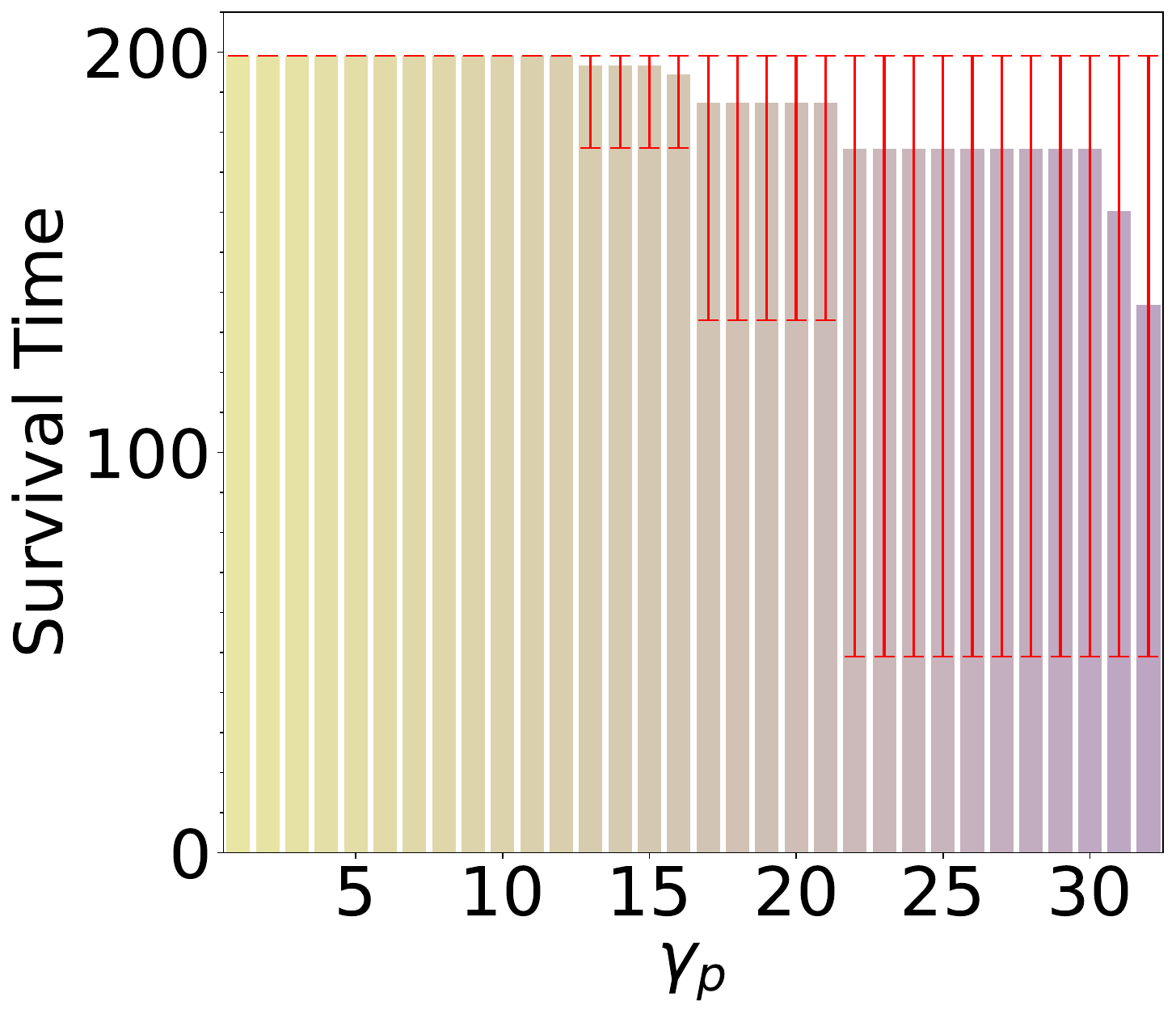}
\label{fig:malicious_client_liveness_with_rate_0.2}
}%
\hfill
\subfigure[$\eta$ = 0.3.]{
\includegraphics[width=0.483\columnwidth]{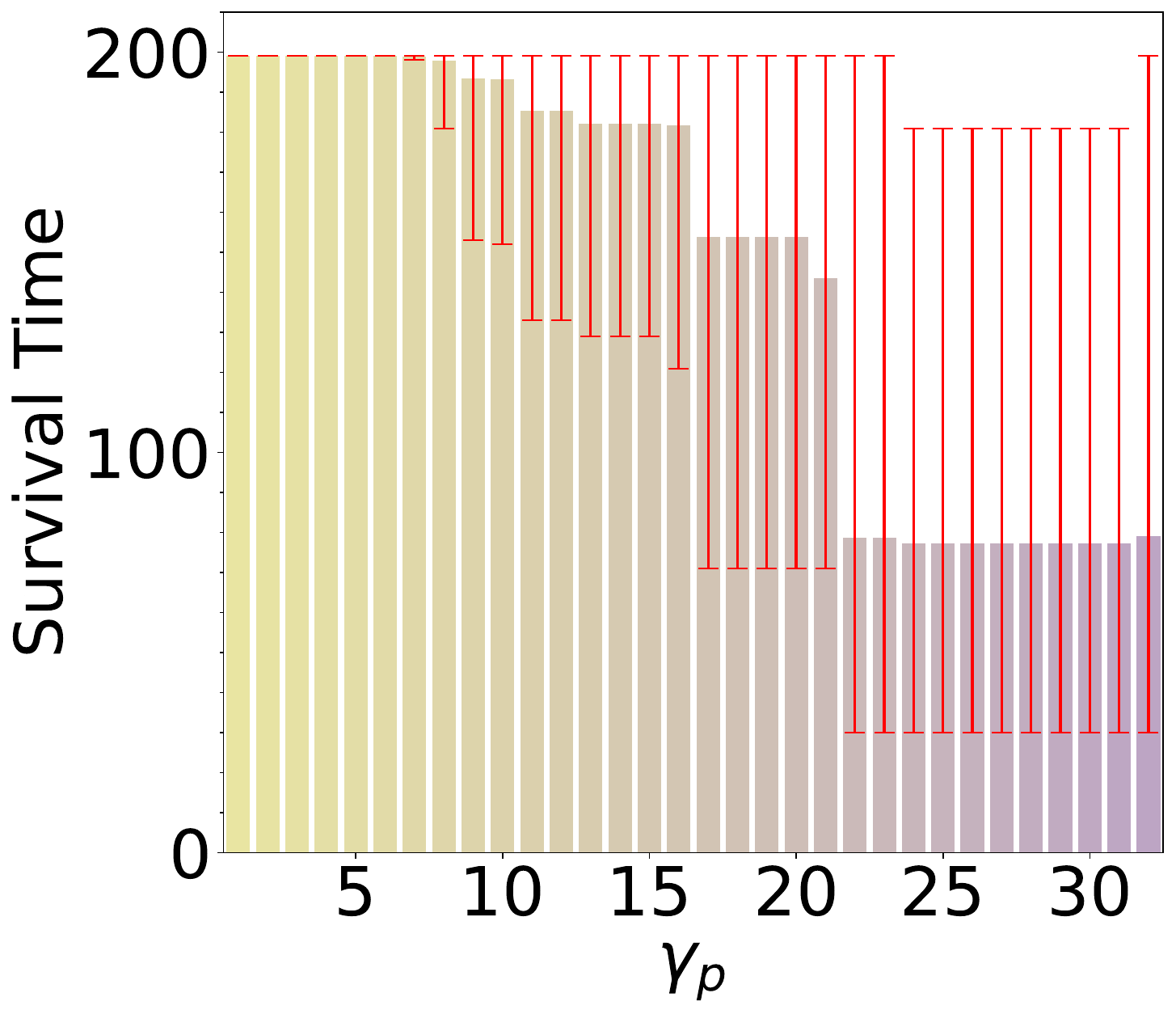}
\label{fig:malicious_client_liveness_with_rate_0.3}
}%
\hfill
\subfigure[$\eta$ = 0.4.]{
\includegraphics[width=0.483\columnwidth]{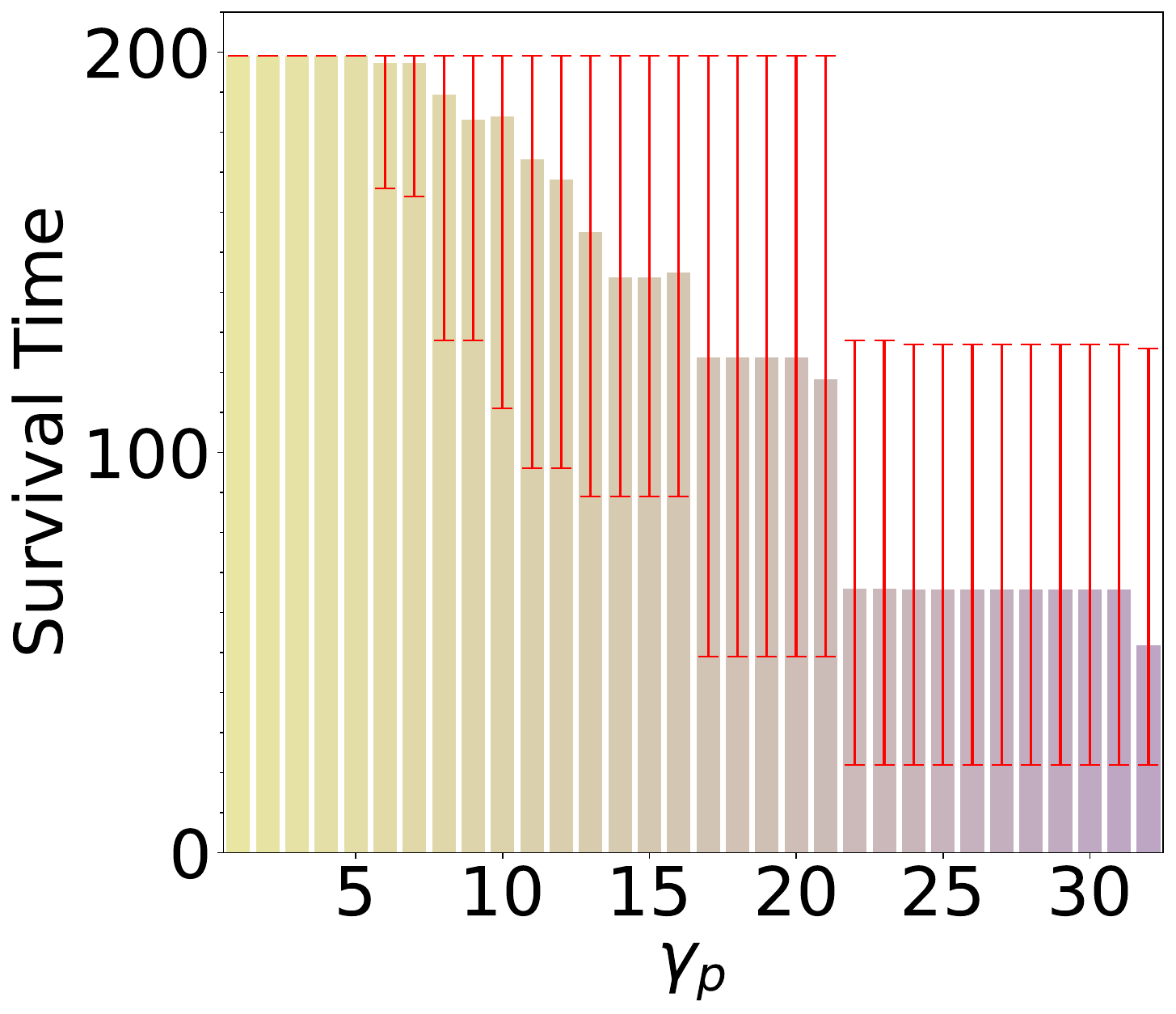}
\label{fig:malicious_client_liveness_with_rate_0.4}
}%
\hfill
\centering
\caption{Malicious proposers survival time with various $\gamma_p$. The expected survival time of malicious proposers declines as $\gamma_p$ increases.}

\label{fig:Malicious_client_liveness}
\end{figure*}

\begin{figure*}[htbp]
\centering
\hfill
\subfigure[$\eta$ = 0.1.]{
\includegraphics[width=0.483\columnwidth]{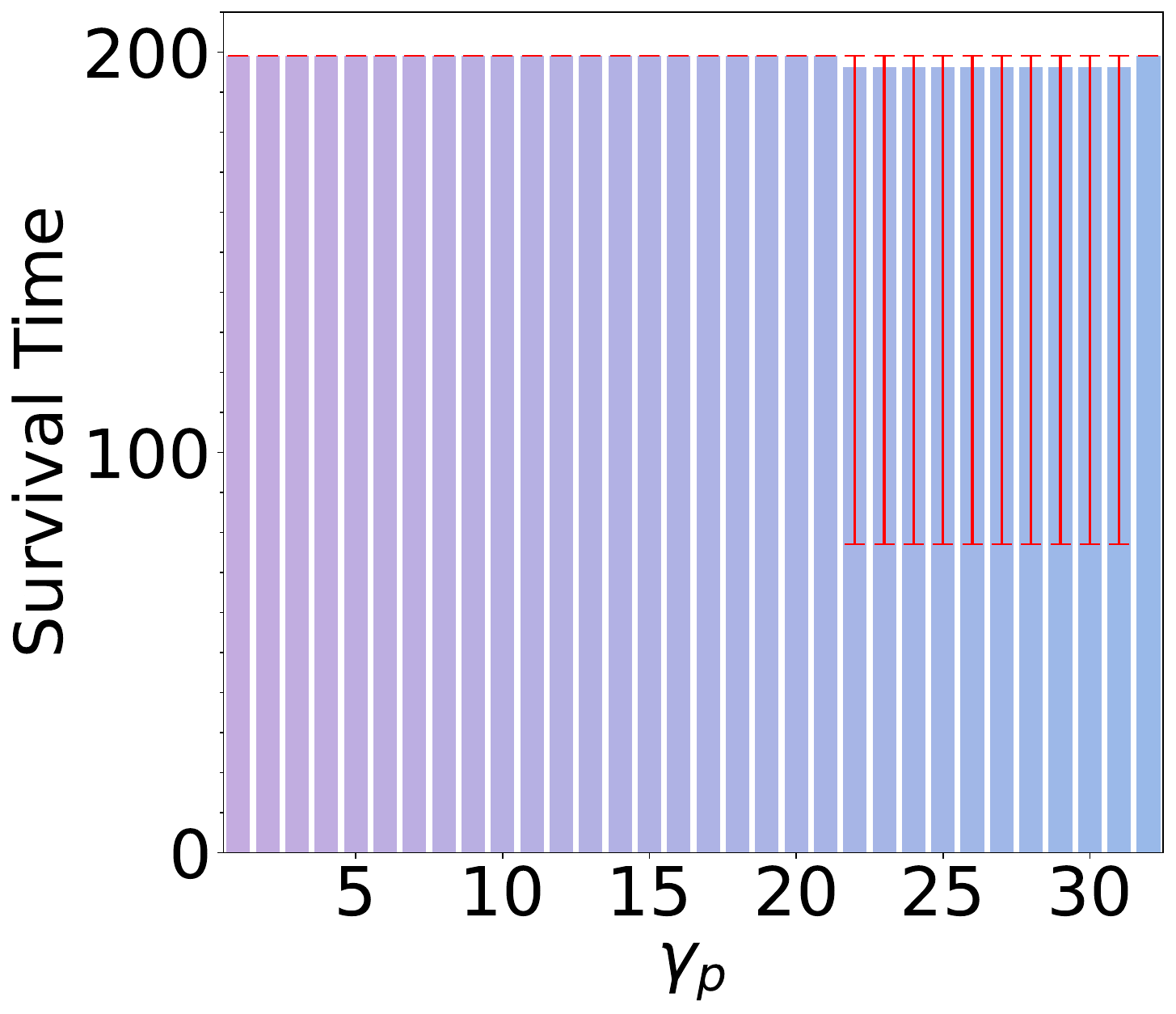}
\label{fig:honest_client_liveness_with_rate_0.1}
}%
\hfill
\subfigure[$\eta$ = 0.2.]{
\includegraphics[width=0.483\columnwidth]{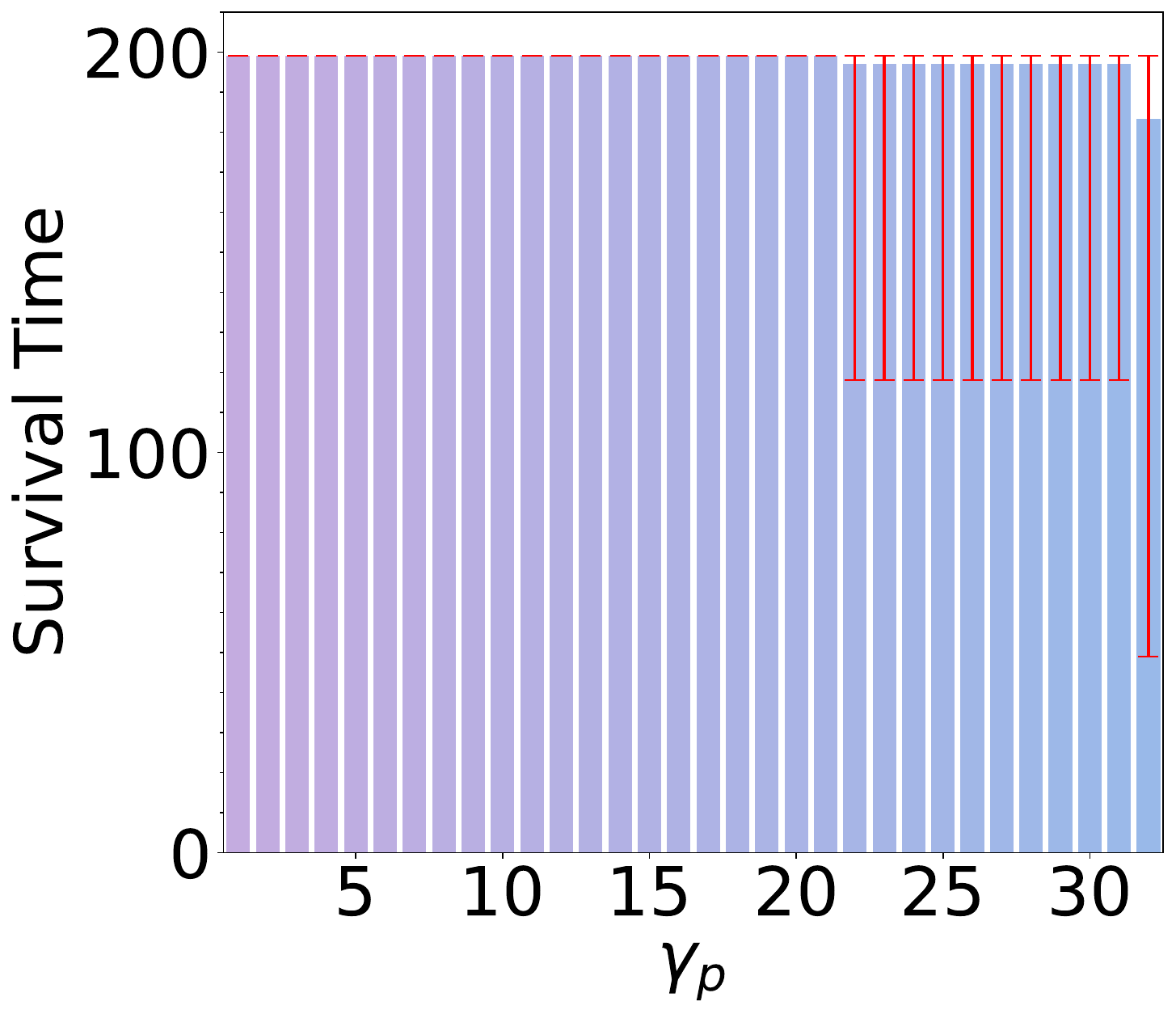}
\label{fig:honest_client_liveness_with_rate_0.2}
}%
\hfill
\subfigure[$\eta$ = 0.3.]{
\includegraphics[width=0.483\columnwidth]{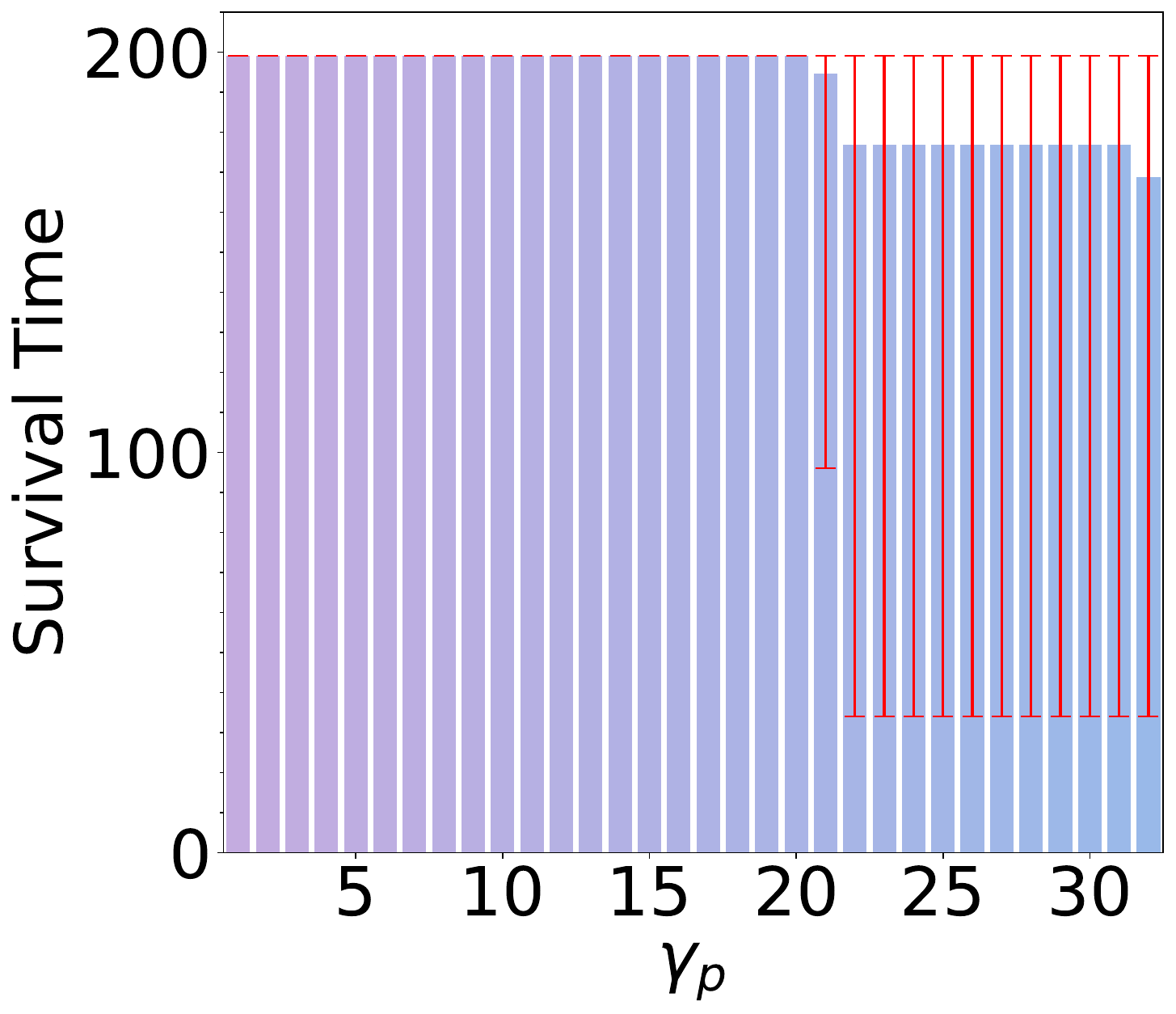}
\label{fig:honest_client_liveness_with_rate_0.3}
}%
\hfill
\subfigure[$\eta$ = 0.4.]{
\includegraphics[width=0.483\columnwidth]{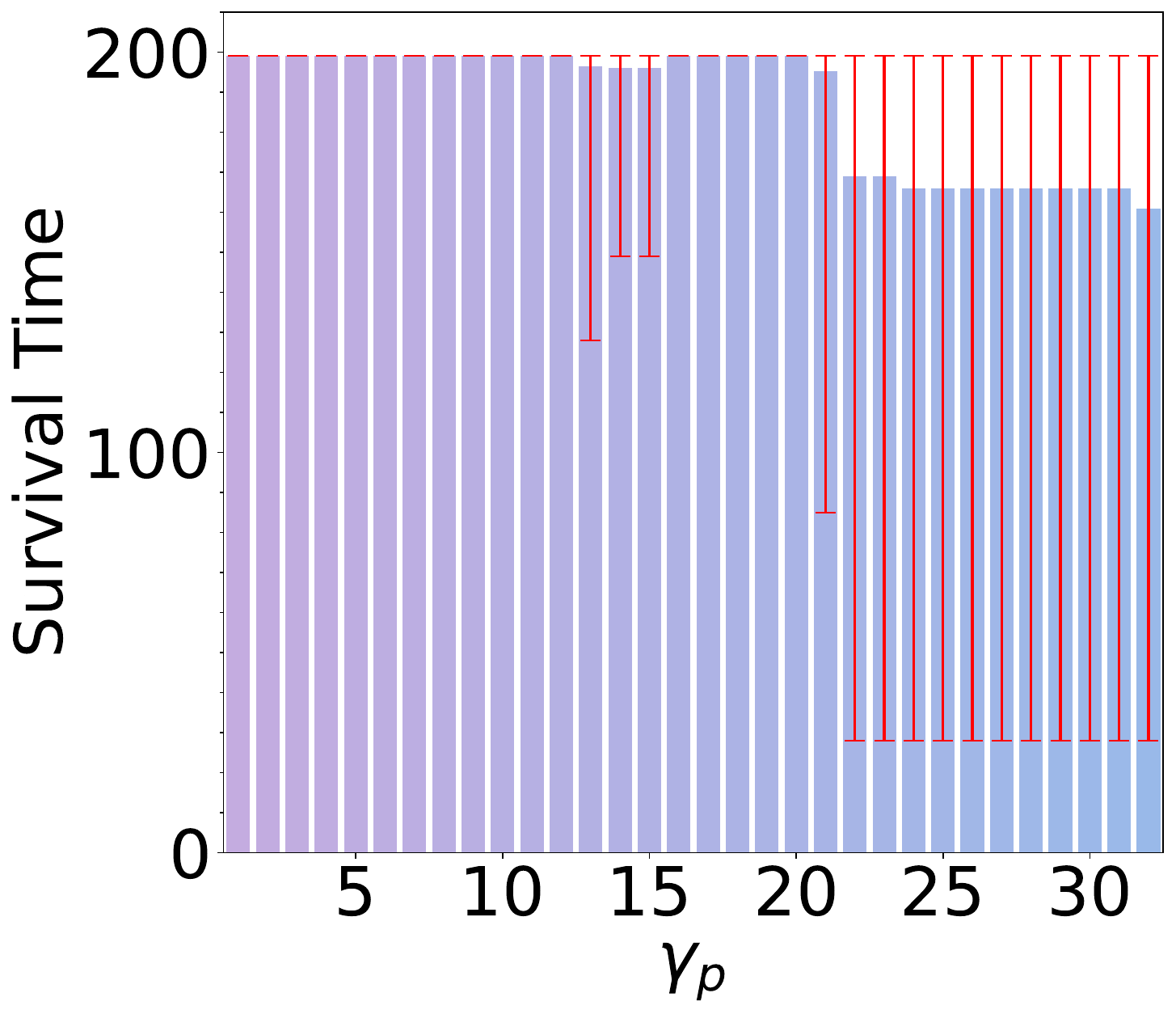}
\label{fig:honest_client_liveness_with_rate_0.4}
}%
\hfill
\centering
\caption{Honest proposers survival time with various $\gamma_p$. A large $\gamma_p$ can also decrease the expected survival time of honest proposers when the malicious rate $\eta$ is large. This is because, in each epoch, the randomly selected proposers will be all slashed when the performance of the aggregated global model does not increase.}

\label{fig:Honest_client_liveness}
\end{figure*}

\begin{figure*}[htbp]
\centering
\hfill
\subfigure[$\eta$ = 0.1.]{
\includegraphics[width=0.483\columnwidth]{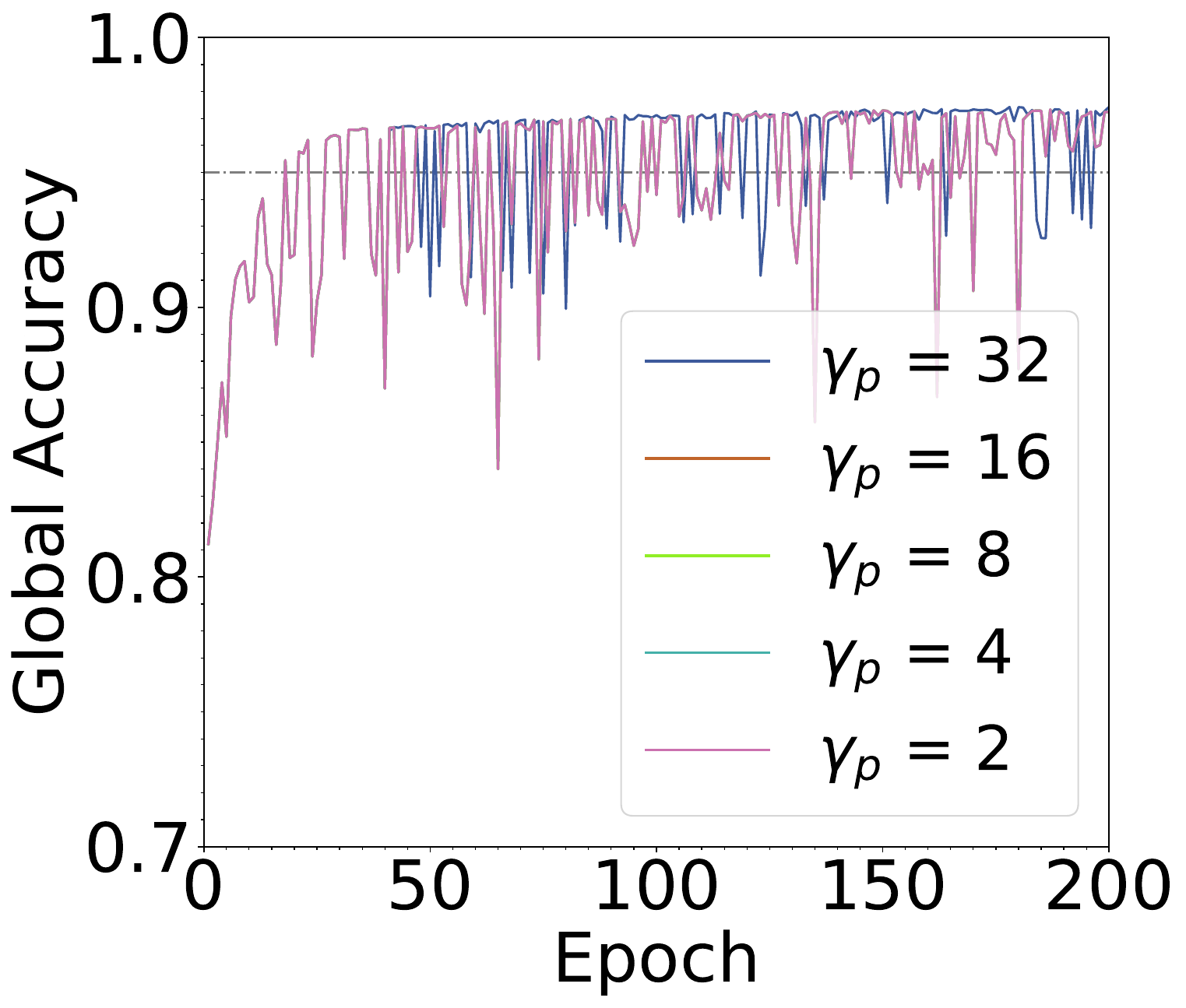}
\label{fig:global_acc_with_rate_0.1}
}%
\hfill
\subfigure[$\eta$ = 0.2.]{
\includegraphics[width=0.483\columnwidth]{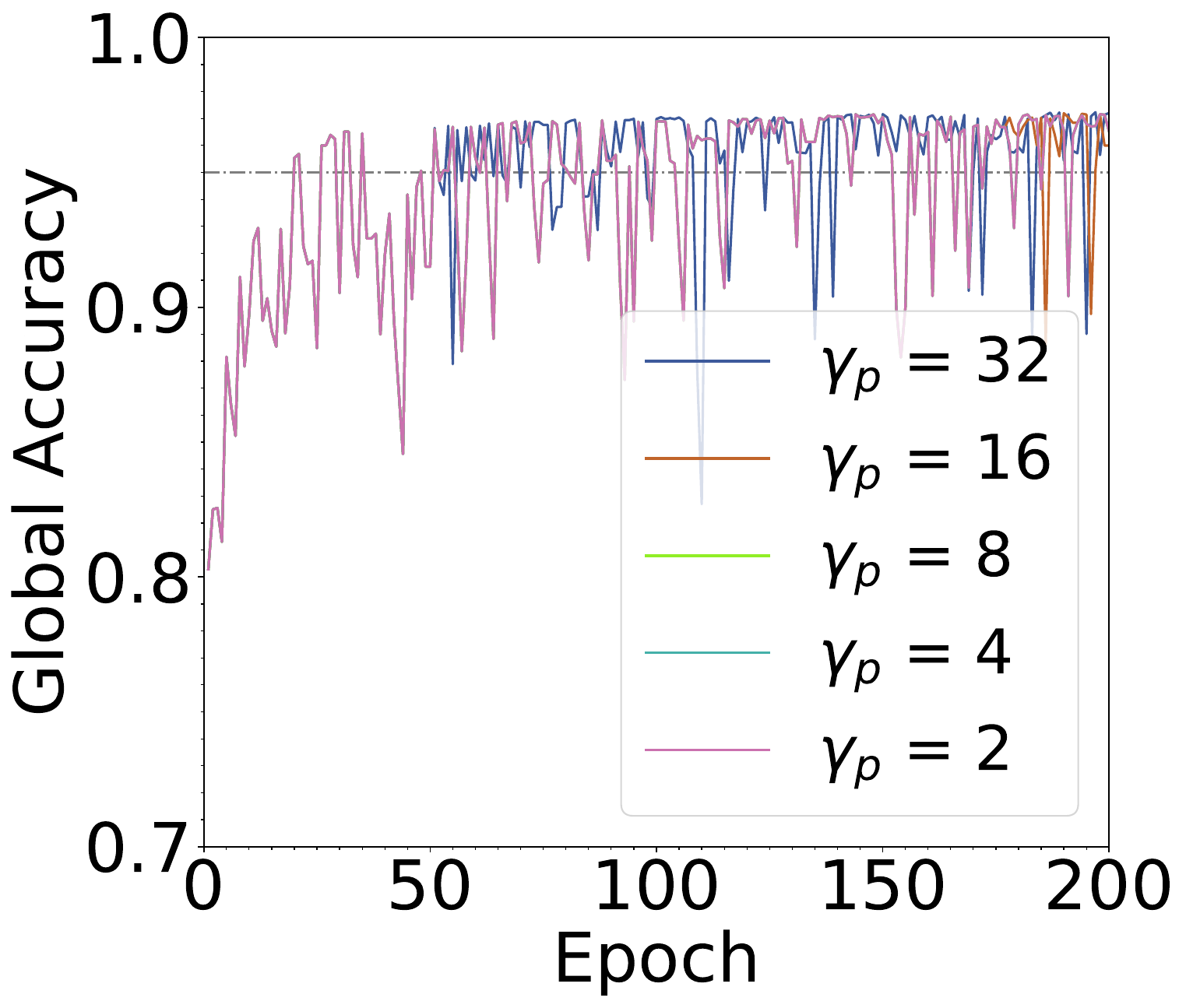}
\label{fig:global_acc_with_rate_0.2}
}%
\hfill
\subfigure[$\eta$ = 0.3.]{
\includegraphics[width=0.483\columnwidth]{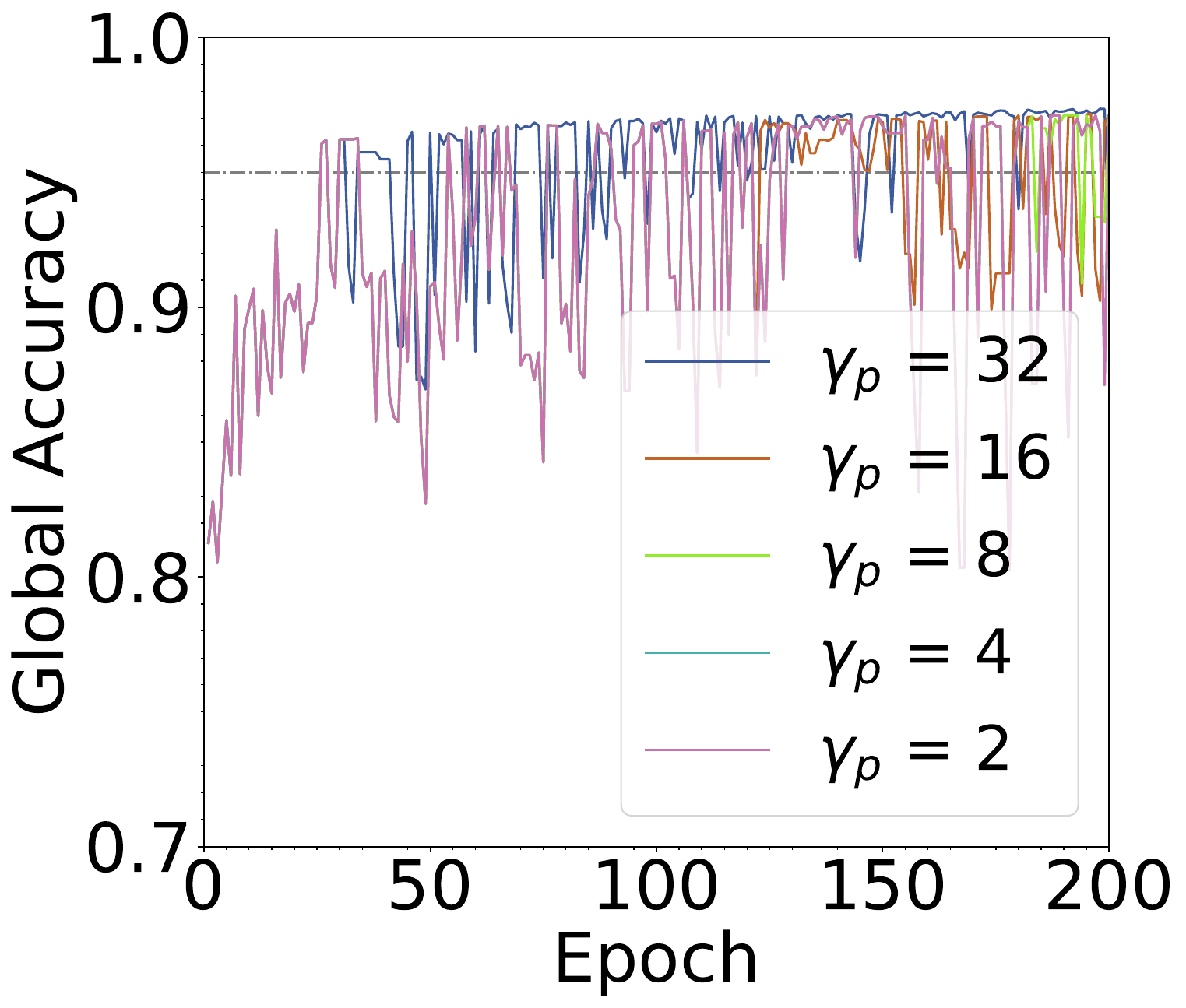}
\label{fig:global_acc_with_rate_0.3}
}%
\hfill
\subfigure[$\eta$ = 0.4.]{
\includegraphics[width=0.483\columnwidth]{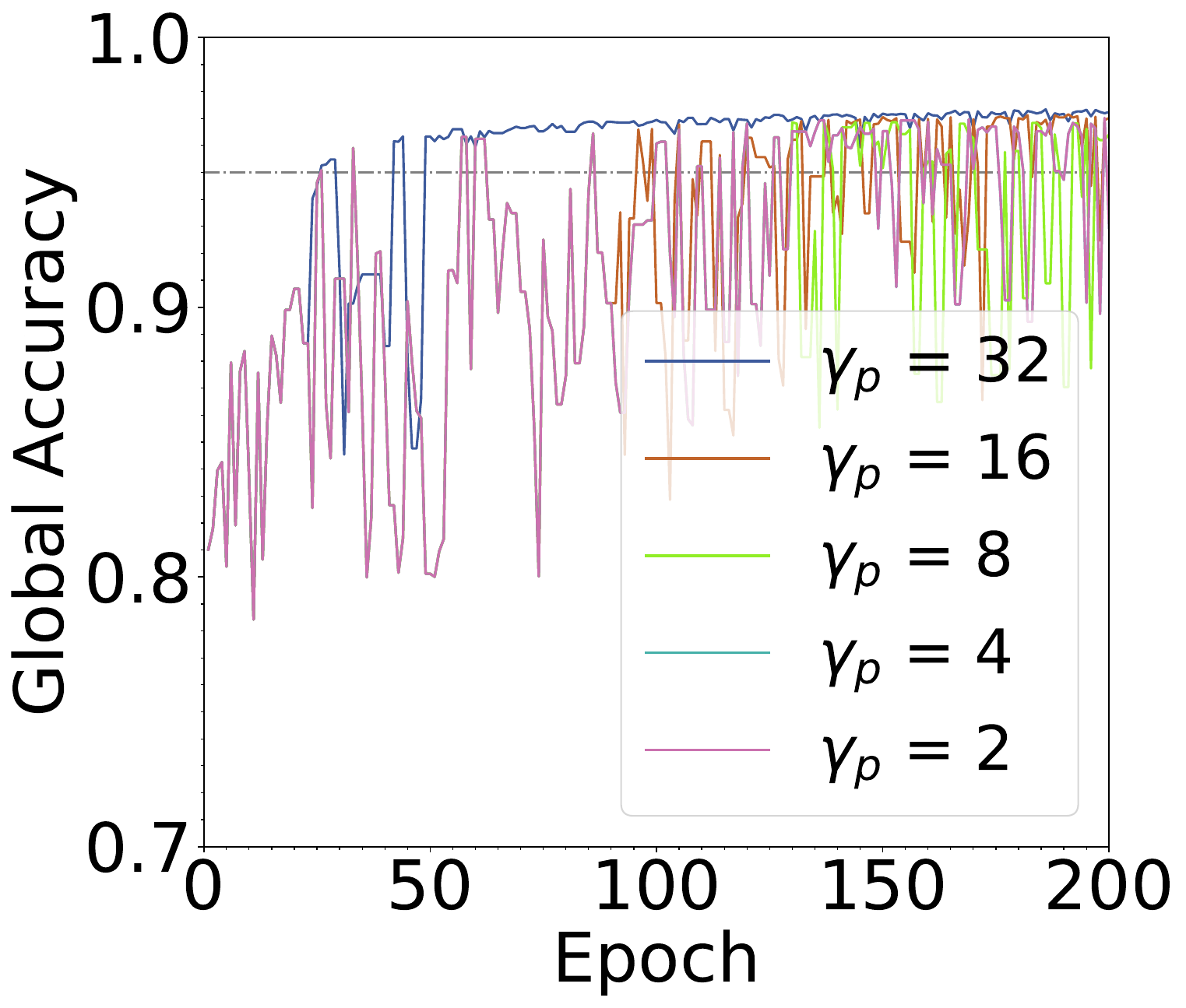}
\label{fig:global_acc_with_rate_0.4}
}%
\hfill
\centering
\caption{Global accuracy results when choosing $\gamma_p = 2, 4, 8, 16,$ and $ 32$. The global accuracy is not sensitive to the value of $\gamma_p$ but the convergence tasks more epochs for larger $\eta$.}
\label{fig:Global_accuracy_sp}
\end{figure*}

\subsubsection{Comparison with Baselines} 
\label{sec:exp:base_result}
Following Theorem~\ref{thm:vote} and Sec.~\ref{sec:exp:thm}, we now are certain that there will be no \textit{de facto} malicious voters. Thus, in the following experiments, we focus on the scenarios where malicious clients only upload harmful weights but make honest votes. We evaluate the proposed framework against the baselines described in Sec.~\ref{sec:exp:base}. We provide the learning curves in Fig.~\ref{fig:training_res} and the accuracy for all four approaches after convergence (the mean accuracy of the last 50 epochs) in Tab.~\ref{tab:training_res}. The performance of FedAVG w/ block is competitive with FedAVG w/o mal (\ie~$\eta = 0$) and consistently outperforms FedAVG w/ mal.  As $\eta$ increases, the performance of FedAVG w/ mal decreases significantly, with a larger standard deviation and increased instability. In contrast, FedAVG w/ block maintains robust performance, with only slightly lower results compared to FedAVG w/o mal.

\subsubsection{Analysis of Token Distributions}  
Fig.~\ref{fig:token_distribution_sp=8} depicts the average tokens remaining in honest and malicious proposers during the FL training process when $\gamma_p = 8$. We observe that honest proposers gradually accumulate more tokens while malicious proposers own fewer tokens as training progresses. Eventually, most malicious proposers lose the eligibility to participate in staking and are removed from the FL system, as their remaining tokens are insufficient. Looking more closely at the case where few malicious proposers are in the system (Fig.~\ref{fig:tokens_with_rate_0.1}), we note that proposals are initially accepted until a revert and slash step is performed around epoch 32 (indicated by the sudden drop in tokens for malicious proposers). As more malicious proposers are in the system (Fig.~\ref{fig:tokens_with_rate_0.4}), revert and slash steps occur more frequently. This is further highlighted by Fig.~\ref{fig:award_slash_round_with_rate_0.1} and~\ref{fig:award_slash_round_with_rate_0.4}, which demonstrates for each epoch if it corresponds to an award or a revert and slash step. Fig.~\ref{fig:award_slash_num_with_rate_0.1} and~\ref{fig:award_slash_num_with_rate_0.4} further provide the cumulative sum over the number of award and slash epochs, illustrating that the majority of the epochs consist of award epochs and that the fraction of revert and slash episodes increases with the rising fraction of malicious proposers. Finally, we depict the average tokens remaining in honest and malicious proposers under various configurations of $\gamma_p$ in Fig.~\ref{fig:token_distribution_malicious} and Fig.~\ref{fig:token_distribution_honest}. We observe the same phenomenon as Fig.~\ref{fig:token_distribution_sp=8}, which aligns with the expectations of our system design and reinforces the effectiveness of our approach.

\subsubsection{Survival Analysis of Clients} As shown in Fig.~\ref{fig:Malicious_client_liveness}, the anticipated survival time of malicious proposers experiences a decrease as $\gamma_p$ increases. This effect can be attributed to the incentive mechanism in place, whereby a higher value of $\gamma_p$ results in a greater penalty for proposers who act maliciously. 
Fig.~\ref{fig:Honest_client_liveness} shows the survival time of honest proposers under different values of $\gamma_p$ and exhibits noteworthy behavior. In cases where the malicious ratio $\eta$ is high, the expected survival time of honest proposers may decrease with a large $\gamma_p$. This is due to the fact that, in each epoch, all randomly selected proposers will be slashed if the performance of the aggregated global model does not show improvement. Therefore, it is worth noting that balancing the token slashing parameter $\gamma_p$ is crucial, because setting an excessively high value can harm honest proposers, whereas a small value can lead to slow convergence (see~Fig.~\ref{fig:Global_accuracy_sp}).

\subsubsection{Sensitivity to Malicious Client Ratio} 
The results presented in Fig.~\ref{fig:training_res} demonstrate the robustness of our proposed method, FedAVG w/ block, against different malicious client ratios, as its performance remains unaffected even under large $\eta$ values. However, it is important to note that the malicious client ratio can impact the token distribution and survival time of clients. Specifically, when there are more malicious clients present in the system, honest clients tend to accumulate more assets on average (\cf~Fig.~\ref{fig:normal_node_tokens_with_rate_0.1} - \ref{fig:normal_node_tokens_with_rate_0.4}). Nevertheless, they also face a higher risk of being slashed during an epoch, which can ultimately shorten their survival time (\cf~Fig.~\ref{fig:honest_client_liveness_with_rate_0.1} - \ref{fig:honest_client_liveness_with_rate_0.4}). 

\subsubsection{Limitations} In this work, as the experimental results aim to evaluate the robustness of the proposed framework, several practical challenges are simplified, \eg~staleness~\cite{dai2019toward}, storage, and privacy~\cite{abadi2016deep}. Further, the proposed method requires more computational power than traditional methods due to mining (blockchain computing) and voting. Finally, large models have gained in popularity in practical applications, \eg~ViT~\cite{dosovitskiy2021image} and GPT-3~\cite{brown2020language}. This raises the question of how to efficiently handle on-chain aggregation for large models. Future work thus will aim to address these limitations to facilitate the research and development of FL with blockchain.

\subsection{Cost Analysis}
In our experiments, we adopt the Kaggle Lending Club dataset\footnote{\url{https://www.kaggle.com/datasets/wordsforthewise/lending-club}} to simulate a realistic financial application scenario. The total client number is $K = 50$. For each client, the generated model update file is with a size of $587$KB. Therefore, the communication cost incurred by a proposer or a voter is $587$KB and the storage cost on the blockchain is $587 \times 50 $KB $= 28.66$MB.

\section{Additional Experiments}
\label{sec:add}
In Sec.~\ref{sec:exp}, we examine the performance of the proposed system on a simple learning task on IID data. In this section, we further evaluate the robustness of the proposed system with a more complex task on non-IID data.

\begin{figure*}[t]
\centering
\hfill
\subfigure[$\eta = 0.1$]{
    \includegraphics[width=0.473\columnwidth]{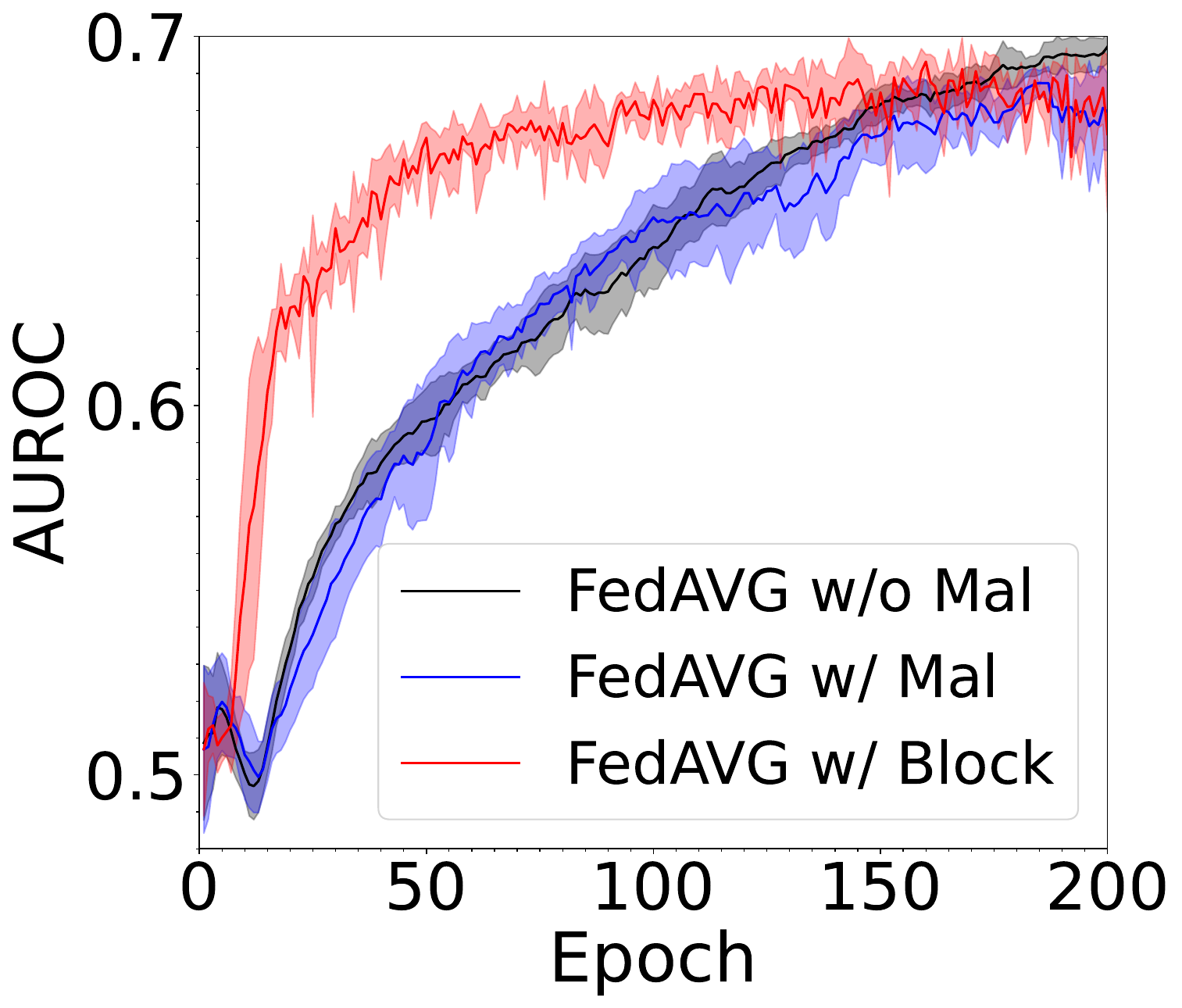}
}
\hfill
\subfigure[$\eta = 0.2$]{
    \includegraphics[width=0.473\columnwidth]{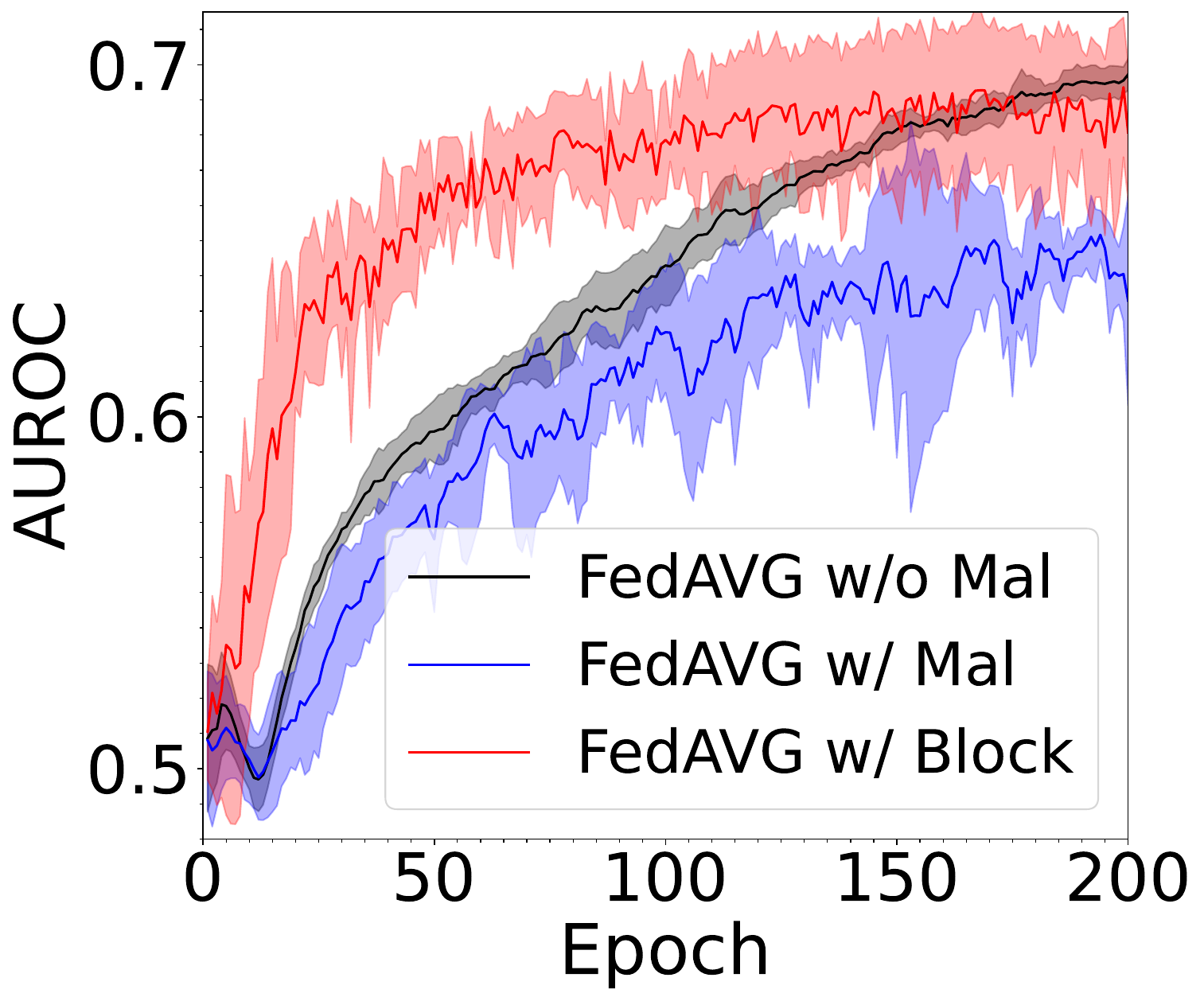}
}
\hfill
\subfigure[$\eta = 0.3$]{
    \includegraphics[width=0.473\columnwidth]{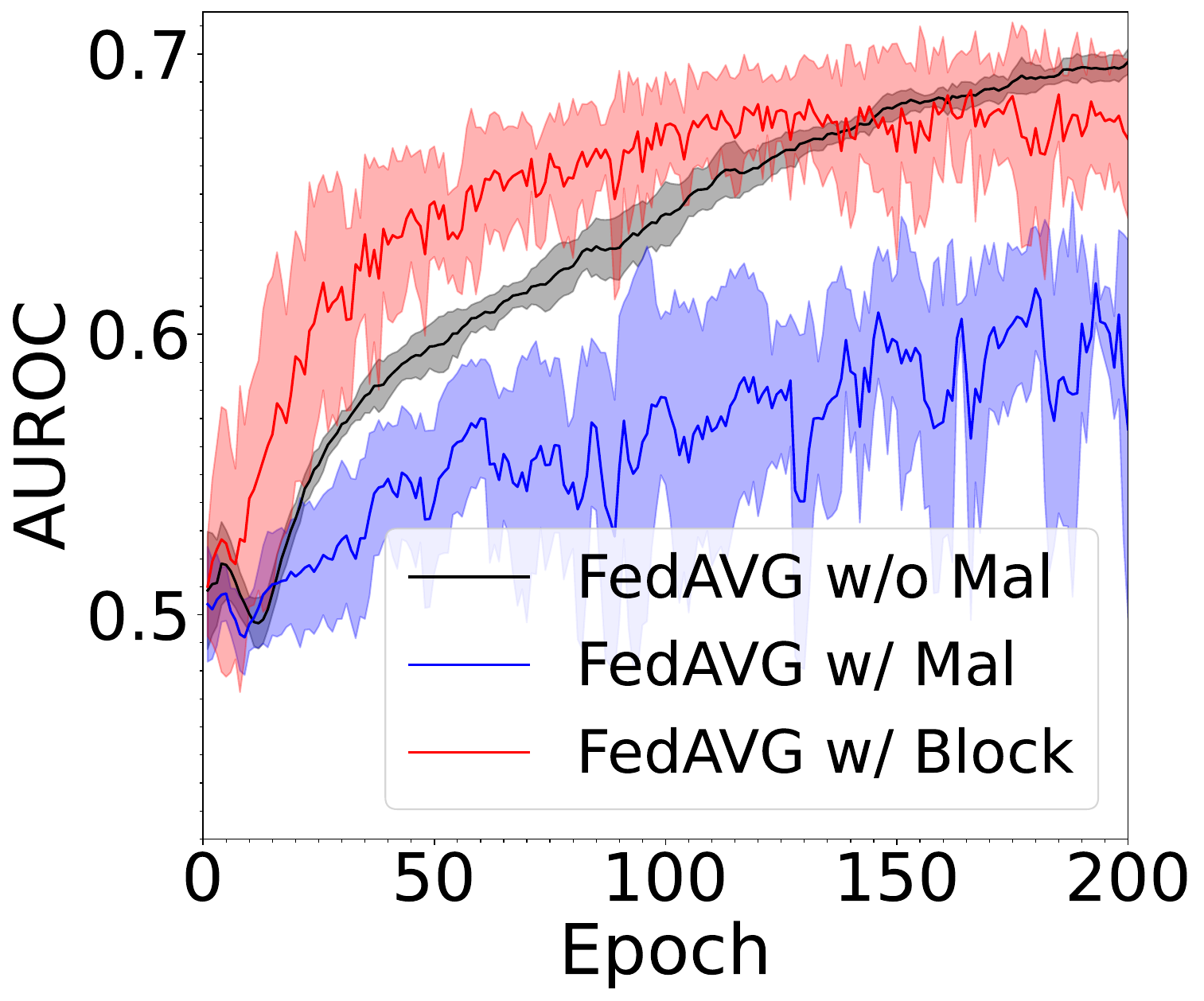}
}
\hfill
\subfigure[$\eta = 0.4$]{
    \includegraphics[width=0.473\columnwidth]{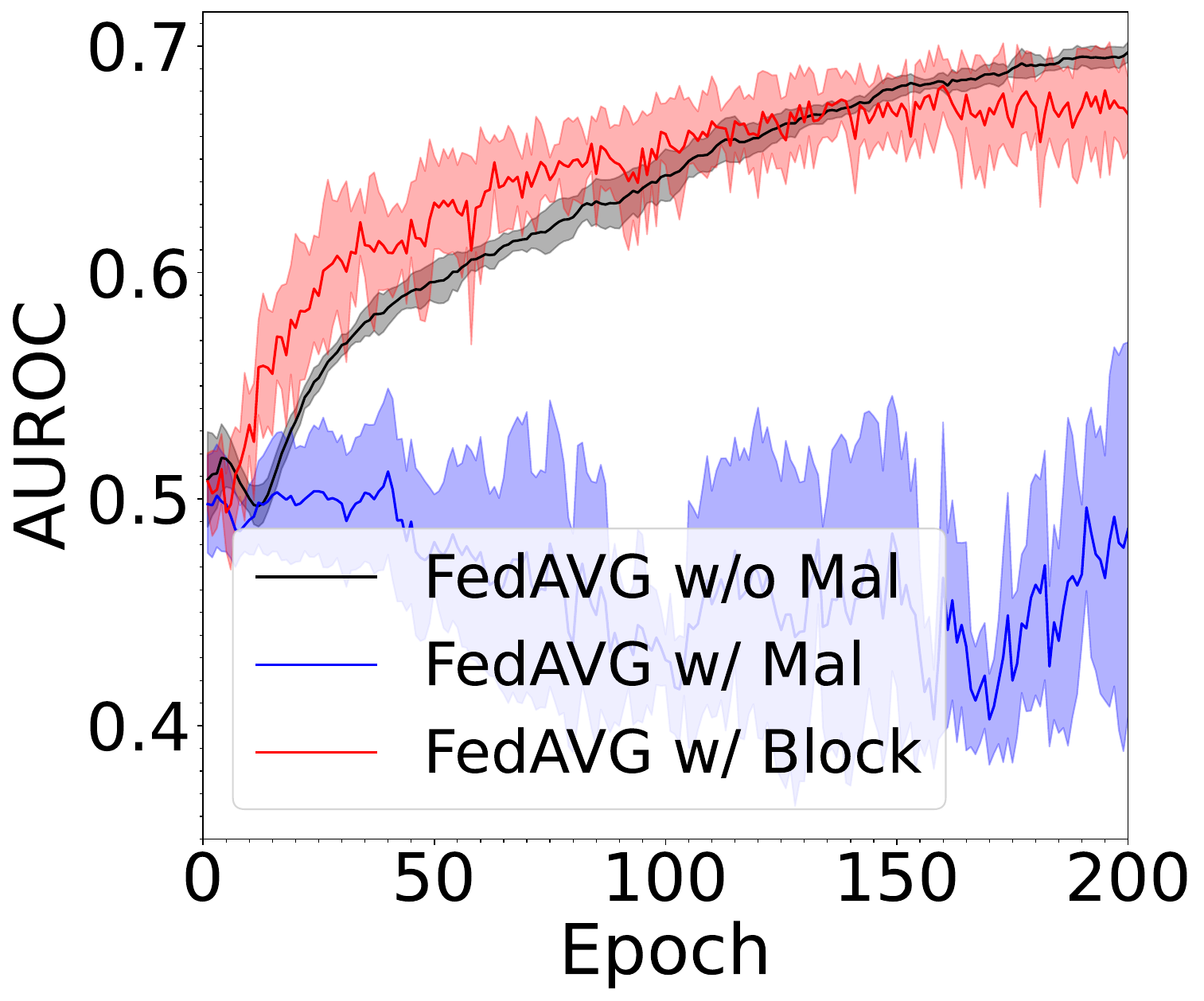}
}
\caption{Federated training under different values of ratio of malicious clients ($\eta$). The solid lines are the mean AUROCs and the shaded regions are 1 standard deviation around the means.}
\label{fig:med:train}
\end{figure*}

\begin{table}[t]
\centering
\caption{Performance comparison under different values of ratio of malicious clients ($\eta$). The reported numbers are mean and standard deviation under 5 random seeds.}
\resizebox{1.0\columnwidth}{!}{
\begin{tabular}{l|c|c|c|c}\hline
    Model & $\eta = 0.1$ & $\eta = 0.2$ & $\eta = 0.3$ & $\eta = 0.4$ \\\hline
    FA w/ M &$0.677\pm0.014$ &$0.640\pm0.012$ &$0.594\pm0.030$ &$0.483\pm0.080$ \\\hline
    FA w/ B &$0.683\pm0.008$ &$0.688\pm0.024$ &$0.676\pm0.023$ &$0.673\pm0.021$ \\\hline\hline
    FA w/o M &$0.695\pm0.004$ &$0.695\pm0.004$ &$0.695\pm0.004$ &$0.695\pm0.004$ \\\hline
    \emph{Oracle} &$0.733\pm0.014$ &$0.733\pm0.014$ &$0.733\pm0.014$ & $0.733\pm0.014$\\\hline
\end{tabular}
}
\label{tab:med:mean}
\end{table}

\subsection{Experimental Setup}
\subsubsection{Data and Task} 
We consider a standard multi-label classification (MLC) task~\cite{rajpurkar2017chexnet} to simulate a realistic clinical application scenario. We use the ChestX-ray14\footnote{\url{https://nihcc.app.box.com/v/ChestXray-NIHCC}} dataset~\cite{wang2017chestx} and leverage the first $6{\times}10^4$ chest X-ray images (CXRs) as our non-IID dataset. We use $80\%$ of the data as the training set and use the rest of the data as the test set. The training set is split into $K$ subsets of equal size and distributed across $K$ clients in a non-IID fashion. Within each client, $20\%$ of local data are randomly selected as the validation set. Because ChestX-ray14 has a long-tailed label distribution, we choose the 10 most common diseases to ensure that each client can contain labels for all diseases of interest.

\subsubsection{Implementation} There are $K = 50$ clients in the system and each client is initialized with 64 tokens. Following~\cite{rajpurkar2017chexnet}, we use DenseNet121~\cite{huang2017densely} as the network backbone. We use a standard Adam~\cite{kingma2015adam} optimizer with fixed learning rate $10^{-3}$ and batch size 256. We process each CXR with instance normalization~\cite{dai2018scan} and no data augmentation is applied. We use the mean area under the receiver operating characteristic curve over the 10 diseases as both the local validation score and evaluation metric. We set $\epsilon = 0.05$ and $\gamma = 32$ based on empirical experience. The rest of the implementation details follow Sec.~\ref{sec:exp:imp}.

\subsubsection{Baselines} We consider the same four baselines as in Sec.~\ref{sec:exp:base}.

\subsection{Results}

We run each baseline with 5 random seeds and report both mean and standard deviation under different values of $\eta$. The training results are visualized in Fig.~\ref{fig:med:train}.

\subsubsection{Performance Comparison}  
FedAVG w/ block shows competitive performance with FedAVG w/o mal (\ie~$\eta = 0$) and outperforms FedAVG w/ mal consistently. In addition to the learning curves in Fig.~\ref{fig:med:train}, we also report the mean AUROCs for all 4 methods after they fully converge in Tab.~\ref{tab:med:mean}. When $\eta$ increases, the performance of FedAVG w/ mal drops significantly and becomes more unstable (\ie~larger standard deviation). FedAVG w/ block remains robust performance and is only slightly lower than FedAVG w/o mal.

\subsubsection{Convergence Analysis} 
It can be shown that FedAVG w/ block converges faster than both FedAVG w/o mal and FedAVG w/ mal under various values of $\eta$. We hypothesize that the proposed global aggregation mechanism can facilitate federated optimization. Intuitively, this can be explained with gradient descent. FedAVG averages gradients optimized for different directions at different clients, which might not be an optimal global gradient. Under malicious attacks, gradients from malicious clients are intentionally optimized away from the optimal direction, which slows down the training process of FedAVG. However, the proposed consensus mechanism mitigates this issue as it only aggregates when consensus is achieved.

\begin{figure}[t]
    \centering
    \includegraphics[width=\columnwidth]{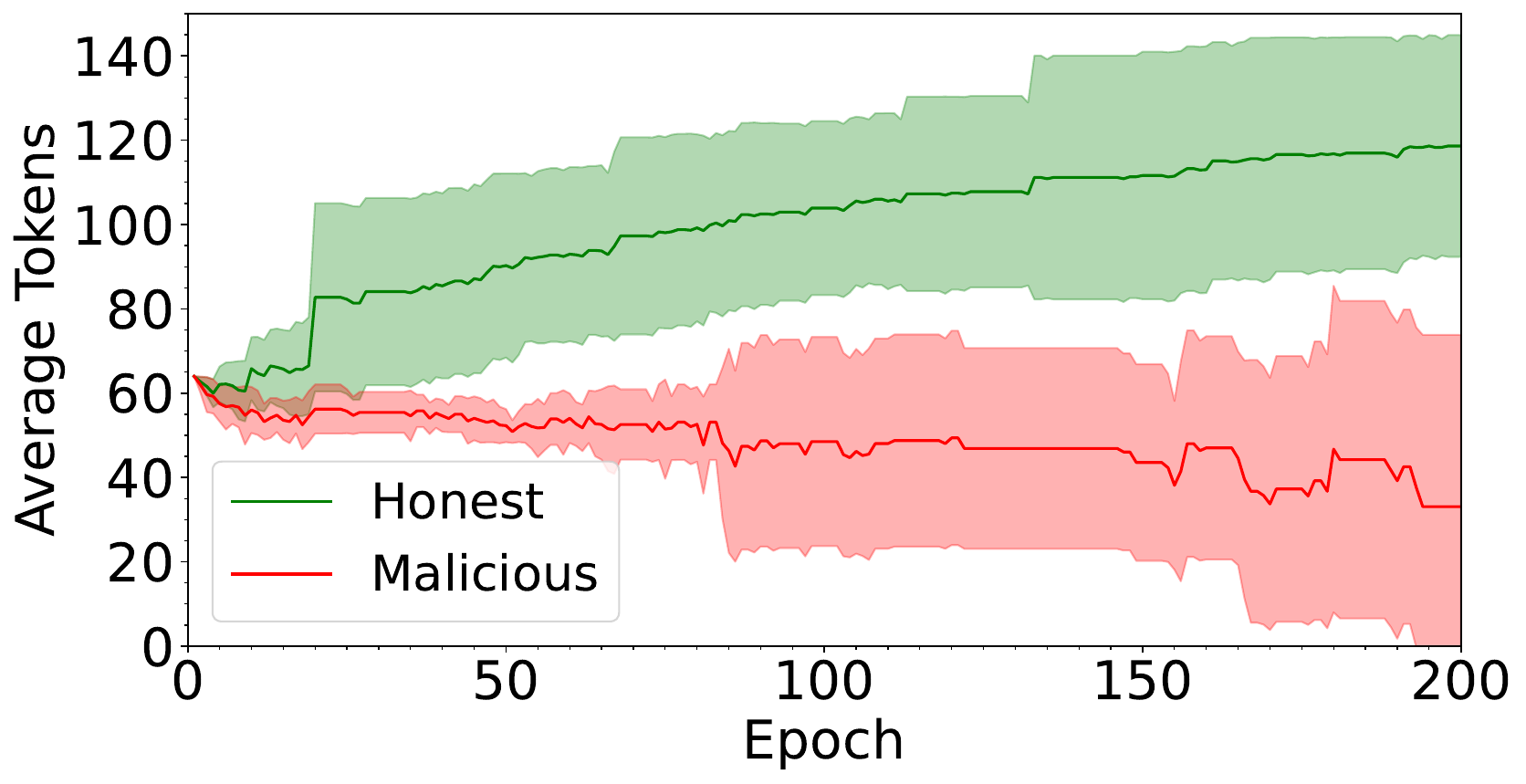}
    \caption{Tokens of honest and malicious clients when $\eta = 0.3$ and $\gamma = 32$.}
    \label{fig:med:token}
\end{figure}

\subsubsection{Token Analysis} 
As we use tokens to filter out malicious clients, we plot the average tokens left in honest and malicious clients (\eg~Fig.~\ref{fig:med:token}). After enough training epochs, honest clients will have more tokens and malicious clients will have fewer tokens. At the end of training, almost all malicious clients do not have enough tokens to stake, \ie~they are removed from the FL system. It is worth mentioning that $\gamma$ has only trivial effect on the learning performance but large $\gamma$ can overkill honest clients and small $\gamma$ can cause slow convergence.

\subsubsection{Impact of Non-IID Data}
Due to the non-IID nature of the medical task, the task setup in this section is more complex than the binary classification task in the previous section.
In contrast to Sec.~\ref{sec:exp}, there are two important findings. First, the proposed method is robust under the non-IID setup. Second, surprisingly, while the task is more difficult, the performance gain between the proposed method and the baselines becomes larger.

\section{Conclusion}
\label{sec:con}
In this work, we explore an under-explored research direction, namely using FL and blockchain to defend against poisoning attacks. The defense mechanism is twofold. We use on-chain smart contracts to replace the traditional central server and propose a stake-based majority voting mechanism to detect client-side malicious behaviors. We not only provide a solution to the problem of interest, but also show the robustness of the proposed method and provide the first empirical understanding of the problem. Last but not least, the results of this work suggest that the integration of FL and blockchain is an emerging solution to trustworthy ML. We believe that blockchain can not only play an important role in decentralization and the incentivization of participants for real-world FL applications in fields such as finance and medicine, but also can be leveraged to defend against poisoning attacks.

 \section*{Acknowledgment}
The authors would like to thank Shuoying Zhang from FLock.io and Shuhao Zheng from the School of Computer Science, McGill University for the helpful discussion. The authors would also like to thank Yizhe Wen and Xinyang Wang from FLock.io for the experimental simulation.

\bibliographystyle{IEEEtran}
\bibliography{ref}

\end{document}